\documentclass{article}

\usepackage{microtype}
\usepackage{graphicx}
\usepackage{graphicx}
\graphicspath{{figures/}}
\usepackage{subcaption}
\usepackage{booktabs} 

\usepackage{hyperref}

\usepackage[preprint]{icml2026}

\renewcommand{\today}{}

\usepackage{amsmath}
\usepackage{amssymb}
\usepackage{mathtools}
\usepackage{amsthm}
\usepackage{bm}

\usepackage[capitalize,noabbrev]{cleveref}

\theoremstyle{plain}
\newtheorem{theorem}{Theorem}[section]

\newtheorem{lemma}[theorem]{Lemma}

\theoremstyle{definition}

\newtheorem{assumption}[theorem]{Assumption}
\theoremstyle{remark}
\newtheorem{remark}[theorem]{Remark}

 \usepackage[disable,textsize=tiny]{todonotes}

\icmltitlerunning{How Label Imbalance Shapes Geometry: A General Spectral Analysis of Multi-Label Neural Collapse}

\begin{document}

\twocolumn[
  \icmltitle{How Label Imbalance Shapes Geometry:\\ A General Spectral Analysis of Multi-Label Neural Collapse}







  \begin{icmlauthorlist}
    \icmlauthor{Xiaoxuan Ma}{yyy}
    \icmlauthor{Yixuan Yang}{yyy}
    \icmlauthor{Song Li}{yyy}
    \icmlauthor{Xiangyun Hui}{yyy}

  \end{icmlauthorlist}

  \icmlaffiliation{yyy}{Department of Mathematics, Zhejiang University, Hangzhou, China}

  \icmlcorrespondingauthor{Song Li}{songli@zju.edu.cn}

  \icmlkeywords{Machine Learning, ICML}

  \vskip 0.3in
]



\printAffiliationsAndNotice{}  


\begin{abstract}
This work investigates the phenomenon of Neural Collapse (NC) in multi-label classification, extending its conceptual framework from multi-class learning to general correlated and imbalanced multi-label settings.
Although recent studies have identified a ``tag-wise averaging'' structure for multi-label features, this view relies on implicit assumptions of label balance and combinatorial symmetry. 
Consequently, it fails to account for the geometrical distortions caused by intrinsic label correlations and data imbalance, which are common in practice. 
We resolve the multiplicity-one imbalance conjecture raised by \citet{li2023neural}, showing that higher-multiplicity prototypes obey a class-frequency-weighted synthesis rule rather than uniform averaging.
To address this, we propose a rigorous spectral-control framework to analyze the terminal phase of multi-label learning under general imbalanced conditions. 
We introduce the label covariance spectrum $\kappa_m$, a scalar controlling the distribution-dependent lower-bound geometry, derived from the second-order moment matrix of the label distribution. 
Contrary to the averaging perspective, our analysis reveals that the centered label covariance spectrum controls the stability of terminal geometry by quantifying the weakest centered inter-class contrast directions. 
We prove that the classical Tag-wise Averaging emerges only as a special case under perfect orthogonality. 
Numerical experiments on synthetic distributions validate our theoretical bounds. 
This work resolves the scaled-average aspect of the imbalance conjecture and establishes a unifying theoretical framework that extends Neural Collapse to complex, imbalanced multi-label settings.
\end{abstract}

\section{Introduction}

Deep neural networks have achieved remarkable success in classification, thanks to both strong predictive performance and structured, interpretable representations \cite{LeCun2015, Bengio2013}. A key breakthrough is the Neural Collapse (NC) phenomenon \cite{Papyan2020, Han2022}. In the terminal phase of training over-parameterized networks on balanced multi-class data, last-layer features and classifiers converge to a symmetric, optimal geometry. 
Concretely, within-class features collapse to their means (NC1), class means form a maximally separated Simplex Equiangular Tight Frame (Simplex ETF) (NC2), classifiers become dual to the class means (NC3), and decisions reduce to nearest class-center classification (NC4).
 This structure is empirically pervasive and, under simplified models such as the Unconstrained Features Model (UFM), is theoretically characterized as a global optimum of common losses including cross-entropy (CE) and mean squared error (MSE) \cite{Zhu2021, Zhou2022a, Mixon2022}. Since then, NC has become an essential element in analyzing feature learning, influencing areas from transfer learning to robustness \cite{Galanti2022b, Li2022, Papyan2020, Ji2022}.

Recent research has taken two directions to generalize NC beyond its original idealized setting. The first direction extends NC to  multi-label classification, where an instance can belong to multiple classes. A recent work \cite{li2023neural} established that a generalized NC emerges under the  ``pick-all-labels'' (PAL) formulation. Crucially, it identified a ``tag-wise average'' structure: while single-label features form a Simplex ETF, the mean of a multi-label feature (e.g., for a sample with tags A and B) is the scaled average of the corresponding single-label class means (i.e., the means of class A and class B). This provides the first geometric blueprint for multi-label representation learning.

The second direction investigates NC under class imbalance, a universal challenge in real-world data. Theoretical studies \cite{Fang2021, Thrampoulidis2022, Dang2023} have shown that in imbalanced multi-class settings, the perfect ETF geometry is distorted. Class means no longer share equal norms or angles. Instead, their lengths become dependent on class frequencies, and classifier norms scale accordingly. In extreme imbalance, Minority Collapse can occur, where minority-class features and classifiers become indistinguishable \cite{Fang2021}. Concurrent analyses of the cross-entropy (CE) loss under the unconstrained feature model (UFM) further characterize the resulting block-orthogonal structures, highlighting the sensitivity of NC geometry to the underlying data distribution \cite{Hong2023, Behnia2023}.

Despite these advances, a critical gap remains at their intersection. The existing multi-label NC theory \cite{li2023neural} relies on the implicit assumptions of label balance and combinatorial symmetry---effectively treating multi-label combinations as independent and uniformly distributed. This overlooks the pervasive \textit{intrinsic label correlations} (e.g., ``sky'' and ``cloud'' often co-occur) and \textit{data imbalance} across different label combinations, which are fundamental characteristics of real-world multi-label datasets \cite{Menon2019, Liu2021}. Consequently, the elegant ``tag-wise average'' principle, derived under idealized independence, is insufficient to describe the geometric structures that form under general, correlated, and imbalanced label distributions. In their ICML 2024 paper \emph{Neural Collapse in Multi-label Learning with Pick-all-label Loss} \cite{li2023neural}, Li et al.\ explicitly highlighted the challenging regime of multiplicity-one imbalance: ``We suspect a more general minority collapse phenomenon would happen'', and further ``we conjecture that the scaled average property will still hold between higher multiplicity features and their multiplicity-1 features''.

The conjecture raised by \citet{li2023neural} is to understand the terminal-phase geometry of multi-label learning under data imbalance, in particular when the Multiplicity-1 training data are imbalanced. They conjectured that, despite potential ``minority collapse'' phenomena in this regime, a scaled average relationship may still persist between higher-multiplicity features and their multiplicity-1 counterparts (see the discussion on ``Dealing with data imbalanced-ness'' in \citet{li2023neural}). 

One of our main contributions is to provide a rigorous resolution of the scaled-average aspect of this conjecture. Moving beyond the idealized assumptions of label balance and combinatorial symmetry, we establish a unified spectral-control framework for analyzing multi-label neural collapse under general label correlation and imbalance. Our theory shows that the terminal-phase structure is not governed by naive tag-wise averaging; instead, it is controlled by the centered label covariance spectrum: the spectral scalar $\kappa_m$ quantifies the weakest centered inter-class contrast directions and modulates the distribution-dependent lower-bound geometry. 

As a consequence, the previously observed ``scaled average'' principle emerges as a verifiable special case of our theory, holding only under specific (near-orthogonality / symmetry) spectral conditions. Under general multiplicity-1 imbalance and correlated label distributions, the geometric structure can deviate systematically from simple averaging, yielding count-dependent distortions consistent with imbalance-induced phenomena observed in practice. In this sense, our results turn the scaled-average conjecture of \citet{li2023neural} into a provable and testable theory with explicit spectral and counting controls.

\textbf{Our Contributions.} This work fills this gap by introducing a novel spectral-control framework for multi-label neural collapse under arbitrary label correlation and imbalance. We move beyond the averaging perspective by identifying the spectral and counting quantities that control the lower-bound geometry and the terminal structural constraints. Our contributions are as follows:

\begin{itemize}
    \item \textbf{A Unifying Spectral Framework.} We propose that the global optimization landscape for multi-label learning is controlled by the \textbf{centered label covariance spectrum}. We introduce a key scalar metric, the label covariance spectrum $\kappa_m$, derived from the second-moment matrix of the label distribution. This metric jointly quantifies the effects of label correlation and imbalance through the weakest centered inter-class contrast directions.
    
    \item \textbf{Spectral-Control Principle.} We theoretically prove that the terminal geometry is not explained by merely averaging single-label prototypes. Instead, the centered label covariance operator and its spectral constant $\kappa_m$ enter the lower-bound analysis and control the stability of the geometric structure. In particular, small $\kappa_m$ indicates a weak or nearly invisible centered contrast direction, making the corresponding spectral guarantees ill-conditioned.
    
    \item \textbf{Resolution of the Scaled-Average Conjecture.} Our results resolve the scaled-average aspect of the multiplicity-one imbalance conjecture raised by \citet{li2023neural}.
    Our analysis demonstrates that the previously observed ``tag-wise average'' \cite{li2023neural} is a special case of our spectral-control framework, occurring only under appropriate orthogonality and symmetry conditions. Under general correlated and imbalanced settings, the structure deviates from uniform averaging: higher-multiplicity prototypes are generated from multiplicity-one directions through explicit count-dependent weights.
    
    \item \textbf{Empirical Validation.} We conduct comprehensive numerical experiments on synthetic data with controlled label correlation and imbalance. The results validate our theoretical predictions, supporting the count-dependent terminal structure and the tightness of the bounds governed by $\kappa_m$.
\end{itemize}

In summary, we develop a unifying spectral-control framework for multi-label Neural Collapse under correlation and imbalance. 
We show that the lower-bound geometry is governed by the centered label covariance spectrum. 
We introduce the scalar $\kappa_m$, which quantifies the weakest centered inter-class contrast direction and captures the effects of correlation and imbalance.
We prove that terminal multi-label geometry is not governed by uniform tag-wise averaging in general. Instead, higher-multiplicity prototypes are generated from multiplicity-one directions through explicit count-dependent weights. 
The classical ``tag-wise average'' structure appears only under suitable label orthogonality and symmetry conditions, and is therefore a special case \cite{li2023neural}.
We also resolve the scaled-average aspect of the multiplicity-one imbalance conjecture raised by \citet{li2023neural}. 
Finally, synthetic experiments support the predicted count-dependent distortions and the tightness of the $\kappa_m$-governed bounds.

\section{Related Works}

\paragraph{Neural Collapse on balanced datasets.}
A substantial theory literature explains Neural Collapse (NC) in balanced multi-class settings via simplified last-layer models,
most notably the unconstrained feature model (UFM) and layer-peeled viewpoints
(e.g., \citet{Papyan2020,LuSteinerberger2020NeuralCollapseCE,Zhu2021,Mixon2022,Zhou2022a,Han2022,Ji2022}).
Representative works include \citet{Papyan2020,Zhu2021} and the MSE-landscape line of \citet{Zhou2022a},
as well as further perspectives and refinements (e.g., \citet{Zhou2022b}).
Related efforts toward more network-faithful analyses and optimization/regularization viewpoints include, e.g.,
\citet{tirer2022extended}, \citet{sukenik2023deep}, \citet{tirer2023perturbation}, \citet{RangamaniBanburskiFahey2022WeightDecay}, \citet{PilanciErgen2020ConvexRegularizers}, \citet{Dang2023}.
We also refer to a recent viewpoint in \citet{KothapalliTirerBruna2023GNN}.

\paragraph{Neural Collapse on imbalanced datasets.}
Under class imbalance, the canonical simplex-ETF geometry can be distorted: NC1 often persists, whereas NC2/NC3 may fail,
and in extreme regimes one may observe Minority Collapse \citep{Fang2021}.
Beyond early layer-peeled observations, several analyses characterize imbalance-dependent geometries under different objectives,
including UFM--SVM and SELI-type structures \citep{Thrampoulidis2022,Behnia2023},
block-structured prediction vectors under CE \citep{Hong2023},
and closed-form geometries and minority-collapse thresholds under ReLU-motivated nonnegative-feature (``UFM+'') settings \citep{dang2024neural}.
Related contrasts include fixed-ETF classifier studies \citep{yang2022inducingneuralcollapseimbalanced}, as well as connections to long-tailed recognition/imbalance practice
(e.g., \citet{Kang2020Decoupling,Cao2019LDAM,Liu2021}).

\paragraph{Related works on multi-label learning.}
Multi-label learning predicts a set of tags. Common training paradigms include reducible formulations such as
binary relevance / one-vs-all (BR/OvA) and pick-all-label (PAL), as well as non-decomposable approaches \citep{dembczynski2012label}
(see also \citet{Menon2019}).
On the theory side, a range of surrogate-consistency and learning-theoretic tools have been developed for multi-label regimes,
including early consistency results \citep{pmlr-v19-gao11a}, reductions \citep{Menon2019},
Fenchel--Young losses \citep{blondel2020learningfenchelyounglosses}, sample-compression schemes \citep{pmlr-v35-samei14,2014Generalizing},
(local) Rademacher complexity \citep{xu2014localrademachercomplexitymultilabel,reeve2020optimisticboundsmultioutputprediction},
and Bayes-optimal prediction rules \citep{2010Bayes}.
On the practical side, modern deep architectures are widely adapted to multi-label and extreme multi-label tasks
(e.g., \citet{chang2020tamingpretrainedtransformersextreme,ridnik2021mldecoderscalableversatileclassification}),
where combination-level imbalance becomes more severe as higher-multiplicity labels are scarce.

\paragraph{Multi-label Neural Collapse under PAL.}
\citet{li2023neural} provides the first geometric blueprint of multi-label NC under PAL:
multiplicity-one (single-label) features retain simplex-ETF geometry, whereas higher-multiplicity prototypes obey a structured
tag-wise aggregation law (a multi-label ETF).
Their appendix further discusses challenging regimes in which multiplicity-one data are imbalanced, potentially leading to more general
minority-collapse phenomena, and connects these regimes to diversity-promoting principles such as MCR$^2$
\citep{yu2020learningdiversediscriminativerepresentations} and ``ReduNet'' \citep{chan2021redunetwhiteboxdeepnetwork}.



\section{Problem Setting}
\subsection{Basic Notation}
\label{sec:basic-notation}

\paragraph{Data and multiplicity groups.}
Let $K\in\mathbb{N}$ be the number of classes and denote $[K]=\{1,\dots,K\}$.
We consider multi-label samples grouped by their label-set cardinality (multiplicity)
$m\in\{1,\dots,M\}$, where we assume:
\(
    1 \le m \le M \le K - 1.
\)
For each $m$, define the family of label sets
\[
\mathcal{S}_m=\{S\subset [K]: |S|=m\}.
\]
Let \(r_{m,S}\in\mathbb{N}\) be the number of training samples whose label set equals
\(S\in\mathcal{S}_m\), \(
\text{for } m=1,\ \text{write } r_{1,k} := r_{1,\{k\}}.
\)
Define
\[
\begin{aligned}
N_m &:= \sum_{S\in\mathcal{S}_m} r_{m,S},
\qquad
N := \sum_{m=1}^M N_m, \\
p_{m,S} &:= \frac{r_{m,S}}{N_m},
\qquad (S\in\mathcal{S}_m).
\end{aligned}
\]

For each label set $S$, let $y_S=\mathbf{1}_S\in\mathbb{R}^K$ be its indicator vector.
For each class $k\in[K]$, define the (group-wise) class count
\[
N_m^k=\sum_{\substack{S\in\mathcal{S}_m\\ k\in S}} r_{m,S}.
\]
We assume that \( N_m^k > 0 \) for all \( k \in [K] \) and all \( m \), meaning that each label \( k \in [K] \) appears at least once in each multiplicity \( m \) group; otherwise, labels with zero counts in a given group can be excluded or treated separately.

\paragraph{Model, logits, empirical risk (no bias).}
Let
\(
W=
\begin{bmatrix}
w_1^\top\\
\vdots\\
w_K^\top
\end{bmatrix}
\in\mathbb{R}^{K\times d},
\)
\(
h_{m,S,i}\in\mathbb{R}^d,
\)
and define the logits
\(
z_{m,S,i}=Wh_{m,S,i}\in\mathbb{R}^K.
\)
Given a pointwise cross-entropy
(CE) loss $\ell(z,y_S)$, the averaged empirical risk is
\[
g(WH)=\frac{1}{N}\sum_{m=1}^M\ \sum_{S\in\mathcal{S}_m}\ \sum_{i=1}^{r_{m,S}}
\ell\!\left(Wh_{m,S,i},\,y_S\right).
\]
Throughout, we take the pointwise loss $\ell(\cdot,\cdot)$ to be the pick-all-labels (PAL) loss, i.e.,
\(
\ell(z,y_S)\equiv \mathcal{L}_{\mathrm{PAL}}(z,y_S).
\)

We recall the standard softmax cross-entropy loss. For logits $z\in\mathbb{R}^K$, define
\(
\mathrm{softmax}(z)_k=\frac{e^{z_k}}{\sum_{j=1}^K e^{z_j}},\qquad k\in[K].
\)
For a label vector $y\in\mathbb{R}^K$ that is a probability distribution (e.g., a one-hot label), the cross-entropy loss is
\[
\ell(z,y)=-\sum_{k=1}^K y_k\log\bigl(\mathrm{softmax}(z)_k\bigr).
\]

We study the regularized objective (without bias)
\begin{equation}
\label{eq:objective}
f(W,H)
=
g(WH)
+\frac{\lambda_W}{2}\|W\|_F^2
+\frac{\lambda_H}{2}\|H\|_F^2,
\end{equation}
\[
\lambda_W,\lambda_H>0.
\]
where
\[
\|H\|_F^2
=
\sum_{m=1}^M \sum_{S\in\mathcal{S}_m} \sum_{i=1}^{r_{m,S}}
\|h_{m,S,i}\|_2^2.
\]
\paragraph{Projection and spectral constant.}
Let $\mathbf{1}\in\mathbb{R}^K$ be the all-ones vector and define the centering projection
\[
\Pi := I - \frac{1}{K}\mathbf{1}\mathbf{1}^\top,
\qquad
\mathrm{range}(\Pi)=\{x\in\mathbb{R}^K:\mathbf{1}^\top x=0\}.
\]
For each $m$, define the (population) covariance matrix of label indicators under $p_m$ by
\[
G_m := \mathbb{E}_{S\sim p_m}\bigl[y_S y_S^\top\bigr].
\]
We define the spectral constant on the centered subspace by
\[
\kappa_m := \lambda_{\min}\Bigl( (\Pi G_m \Pi)\big|_{\mathrm{range}(\Pi)} \Bigr).
\]

\subsection{Assumptions}
\begin{assumption}[Non-degeneracy on the centered subspace]\label{assump:nondeg-centered}
We assume that $\kappa_m>0$ for every $m$.
\end{assumption}

\begin{remark}

It is standard in spectral analyses to work on the centered subspace $\mathrm{range}(\Pi)$ since the $\mathbf{1}$-direction
is intrinsically degenerate after centering. The above condition does not require all label combinations to appear;
it only rules out degenerate co-occurrence structures that permanently split classes into isolated components or make certain
inter-class contrast directions statistically invisible.
Mathematically, $\kappa_m>0$ is equivalent to the uniform quadratic-form lower bound: for all $x\in\mathrm{range}(\Pi)$,
\(
x^\top \Pi G_m\Pi x \ge \kappa_m\|x\|_2^2,
\)
and equivalently to the operator inequality (interpreted on $\mathrm{range}(\Pi)$),
\(
\Pi G_m\Pi \succeq \kappa_m \Pi 
\)
on 
\(
\mathrm{range}(\Pi).
\)
Hence $\kappa_m$ admits the Rayleigh-quotient characterization
\(
\kappa_m=\min_{\substack{x\in\mathrm{range}(\Pi)\\ x\neq 0}}
\frac{x^\top \Pi G_m\Pi x}{\|x\|_2^2},
\)
so $\kappa_m$ quantifies the \emph{worst-case} amount of variance along centered inter-class contrast directions.

\noindent\textbf{Case (i): $\kappa_m$ admits a spectral gap.}

If $\kappa_m$ is bounded away from zero, then for every $x\in\mathrm{range}(\Pi)$,
\(
x^\top \Pi G_m\Pi x \ge \kappa_m\|x\|_2^2,
\)
meaning that \emph{every} centered inter-class direction carries nontrivial second-order fluctuation.
Intuitively, the label co-occurrence statistics are sufficiently rich after centering: no meaningful inter-class contrast
is systematically suppressed.

\noindent\textbf{Case (ii): $\kappa_m$ is small (near-degeneracy).}

If $\kappa_m$ is small (or close to $0$), then there exists a nonzero $x\in\mathrm{range}(\Pi)$ such that
\(
x^\top \Pi G_m\Pi x \approx 0,
\)
indicating an (almost) unidentifiable contrast direction with nearly vanishing fluctuation.
In this regime, spectral bounds involving $1/\kappa_m$ become ill-conditioned, reflecting that the data provide
very limited information along certain centered inter-class directions.

\noindent\textbf{Case (iii): $\kappa_m=0$ (degenerate case).}

The case $\kappa_m=0$ means there exists a nonzero $x\in\mathrm{range}(\Pi)$ with $\Pi G_m\Pi x=0$,
i.e., a truly invisible contrast direction.
A canonical example is when classes are permanently split into two disconnected blocks with no cross-block co-occurrence.
For instance, with $K=4$, suppose that (within multiplicity $m$) label sets only occur within $\{1,2\}$ or within $\{3,4\}$,
but never across the two blocks. Then the block-contrast direction
\(
x=(1,1,-1,-1)^\top,
\)
\(
\mathbf{1}^\top x=0,
\)
belongs to $\mathrm{range}(\Pi)$ and can exhibit (near) zero fluctuation, leading to $\kappa_m=0$ (or $\kappa_m$ very small).
\end{remark}

\begin{remark}
The non-degeneracy condition \( \kappa_m>0 \) is strictly stronger than the basic coverage requirement \( N_m^k>0 \). While \( N_m^k>0 \) ensures that first-order statistics (means) are well-defined, the condition \( \kappa_m>0 \) ensures that second-order co-occurrence structures are not degenerate, preserving the geometry of the label simplex. Specifically, \(N_m^k>0\) guarantees that every label appears in the multiplicity group, whereas \( \kappa_m>0 \) ensures that the second-order variance is non-zero, avoiding situations where some class directions become invisible in the data.
\end{remark}


\section{Main Results}
\begin{theorem}[Lower bound in the form of Lemma~\ref{lem:lemma2_lb}]
\label{thm:lower-bound-lemma2-form}
Under the above problem setting, let $(W,H)$ be a global minimizer of the objective \eqref{eq:objective}

For each $m\in\{1,\dots,M\}$, fix any constant $c_{1,m}>0$ and define
\[
\gamma_{1,m}:=\frac{1}{1+c_{1,m}}\cdot\frac{m}{K-m},
\qquad
\Gamma_2:=\frac{1}{N}\sum_{m=1}^M N_m c_{2,m},
\qquad
\]
\[
\rho=:\|W\|_F^2.
\]
\begin{equation}
\begin{aligned}
c_{2,m} :=\;& \frac{c_{1,m} m}{c_{1,m} + 1}\log(m)
+ \frac{m c_{1,m}}{1 + c_{1,m}}
\log\!\left(\frac{c_{1,m} + 1}{c_{1,m}}\right) \\
&\quad + \frac{m}{c_{1,m} + 1}
\log\!\left((K - m)(c_{1,m} + 1)\right).
\end{aligned}
\label{eq:2}
\end{equation}

\[
A_m
:=
\sqrt{
\frac{1}{\kappa_m}\cdot\frac{m(K-m)}{K}\cdot
\Bigg[
\frac{2K}{N_m}
+
\frac{2K^2}{m^2}\max_{S\in\mathcal{S}_m}\sum_{j=1}^K\frac{\mathbf{1}_S(j)}{N_m^j}
\Bigg]
}.
\]
Then the following lower bound holds:
\[
g(WH)-\Gamma_2
\ge
-\frac{1}{N}\sum_{m=1}^M N_m\,\gamma_{1,m}\,A_m\,
\sqrt{\frac{\lambda_W}{\lambda_H}}\,\rho.
\]\label{eq:thm3.1}
\end{theorem}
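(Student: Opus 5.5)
The plan is to follow the template of the balanced PAL analysis of \citet{li2023neural}, i.e.\ the ``Lemma 2'' lower bound referenced in the statement, and carry it out group by group. The key change is that the symmetry arguments get replaced by the spectral constant $\kappa_m$ and the class counts $N_m^j$.

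\textbf{Step 1: per-sample linearization.} For a sample with label set $S\in\mathcal{S}_m$ and logits $z$, the PAL loss is $\sum_{k\in S}\bigl(\log\sum_j e^{z_j}-z_k\bigr)$. I would apply the convexity/Fenchel-type inequality used in the multi-class case, now with $m$ positive labels and $K-m$ negatives and with a free parameter $c_{1,m}>0$. This should give
\[
\ell(z,y_S)\ \ge\ \gamma_{1,m}\Bigl(\tfrac{K}{m}\Bigr)\Bigl(\tfrac1K\mathbf 1^\top z-\tfrac1m\mathbf 1_S^\top z\Bigr)\cdot(\text{normalizing constant})+c_{2,m}.
\]
The equality point is the configuration where every positive logit is equal, every negative logit is equal, and the softmax mass splits in the ratio $c_{1,m}:1$. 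Evaluating the loss there produces exactly the three logarithmic terms of $c_{2,m}$ in \eqref{eq:2}. The slope, obtained by tangency, is $\frac{1}{1+c_{1,m}}\frac{m}{K-m}$ times the centered contrast $-(\Pi\mathbf 1_S)^\top z$, up to the factor $K/m$.

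\textbf{Step 2: aggregate over samples.} Averaging Step 1 with weights $1/N$ leaves $\Gamma_2$ plus a linear term
\[
-\frac1N\sum_m\gamma_{1,m}\sum_{S,i}(\text{const})\,(\Pi y_S)^\top Wh_{m,S,i}.
\]
I would use the first-order optimality of the global minimizer, $\lambda_H h_{m,S,i}=-\frac1N W^\top\nabla\ell$, to rewrite this term. That yields the balance $\lambda_W\|W\|_F^2=\lambda_H\|H\|_F^2$, and the $h$'s sharing the same label set collapse to their mean $\bar h_{m,S}$. It then suffices to bound $\sum_S r_{m,S}\,|(\Pi y_S)^\top W\bar h_{m,S}|$ from above by $N_m A_m\sqrt{\lambda_W/\lambda_H}\,\rho$. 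The plan is to apply Cauchy--Schwarz,
\[
\sum_S r_{m,S}|(\Pi y_S)^\top W\bar h_S|\le \|W\|_F\,\Bigl(\sum_S r_{m,S}\|\Pi y_S\|^2\Bigr)^{1/2}\Bigl(\sum_S r_{m,S}\|\bar h_S\|^2\Bigr)^{1/2},
\]
and then use $\|H\|_F=\sqrt{\lambda_W/\lambda_H}\,\|W\|_F$ to get the factor $\sqrt{\lambda_W/\lambda_H}\,\rho$.

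\textbf{Step 3: where $\kappa_m$ enters (main obstacle).} The crude Cauchy--Schwarz bound in Step 2 loses the bracketed expression in $A_m$, so a sharper argument is needed. I would write $\bar h_S$ through class-wise means, $\mu_j=\frac{1}{N_m^j}\sum_{S\ni j}r_{m,S}\bar h_S$, and decompose $H$ into a part expressible through $\{\mu_j\}$ plus a residual. The residual is controlled by $\frac{2K}{N_m}$. The class-mean part is controlled through $\max_S\sum_j \mathbf 1_S(j)/N_m^j$, since Cauchy--Schwarz with weights $1/N_m^j$ gives $\|\sum_{j\in S}\mu_j\|^2\le(\sum_{j\in S}1/N_m^j)\sum_j N_m^j\|\mu_j\|^2$; the factor $K^2/m^2$ comes from the $1/m$ normalization. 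Converting the centered label quadratic form back to $\|\cdot\|^2$ requires inverting $\Pi G_m\Pi$ on $\mathrm{range}(\Pi)$, which costs $1/\kappa_m$ by \cref{assump:nondeg-centered}. The factor $\frac{m(K-m)}{K}=\|\Pi\mathbf 1_S\|^2$ comes from the norm of each centered label vector.

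Getting the constants to line up exactly, in particular the factor $2$ coming from $(a+b)^2\le 2a^2+2b^2$ in the decomposition, is where I expect the most friction. Once that is done, summing over $m$ with weights $N_m/N$ gives the claimed inequality.
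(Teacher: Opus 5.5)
The gap is one of misjudgment: you discard an argument that already proves the theorem. Step~1 is exactly Lemma~\ref{lem:your-lemma}, as in the paper, and it is fine. The error is in Step~3, where you read the comparison backwards.

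The right-hand side of the theorem is $-(\text{coefficient})\cdot\rho$. A \emph{larger} $A_m$ therefore makes the statement \emph{weaker}. You never need to reproduce the bracket or the $1/\kappa_m$; you only need a coefficient no bigger than $N_mA_m$. Set $a_S:=\mathbf 1-\frac Km\mathbf 1_S=-\frac Km\Pi y_S$, so that $\|a_S\|_2^2=\frac{K(K-m)}{m}$. The bound $|a_S^\top Wh|\le\|a_S\|_2\|W\|_F\|h\|_2$ and Cauchy--Schwarz over the $N_m$ samples give
\[
N_m\bigl(g_m(WH_m)-c_{2,m}\bigr)\ \ge\ -\gamma_{1,m}\sqrt{\tfrac{K(K-m)}{m}\,N_m}\;\|W\|_F\sqrt{E_m},
\qquad E_m:=\sum_{S\in\mathcal S_m}\sum_{i=1}^{r_{m,S}}\|h_{m,S,i}\|_2^2 .
\]
The paper's per-group bound is the same, with $N_mA_m$ in place of $\sqrt{K(K-m)N_m/m}$. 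So the only missing line is $\frac{K(K-m)}{m}\le N_mA_m^2$. Since $N_m^j\le N_m$, the worst-set term is at least $m/N_m$, hence
\[
N_mA_m^2\ \ge\ \frac{m(K-m)}{K\kappa_m}\cdot\frac{2K^2}{m}\ =\ \frac{2K(K-m)}{\kappa_m}\ \ge\ \frac{2K(K-m)}{m},
\]
where the last step uses $\kappa_m\le\lambda_{\max}(G_m)\le\operatorname{Tr}(G_m)=m$. To finish, use $\sqrt{E_m}\le\|H\|_F=\sqrt{\lambda_W/\lambda_H}\,\|W\|_F$ and sum over $m$.

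A few smaller points:
\begin{itemize}
\item Working with $a_S$ directly also restores the factor $m/K$ that your Step~2 target drops.
\item You need neither within-group collapse nor a first-order rewriting of the linear term.
\item The only use of stationarity is the norm balance. That requires the stationarity conditions in both $W$ and $H$ (Lemma~\ref{lem:a1}), not only the one in $h$.
\end{itemize}

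For comparison, the paper proceeds as follows:
\begin{itemize}
\item It exchanges sums to write the group-$m$ linear term as $\langle\Theta_m,W\rangle_F$, with rows $\bar h_m-\frac Km\bar h_m^{\,k}$.
\item It applies Young's inequality with optimized $c_{3,m}$, which amounts to $\ge-\|\Theta_m\|_F\|W\|_F$.
\item It passes through $\|\Theta_m\|_F^2\le\kappa_m^{-1}\operatorname{Tr}(\Theta_m^\top\Pi G_m\Pi\Theta_m)\le\kappa_m^{-1}\frac{m(K-m)}{K}\|\Theta_m\|_F^2$, using $\Pi\Theta_m=\Theta_m$ and the trace inequality.
\item It bounds $\|\Theta_m\|_F^2$ via $\|a-b\|^2\le2\|a\|^2+2\|b\|^2$ and Jensen on the class-wise sums (Lemma~\ref{lem:S5_theta_H_interface}). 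This is where $\frac{2K}{N_m}$ and the worst-set term come from.
\end{itemize}
In the paper, $\kappa_m$ enters only through this round trip. Its net effect is a loosening factor $\frac{m(K-m)}{K\kappa_m}\ge K-1$, not a cost the problem forces on you, contrary to what your Step~3 assumes. Your direct route is shorter and gives a sharper constant that involves neither $\kappa_m$ nor the $N_m^j$. What the paper's route buys is only that these quantities appear explicitly in the constant. Your Step~3 never specifies which quadratic form is bounded below by $\kappa_m$, so it is not a workable argument as written; it is also unnecessary and should be dropped.
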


This theorem is the formal statement of Lemma~\ref{lem:lemma2_lb}; its detailed proof can be found in Appendix~\ref{appendix1} (see Lemma~\ref{lem:lemma2_lb} and the accompanying proof).

Theorem~\ref{thm:lower-bound-lemma2-form} is stated for the shifted quantity $g(WH)-\Gamma_2$ because the proof begins with an affine pointwise PAL lower bound at each multiplicity level: after averaging, the intercept terms accumulate and their $N_m$-weighted aggregate is exactly $\Gamma_2$, so subtracting $\Gamma_2$ removes an unavoidable constant shift. The tuning constants $\{c_{1,m}\}_{m=1}^M$ (one per multiplicity) play the same role as the ``$c_1$'' in \citet{Zhu2021} or the ``$t_k$'' parameters in \citet{dang2024neural}, generating a family of valid affine bounds that can be optimized for tightness.

\begin{remark}
A key novelty of our analysis is that we compress correlation and imbalance into a single, computable control quantity $A_m$. 
The distribution-sensitive content is concentrated in $A_m$ (our key control quantity). It jointly encodes correlation and imbalance through three effects: (i) $A_m$ increases as $\kappa_m$ decreases, since the $1/\kappa_m$ factor captures spectral near-degeneracy on centered contrast directions, making any spectral lower bound ill-conditioned when $\kappa_m$ is small; (ii) $A_m$ decreases with $N_m$, reflecting the benefit of more multiplicity-$m$ samples; and (iii) within-group long-tail rarity enters via the worst-set term $\max_{S\in\mathcal S_m}\sum_{j\in S}1/N_m^j$, which amplifies rare labels at the combinatorial level, paralleling the frequency-driven scaling effects in imbalanced multi-class NC analyses (e.g., \citet{dang2024neural}).

\end{remark}

The remaining factor $\sqrt{\lambda_W/\lambda_H}\,\rho$ (with $\rho=\|W\|_F^2$) is an $\ell_2$-induced energy scale, closed via the critical-point identity $\|H\|_F^2=(\lambda_W/\lambda_H)\|W\|_F^2$ (Lemma~\ref{lem:a1}), which also explains why $A_m$ is defined to avoid additional dependence on $\rho$. Overall, the bound should be read as a distribution-aware attainability guarantee: after removing $\Gamma_2$, the worst-case negative deviation is governed primarily by $(\kappa_m,\{N_m^j\})$ through $A_m$, so small $\kappa_m$ or extreme within-multiplicity rarity can substantially weaken provable guarantees and destabilize terminal geometry. A more detailed discussion is deferred to Appendix~\ref{app:interpretation:thm1}.

\begin{theorem}[Structural description and geometric structure]
\label{thm:structure-geometry}
Let $(W,H)$ be a global minimizer of $f(W,H)$. Then there exists a constant $C_1>0$ and,
for each $m\in\{1,\dots,M\}$, there exists a constant $C_m>0$, independent of the label set \(S\), such that:

\paragraph{(1) Classifier centering.}
\[
\sum_{k=1}^K w_k=0,
\qquad
\Pi W=W.
\]

\paragraph{(2) Within-group collapse.}
For any $m\in\{1,\dots,M\}$ and any $S\in\mathcal{S}_m$,
\[
h_{m,S,i}=h_{m,S},
\qquad
i=1,\dots,r_{m,S}.
\]

\paragraph{(3) Multiplicity-one centered self-duality.}
Define
\[
\widehat{h}_{1,k}
:=
h_{1,k}
-
\frac{1}{K}\sum_{\ell=1}^K h_{1,\ell},
\qquad
k\in [K].
\]

Then there exists a constant $C_1>0$ such that
\[
\widehat{h}_{1,k}=C_1 w_k,
\qquad
\forall k\in [K].
\]

Equivalently, if we define
\[
u_k:=\frac{w_k}{\sqrt{r_{1,k}}},
\]
then
\[
\widehat{h}_{1,k}
=
C_1\sqrt{r_{1,k}}\,u_k,
\qquad
\forall k\in [K].
\]

\paragraph{(4) ``Label-set generation'' for $m\ge 2$ (controlled by multiplicity-one).}
For any $m\in\{2,\dots,M\}$ and any $S\in\mathcal{S}_m$,
\[
h_{m,S}=C_m\sum_{k\in S} \sqrt{r_{1,k}} u_k.
\]
\end{theorem}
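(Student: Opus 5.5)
Our plan is to work from the first-order optimality conditions of $f$ together with the equality cases of the lower bound of Theorem~\ref{thm:lower-bound-lemma2-form}; we prove the four items in order. Write $\mathcal{L}_{\mathrm{PAL}}(z,y_S)=\sum_{k\in S}\bigl(-z_k+\log\sum_{j}e^{z_j}\bigr)$. Its gradient in $z$ is $m\,\mathrm{softmax}(z)-y_S$, which satisfies $\mathbf{1}^\top(m\,\mathrm{softmax}(z)-y_S)=0$.

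\textbf{Items (1) and (2).} The stationarity condition in $W$ reads $\lambda_W W=-\frac1N\sum_{m,S,i}(m\,\mathrm{softmax}(z_{m,S,i})-y_S)h_{m,S,i}^\top$. Left-multiplying by $\mathbf{1}^\top$ kills every summand, so $\mathbf{1}^\top W=0$, i.e.\ $\Pi W=W$. This gives (1).

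For (2), we use strict convexity in each feature separately. With $W$ fixed, the map $h\mapsto \frac1N\mathcal{L}_{\mathrm{PAL}}(Wh,y_S)+\frac{\lambda_H}{2}\|h\|^2$ is strictly convex, because the loss is convex and the regularizer is strongly convex. It has the same form for every $i$ within a fixed $(m,S)$. Its minimizer is therefore unique, and all $h_{m,S,i}$ coincide.

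\textbf{Items (3) and (4).} Here we use the tightness of the chain of inequalities behind Lemma~\ref{lem:lemma2_lb}. At a global minimizer, after optimizing over $\rho$ and the constants $c_{1,m}$, every inequality in that chain must hold with equality. Two equality cases are relevant.

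First, the affine pointwise PAL bound is tight exactly when the non-target logits are equal and the target logits are equal. Concretely, for each $S\in\mathcal S_m$ the vector $Wh_{m,S}$ equals $a_m\mathbf{1}_S+b_m\mathbf{1}$, with $a_m$ independent of $S$.

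Second, the Cauchy--Schwarz and AM--GM step, which produces $\sqrt{\lambda_W/\lambda_H}\,\rho$ and uses $\|H\|_F^2=(\lambda_W/\lambda_H)\|W\|_F^2$ from Lemma~\ref{lem:a1}, is tight exactly when the features are aligned with the classifier. Equivalently, $H$ lies in the row space of $W$. The same conclusion follows from stationarity in $h$: $\lambda_H h_{m,S}=-\frac{r_{m,S}}{N}W^\top(\cdots)$.

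Combining these, we write $h_{m,S}=W^\top v_{m,S}$. Then $WW^\top v_{m,S}\in a_m\Pi\mathbf{1}_S+\mathrm{span}(\mathbf 1)$. The $W$-stationarity equation, after substituting the collapsed logits and features, gives $WW^\top$ on $\mathrm{range}(\Pi)$ as a combination of $\Pi G_m\Pi$-type terms. Assumption~\ref{assump:nondeg-centered} ($\kappa_m>0$) makes $W$ of full rank $K-1$ on the centered subspace, so these relations can be inverted. The inversion yields $v_{m,S}\propto\Pi\mathbf 1_S$ modulo the kernel. For $m=1$ this gives $\widehat h_{1,k}=C_1w_k$. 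For $m\ge2$, using that $h_{m,S}$ lies in the row space, we get $h_{m,S}=C_m\sum_{k\in S}w_k=C_m\sum_{k\in S}\sqrt{r_{1,k}}\,u_k$. The reformulation with $u_k$ is then immediate.

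\textbf{Expected obstacle.} The main obstacle is the last step: showing that the proportionality constants $C_1$ and $C_m$ do not depend on $k$ or $S$ under imbalance. The per-feature stationarity equation carries a factor $r_{m,S}$, which a priori makes the scale set-dependent. We would handle this with the equal-logit equality condition, which forces $a_m$ to be uniform in $S$, together with the nondegeneracy $\kappa_m>0$, which rules out degenerate directions in which the scalars could vary. The $\sqrt{r_{1,k}}$ normalization in the statement then absorbs the residual count dependence into $u_k$.
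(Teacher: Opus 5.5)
Your items (1) and (2) are fine. Deriving $\mathbf 1^\top W=0$ from $W$-stationarity works because each summand is killed by $\mathbf 1^\top(m\,\mathrm{softmax}(z)-y_S)=0$. This is a valid alternative to the paper's argument via $g(WH)=g((\Pi W)H)$ and $\|\Pi W\|_F\le\|W\|_F$, and it even holds at every critical point. Item (2) is the paper's argument.

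The gap is in (3)--(4), which rest on your claim that at a global minimizer every inequality in the chain of Lemma~\ref{lem:lemma2_lb} is tight. That would follow only if some feasible $(W,H)$ attained the lower bound; in the balanced case an explicit ETF configuration does this, but nothing of the sort is available here. In fact the full chain cannot be tight at all. The two spectral steps compose to $\|\Theta_m\|_F^2\le\kappa_m^{-1}\operatorname{Tr}(\Pi G_m\Pi)\|\Theta_m\|_F^2$, and $\operatorname{Tr}(\Pi G_m\Pi)\ge(K-1)\kappa_m$, so for $K\ge3$ equality forces $\Theta_m=0$. Likewise $E_m\le\|H\|_F^2$ is strict as soon as another multiplicity has nonzero features.

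The paper's proof uses only the weaker premise that the affine PAL bound of Lemma~\ref{lem:your-lemma} is attained in every collapsed group. It too just asserts this, and under imbalance the premise is essentially unavailable:
\begin{itemize}
\item Together with $\mathbf 1^\top W=0$, the premise gives $Wh_{m,S}=\Delta_m\Pi\mathbf 1_S$. Hence $\nabla_z\mathcal L_{\mathrm{PAL}}=\alpha_m(\mathbf 1-\tfrac{K}{m}\mathbf 1_S)$ with $\alpha_m=m/(me^{\Delta_m}+K-m)$.
\item $h$-stationarity then yields $h_{1,k}=C_1w_k$, so $C_1WW^\top=\Delta_1\Pi$: the classifier is forced to be an exact scaled simplex ETF.
\item $W$-stationarity yields $\lambda_W WW^\top=\tfrac{K}{N}Q$ with $Q:=\sum_m\tfrac{N_m\alpha_m\Delta_m}{m}\Pi G_m\Pi$. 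Combined with the previous point, $Q\propto\Pi$.
\item For $M=1$ and $\Delta_1\neq0$ this reads $\Pi\,\mathrm{diag}(r_{1,k})\,\Pi\propto\Pi$. Testing on $e_i-e_j$ shows $r_{1,i}+r_{1,j}$ is constant, so all $r_{1,k}$ are equal when $K\ge3$. If $\Delta_1=0$ we get $WW^\top=0$.
\end{itemize}
So with imbalanced multiplicity-one counts the premise can hold only at $W=0$, $H=0$, which is not the global minimizer once $\lambda_W\lambda_H$ is small.

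Even granting the premise, the rest of your plan misfires.
\begin{itemize}
\item The obstacle you anticipate does not exist. The penalty is $\tfrac{\lambda_H}{2}\sum_{m,S,i}\|h_{m,S,i}\|^2$, so per-sample stationarity reads $\lambda_H h_{m,S,i}=-\tfrac{1}{N}W^\top\nabla_z\mathcal L_{\mathrm{PAL}}(Wh_{m,S,i},y_S)$, with no factor $r_{m,S}$. After collapse, $r_{m,S}$ multiplies both the group's loss and its penalty and cancels.
\item The paper simply plugs the explicit gradient $\alpha_m(\mathbf 1-\tfrac{K}{m}\mathbf 1_S)$ into this and uses $W^\top\mathbf 1=0$. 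That gives $h_{m,S}=\tfrac{K\alpha_m}{m\lambda_H N}\sum_{k\in S}w_k$, with a constant independent of $S$. It also gives $\sum_k h_{1,k}=0$, so $\widehat h_{1,k}=h_{1,k}$. No rank argument and no $\kappa_m$ enter.
\item Your inversion is not a substitute. You are right that $W$-stationarity expresses $WW^\top$ through the $\Pi G_m\Pi$; indeed $WW^\top=\tfrac{K}{\lambda_W N}Q$. But inverting $WW^\top v_{m,S}=\Delta_m\Pi\mathbf 1_S$ gives $\Pi v_{m,S}\propto Q^{+}\Pi\mathbf 1_S$. Since the $\Pi e_k$ span $\mathrm{range}(\Pi)$, this is parallel to $\Pi\mathbf 1_S$ for every $S$ only if $Q\propto\Pi$, which is exactly the restrictive condition above. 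So $v_{m,S}\propto\Pi\mathbf 1_S$ does not follow from $\kappa_m>0$.
\item $\sqrt{r_{1,k}}\,u_k=w_k$ by definition. The $u_k$-form is purely notational and cannot absorb any $S$- or $k$-dependence of $C_m$.
\end{itemize}
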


This theorem is the formal statement of Lemma~\ref{lem3}; its detailed proof can be found in Appendix~\ref{appendix1} (see Lemma~\ref{lem3} and the accompanying proof).

Theorem~\ref{thm:structure-geometry} should be read structurally: statements (1)--(4) describe different facets of a single terminal regime, rather than four unrelated properties. The terminal geometry is first fixed on the centered discriminative subspace, then collapses at the label-set granularity, and finally shows that every higher-multiplicity prototype is generated from the multiplicity-one directions through a count-weighted synthesis law.

Statement (1) implies that the discriminative geometry effectively lives in the centered subspace: the $\mathbf{1}$-direction only induces a global logit shift and is therefore a redundant degree of freedom. Once centered, the relevant geometric relations concern genuine inter-class contrast directions.

Statement (2) identifies the natural granularity of multi-label collapse: variability collapses within each label set $S$ (at fixed multiplicity $m$), so the terminal representation assigns a single prototype per label set. This is the correct level at which terminal geometry should be analyzed.

\begin{remark}
The key imbalance signature already appears at multiplicity one. In the imbalance-aware coordinates
\(
u_k=\frac{w_k}{\sqrt{r_{1,k}}},
\)
\(
\widehat h_{1,k}
=
h_{1,k}
-
\frac{1}{K}\sum_{\ell=1}^K h_{1,\ell},
\)
statement (3) gives
\(
\widehat h_{1,k}=C_1 w_k,
\) equivalently \(
\widehat h_{1,k}=C_1\sqrt{r_{1,k}}\,u_k.
\)
Thus, after centering, each multiplicity-one prototype is aligned with its corresponding classifier direction, while the multiplicity-one count profile enters explicitly through the factor $\sqrt{r_{1,k}}$ in the imbalance-aware coordinates. More precisely, statement (3) identifies the multiplicity-one directions as the basic building blocks, and statement (4) shows that the higher-multiplicity geometry is generated from them through explicit count-dependent weights. We do not interpret this as asserting that the rescaled vectors $\{u_k\}_{k=1}^K$ necessarily form an exact centered ETF under general multiplicity-one imbalance.
\end{remark}

\begin{remark}
For $m\ge 2$, statement (4) gives the distinctive synthesis law
\(
h_{m,S}=C_m\sum_{k\in S}\sqrt{r_{1,k}}\,u_k,
\)
which is sharper than naive averaging: every higher-multiplicity prototype is a linear combination of the multiplicity-one directions, with weights determined solely by the multiplicity-one count profile $\{\sqrt{r_{1,k}}\}$. This directly addresses the multiplicity-one imbalance conjecture in \citet{li2023neural} by making the ``scaled-average'' dependence explicit via the above $\sqrt{r_{1,k}}$-weighted synthesis. In the balanced special case where $r_{m,S}\equiv n_m$ within each $\mathcal S_m$ and $r_{1,k}$ is constant across $k$, the weights $\sqrt{r_{1,k}}$ are constant and the generation law reduces to an unweighted tag-wise summation (and, after normalization, to the scaled-average structure in \citet{li2023neural}). A more detailed discussion is deferred to Appendix~\ref{app:interpretation:thm2}.
\end{remark}




\section{Experiments}

Our experiments address two questions: (i) on synthetic multi-label benchmarks, whether late-training geometric collapse is stable and trackable by a small set of scalar metrics; and (ii) when imbalance is introduced \emph{only} within the multiplicity-one subset, whether the late-training geometry deviates from the balanced baseline in a monotone and persistent manner across datasets.

\subsection{Data construction and training setup}

\paragraph{Synthetic multi-label MNIST and CIFAR10.}
We construct multi-label samples using a pad-stack rule: each single-label image is zero-padded to double its original spatial size, and two padded images from different classes are superimposed to form a multiplicity-two sample. Throughout, we consider the case where the maximum multiplicity is $2$. For the training set, we sample $3100$ multiplicity-one images per class and generate $200$ multiplicity-two images per class pair, resulting in $40000$ training samples in total.

\paragraph{Imbalance confined to multiplicity one.}
To isolate the effect of multiplicity-one imbalance, we keep all multiplicity-two samples unchanged and downsample only the multiplicity-one subset.
For ratios $r\in\{0.2,0.1\}$, we keep classes $0$--$4$ intact (each with $3100$ multiplicity-one samples) and downsample classes $5$--$9$ to $r$ of their original counts (i.e., $620$ and $310$ per class for $r=0.2$ and $r=0.1$, respectively).
Consequently, imbalance enters the training data only through the multiplicity-one class counts $\{r_{1,k}\}_{k=1}^K$.

\paragraph{Optimization.}
We train ResNet18 with SGD (batch size $128$, momentum $0.9$) for $200$ epochs, using a cosine-style learning-rate schedule from $0.1$ to $0.001$, with identical hyperparameters for MNIST and CIFAR10.

\subsection{Metrics}

We track four scalar metrics throughout training (y-axes labeled as \textsc{NC1}, \textsc{NC2}, \textsc{NC3}, and \textsc{Angle}):
\begin{itemize}
  \item \textsc{NC1}: within-class collapse (smaller indicates stronger collapse).
  \item \textsc{NC2}: a classifier-related geometric consistency metric.
  \item \textsc{NC3}: classifier--class-mean alignment (self-duality).
  \item \textsc{Angle}: multiplicity-two additivity from feature means (\texttt{angle\_metric}).
\end{itemize}

\begin{figure*}[t]
  \vskip 0.1in
  \begin{center}
    \begin{minipage}{0.24\textwidth}
      \centering
      \includegraphics[width=\linewidth]{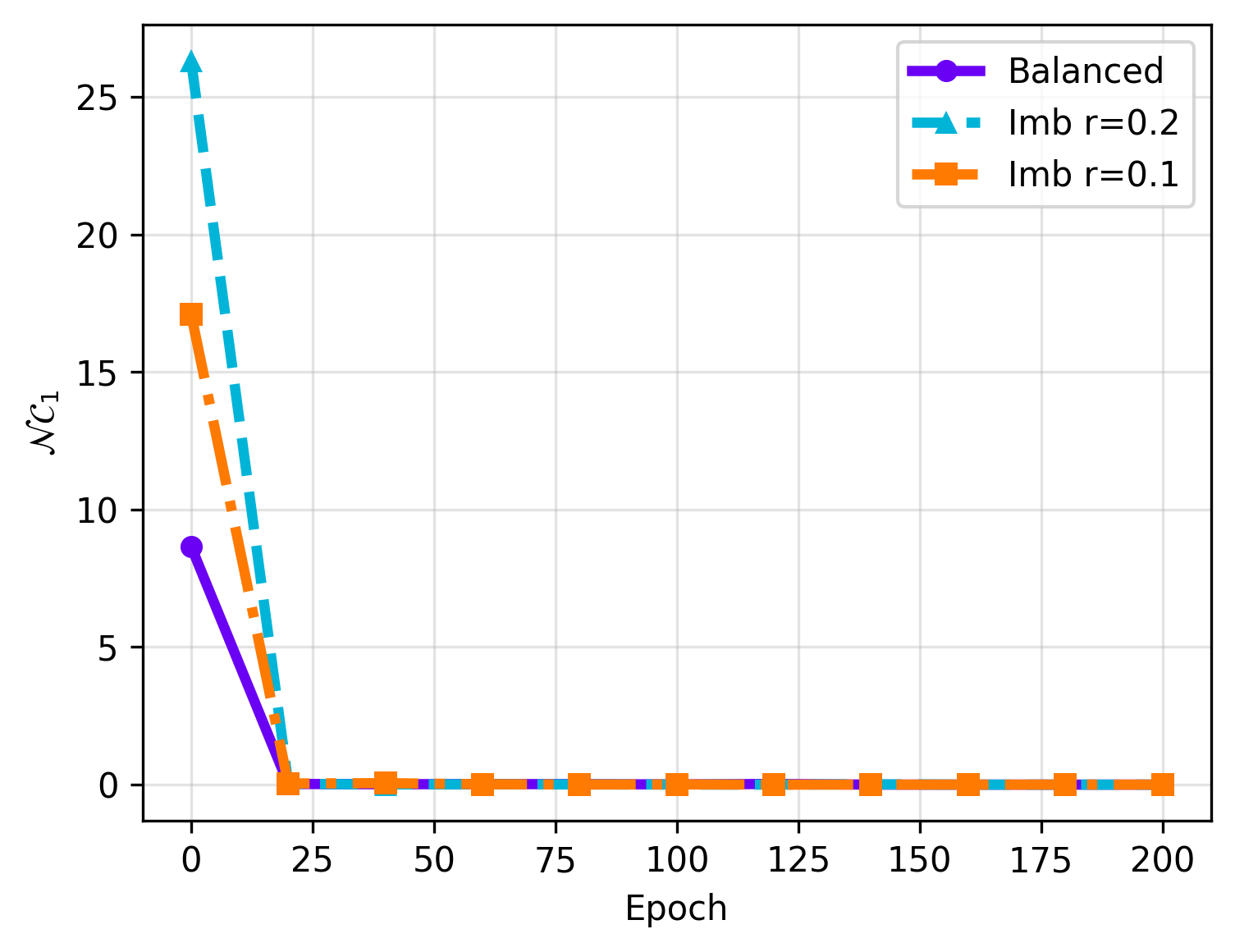}
      \vspace{-0.05in}
      \caption*{\small (a) $\mathcal{NC}_1$ (MLab-MNIST)}
    \end{minipage}
    \hfill
    \begin{minipage}{0.24\textwidth}
      \centering
      \includegraphics[width=\linewidth]{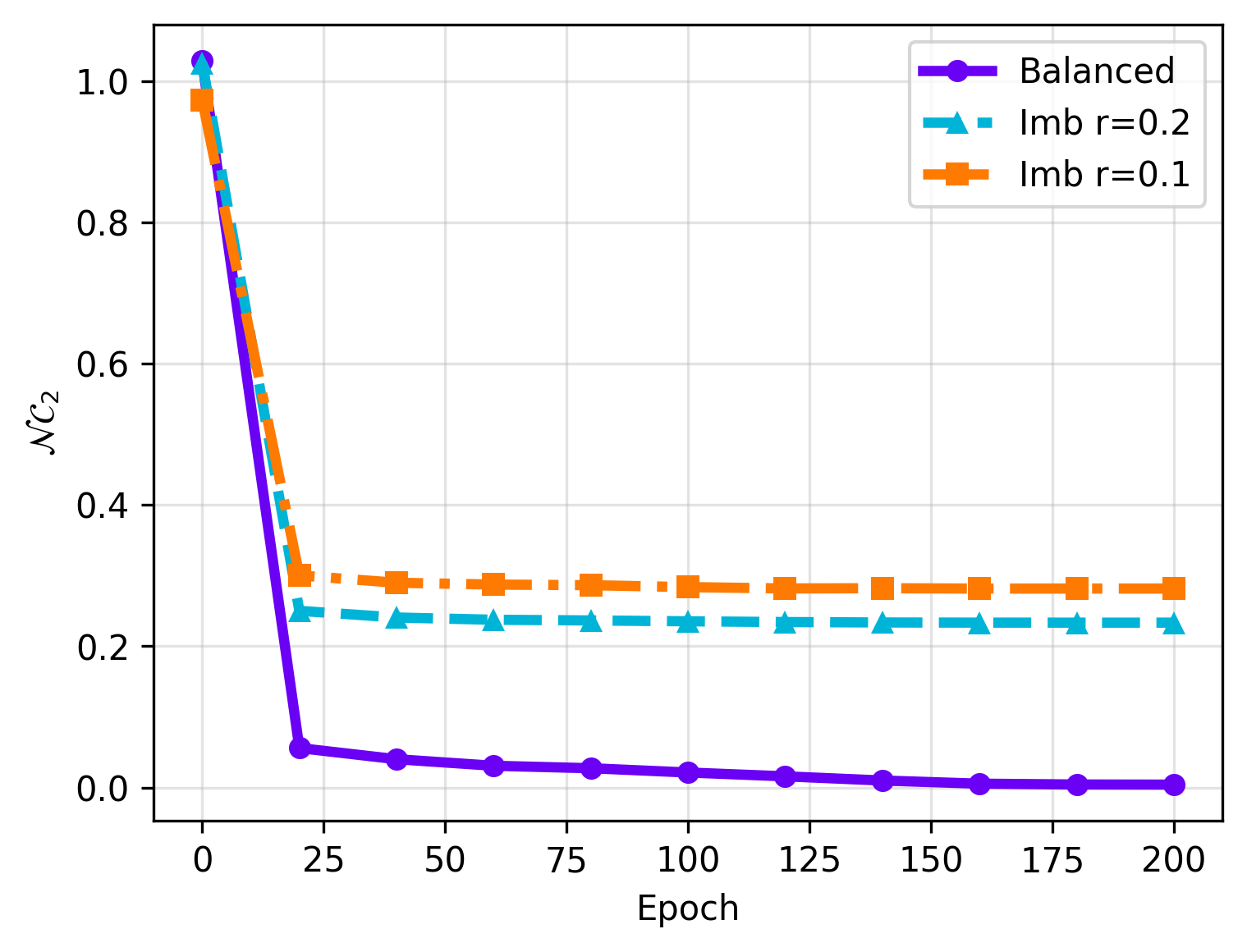}
      \vspace{-0.05in}
      \caption*{\small (b) $\mathcal{NC}_2$ (MLab-MNIST)}
    \end{minipage}
    \hfill
    \begin{minipage}{0.24\textwidth}
      \centering
      \includegraphics[width=\linewidth]{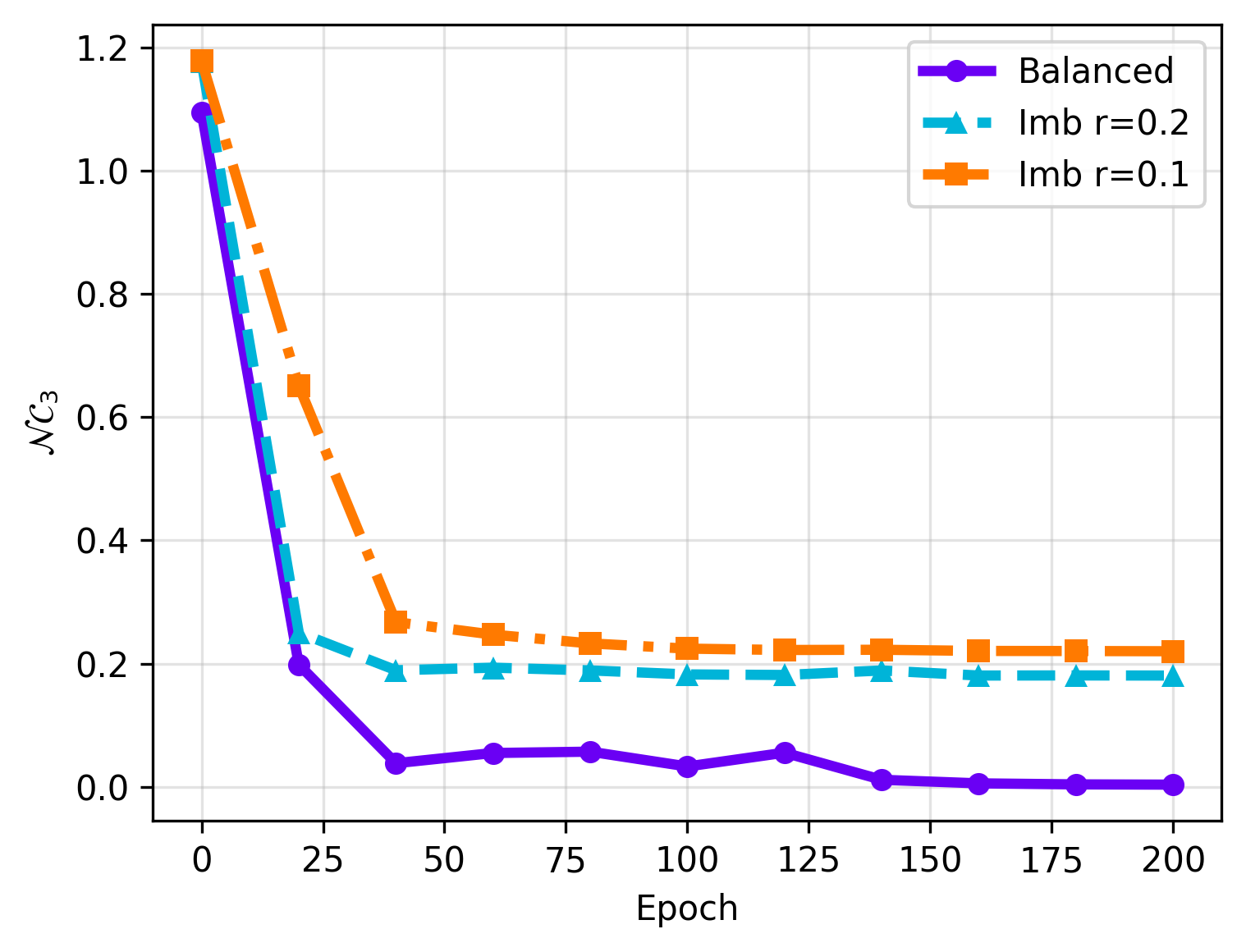}
      \vspace{-0.05in}
      \caption*{\small (c) $\mathcal{NC}_3$ (MLab-MNIST)}
    \end{minipage}
    \hfill
    \begin{minipage}{0.24\textwidth}
      \centering
      \includegraphics[width=\linewidth]{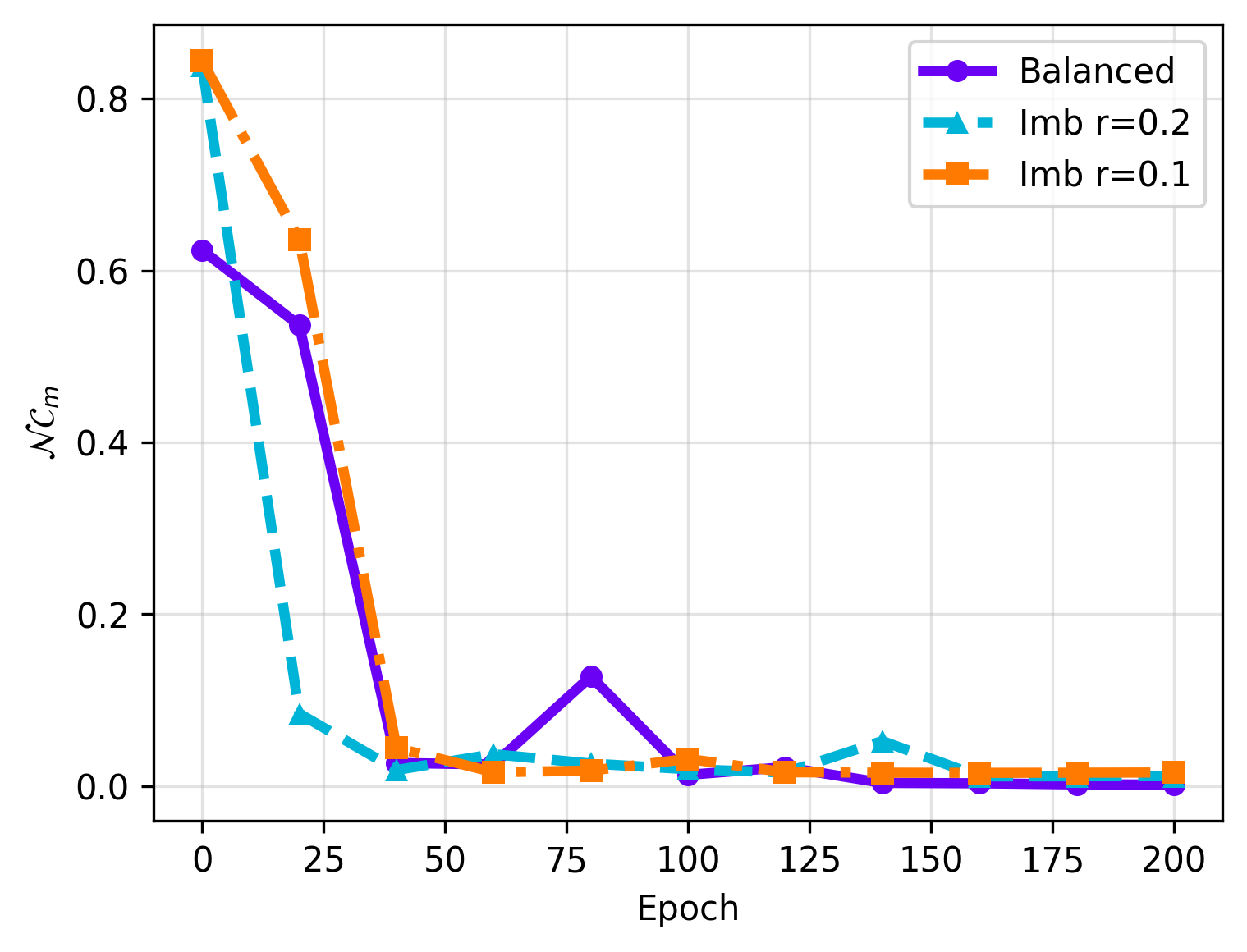}
      \vspace{-0.05in}
      \caption*{\small (d) $\mathcal{NC}_m$ (MLab-MNIST)}
    \end{minipage}

    \vspace{0.1in}

    \begin{minipage}{0.24\textwidth}
      \centering
      \includegraphics[width=\linewidth]{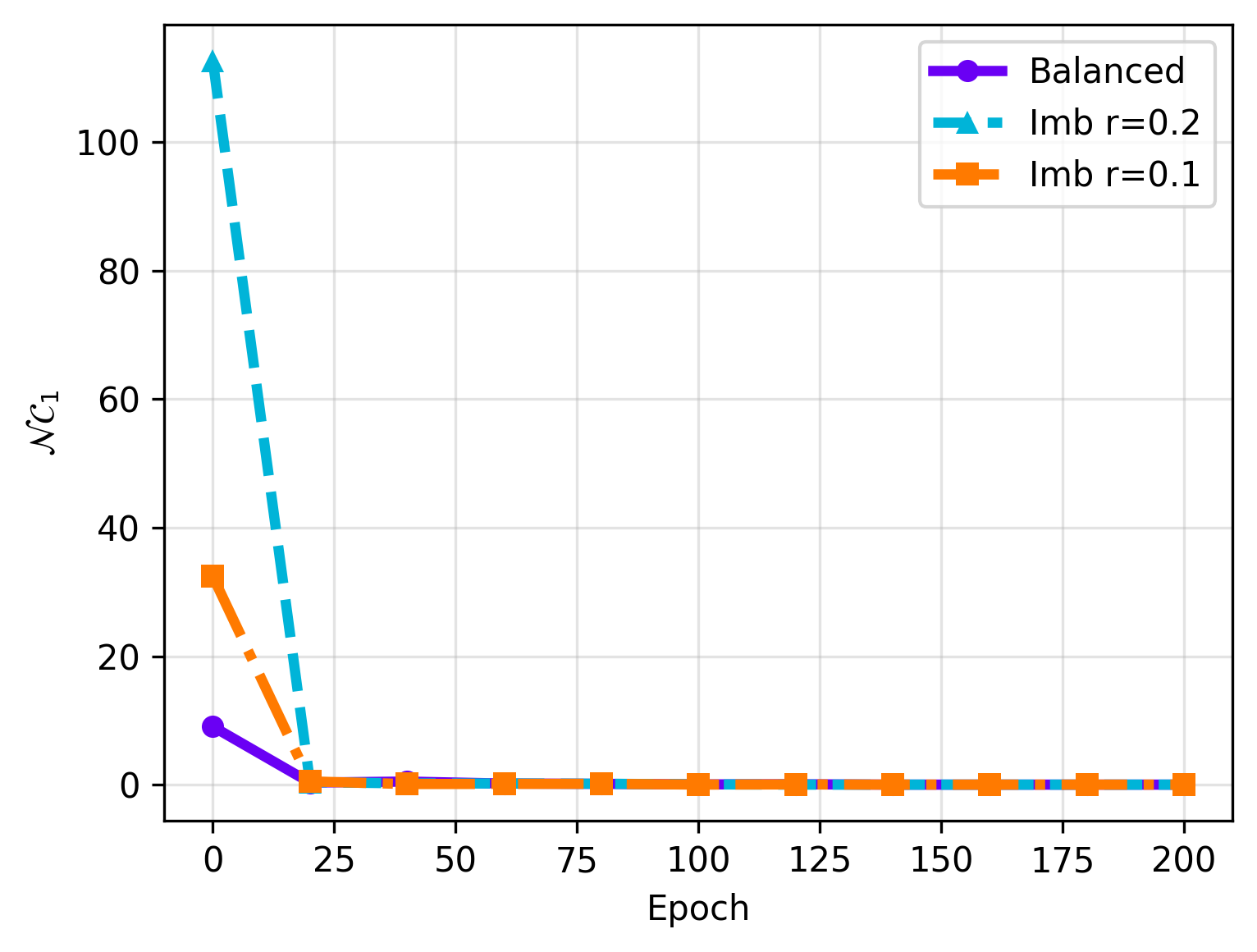}
      \vspace{-0.05in}
      \caption*{\small (e) $\mathcal{NC}_1$ (MLab-CIFAR10)}
    \end{minipage}
    \hfill
    \begin{minipage}{0.24\textwidth}
      \centering
      \includegraphics[width=\linewidth]{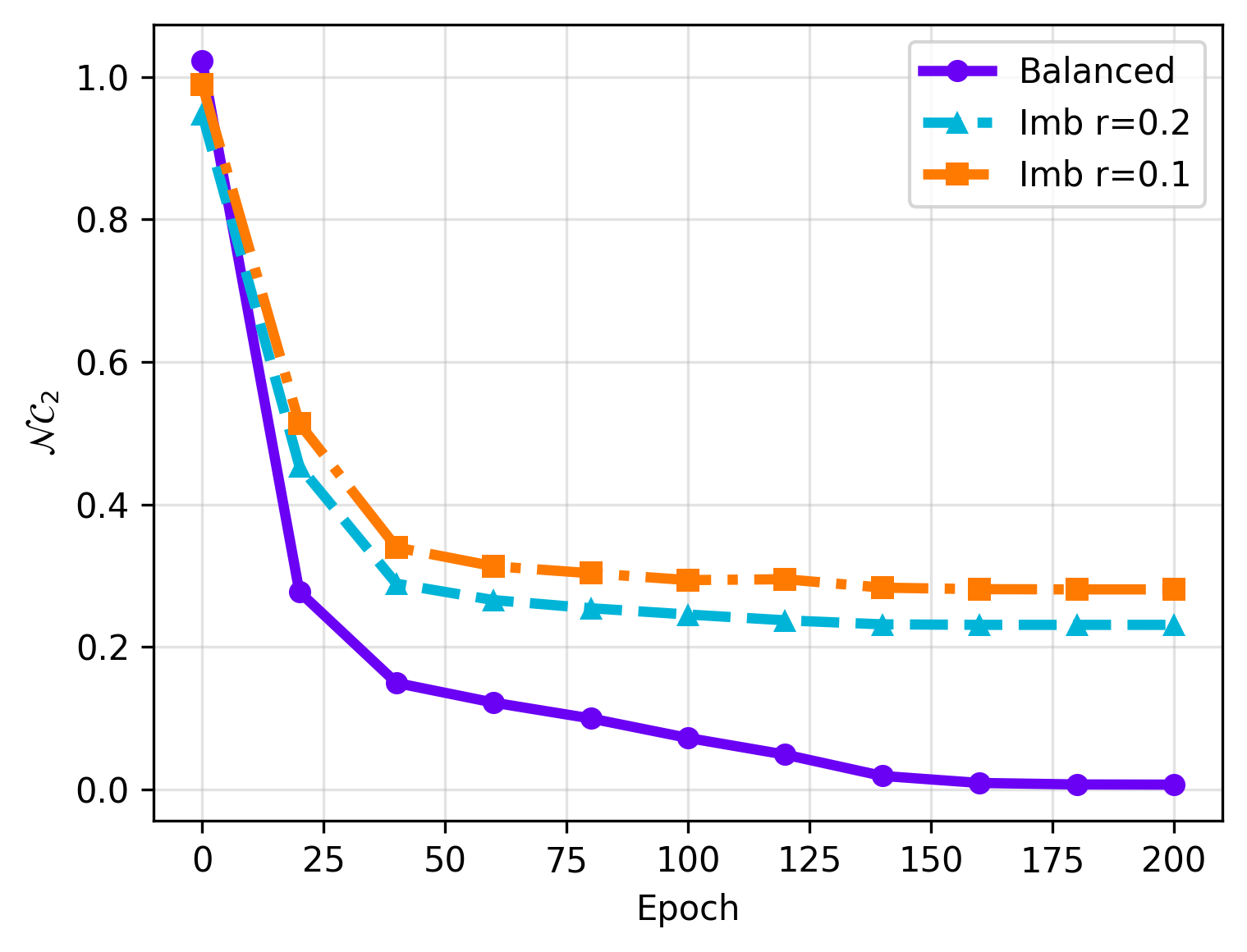}
      \vspace{-0.05in}
      \caption*{\small (f) $\mathcal{NC}_2$ (MLab-CIFAR10)}
    \end{minipage}
    \hfill
    \begin{minipage}{0.24\textwidth}
      \centering
      \includegraphics[width=\linewidth]{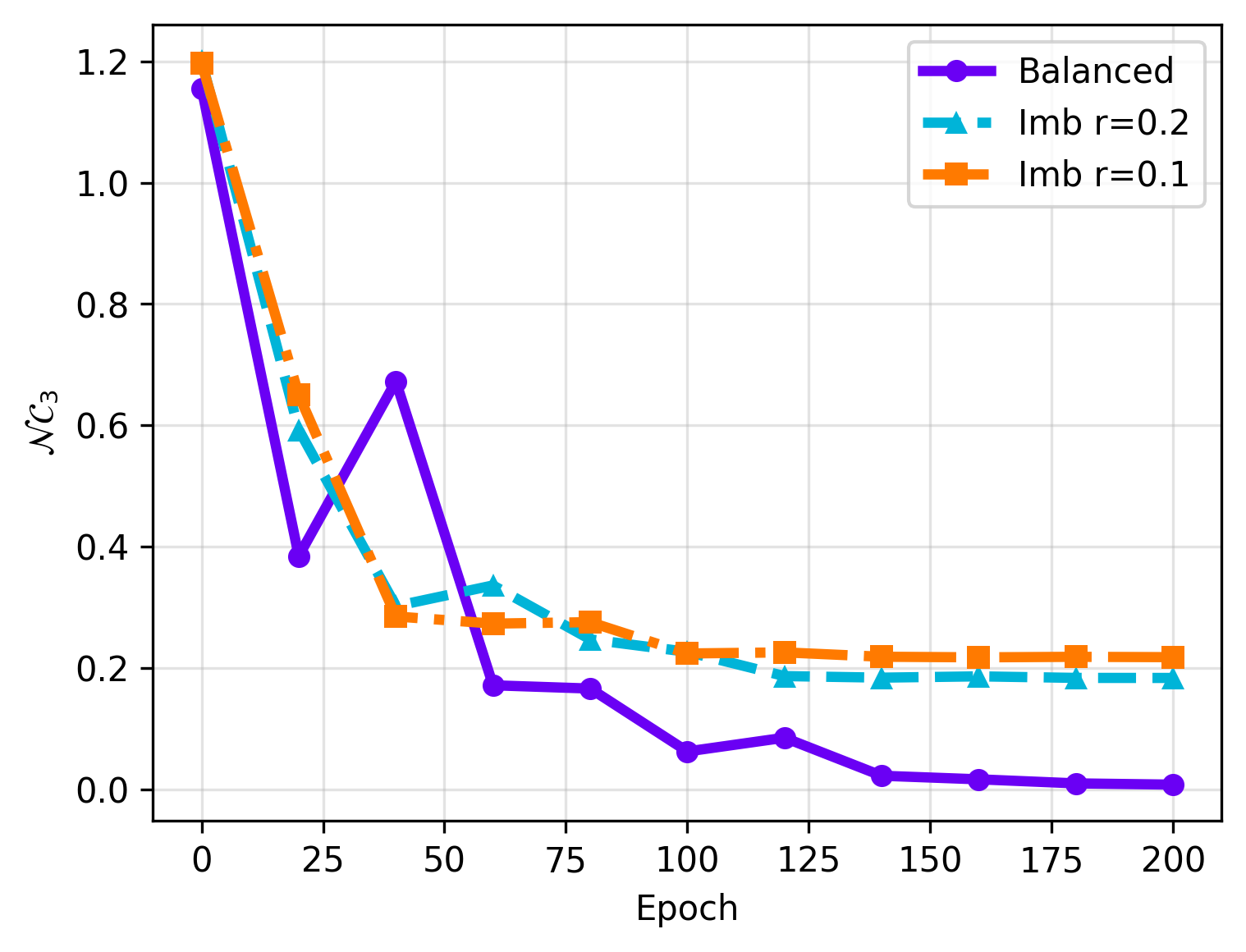}
      \vspace{-0.05in}
      \caption*{\small (g) $\mathcal{NC}_3$ (MLab-CIFAR10)}
    \end{minipage}
    \hfill
    \begin{minipage}{0.24\textwidth}
      \centering
      \includegraphics[width=\linewidth]{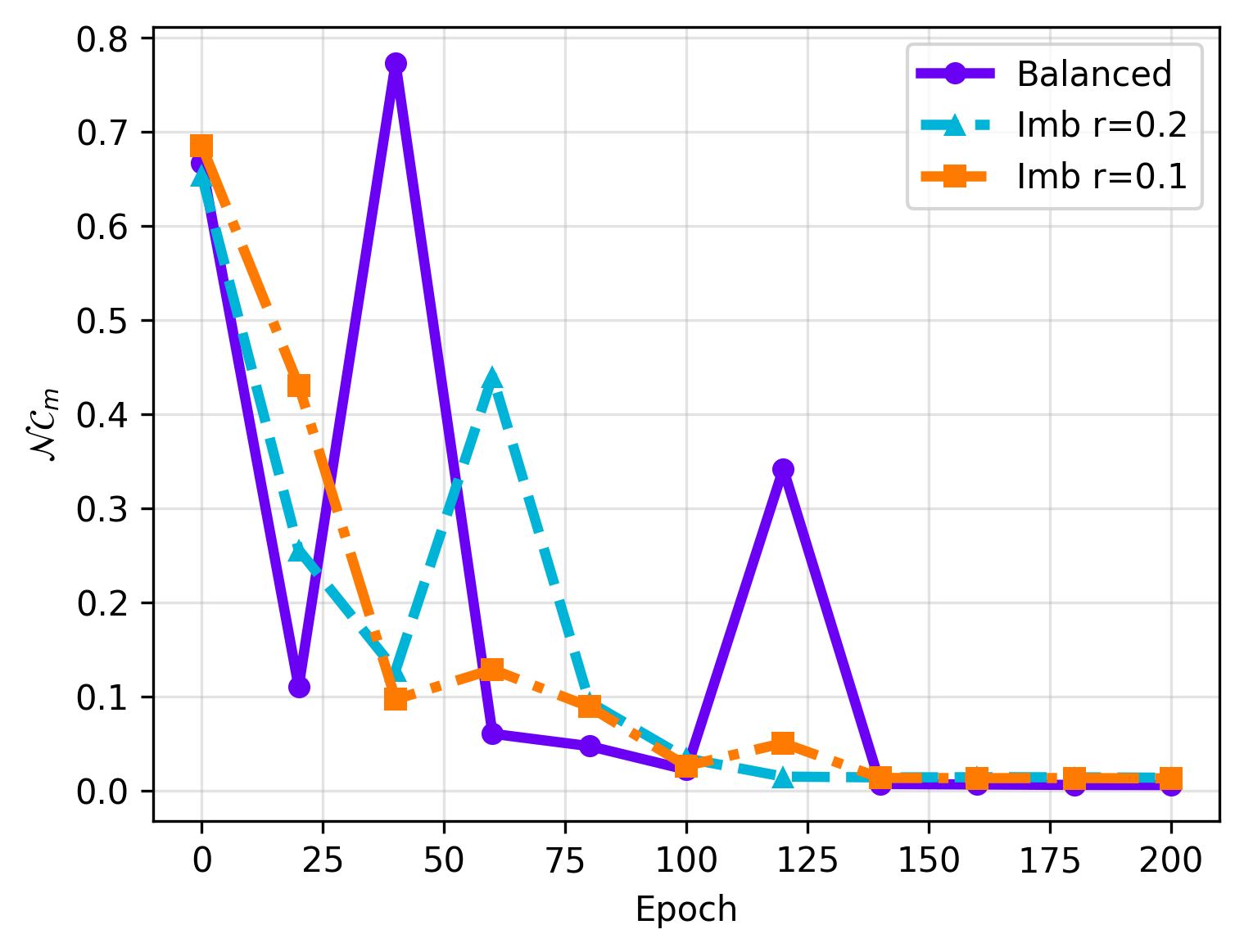}
      \vspace{-0.05in}
      \caption*{\small (h) $\mathcal{NC}_m$ (MLab-CIFAR10)}
    \end{minipage}

    \vspace{0.05in}
    
    \caption{
    \textbf{Training dynamics of geometric-collapse metrics on MLab-MNIST (top) and MLab-CIFAR10 (bottom).}
    Curves compare the balanced baseline with multiplicity-one imbalance ratios $r=0.2$ and $r=0.1$ (only multiplicity-one samples are downsampled, while multiplicity-two samples are kept fixed). 
    (a) $\mathcal{NC}_1$ measures within-class collapse. 
    (b) $\mathcal{NC}_2$ measures a classifier-related geometric consistency metric (in our implementation, it corresponds to the weight-side version).
    (c) $\mathcal{NC}_3$ measures classifier--class-mean alignment (self-duality).
    (d) $\mathcal{NC}_m$ is an angle-based metric for multiplicity-two additivity (smaller indicates better agreement between multiplicity-two means and the sum of their multiplicity-one components).
    }
    \label{fig:mnist_cifar10_4panel}
  \end{center}
  \vskip -0.1in
\end{figure*}

\begin{figure}[t]
  \centering
  \begin{minipage}{0.48\columnwidth}
    \centering
    \includegraphics[width=\linewidth]{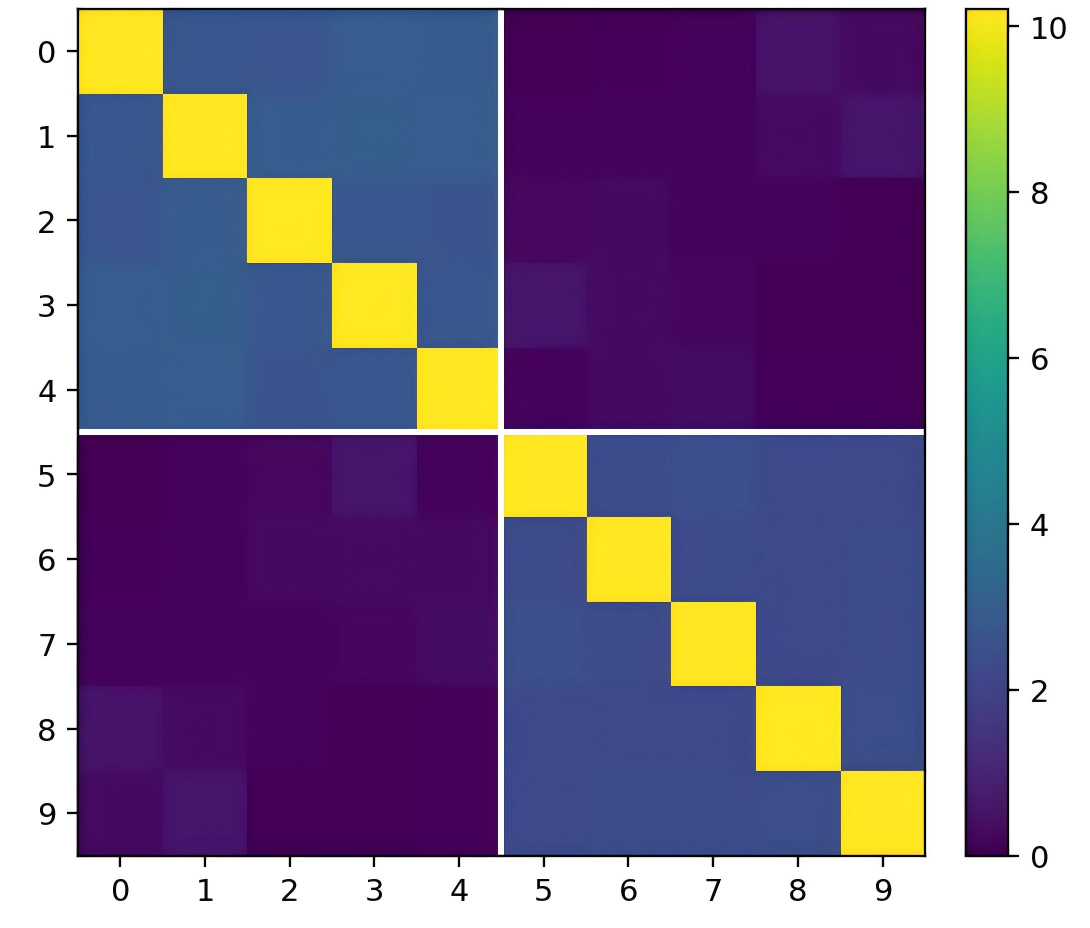}
    \vspace{-6pt}
    {\scriptsize\rmfamily
    \scalebox{0.92}{$
      \lvert\,\mathrm{W}\mathrm{W}^{\mathsf{T}}-\mathrm{c}^{\star}(\mathrm{W})\,\Pi\,\rvert
    $}}
  \end{minipage}
  \hfill
  \begin{minipage}{0.48\columnwidth}
    \centering
    \includegraphics[width=\linewidth]{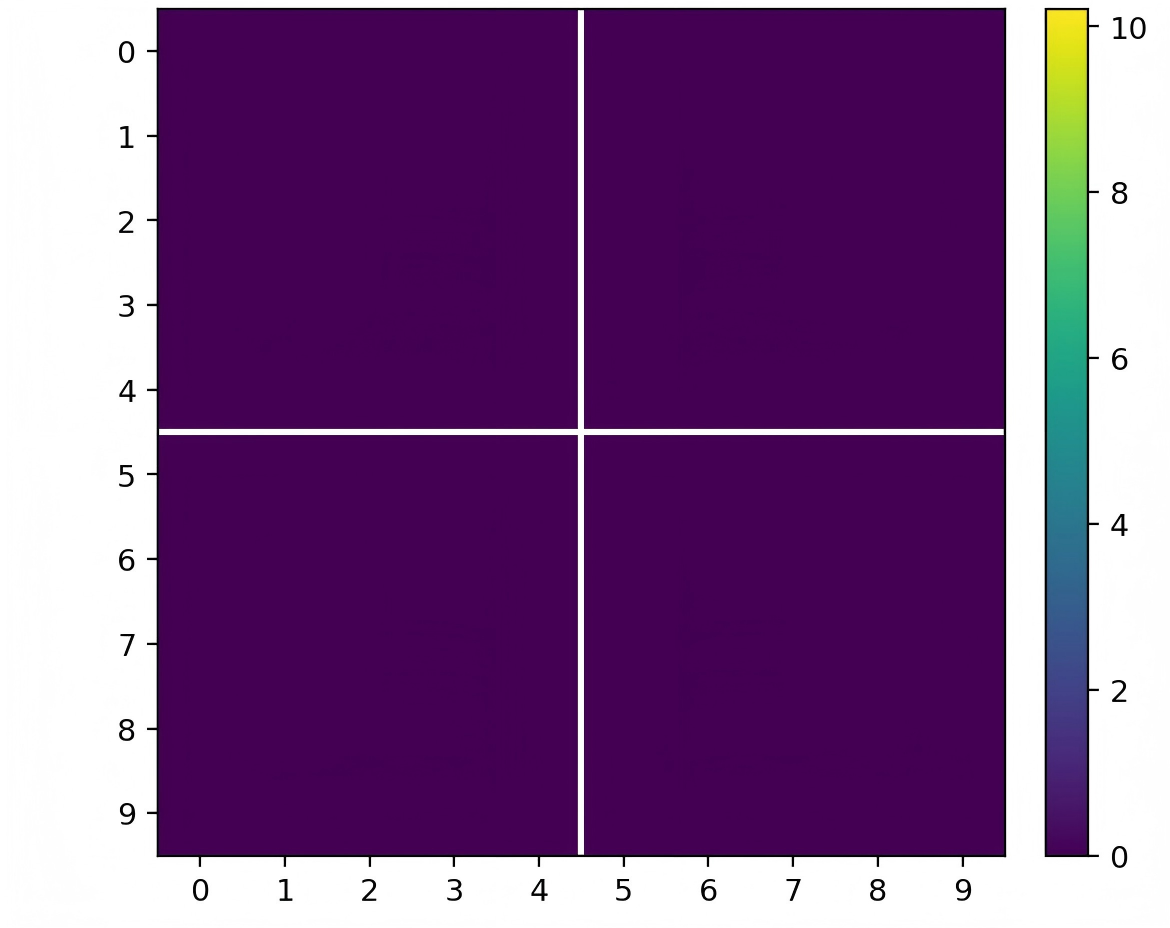}
    \vspace{-6pt}
    {\scriptsize\rmfamily
    \scalebox{0.92}{$
      \lvert\,\mathrm{D}^{-\frac{1}{2}}\mathrm{W}\mathrm{W}^{\mathsf{T}}\mathrm{D}^{-\frac{1}{2}}
      -\mathrm{c}^{\star}(\mathrm{D}^{-\frac{1}{2}}\mathrm{W})\,\Pi\,\rvert
    $}}
  \end{minipage}
\vspace{2pt}
\caption{
  \textbf{Scaling-sensitive Gram residuals under multiplicity-one imbalance.}
  Let $W\in\mathbb{R}^{K\times d}$ be the last-layer classifier weights ($K=10$) and $D=\mathrm{diag}(r_{1,1},\dots,r_{1,K})$ be the multiplicity-one class-count matrix.
  We visualize residuals of the Gram matrices after optimal scale alignment to the ETF-inspired centered template \(c\Pi\).
  The two panels plot absolute residuals (small scale) for $W$ and the theory-motivated scaling $D^{-\frac{1}{2}}W$.
  }
  \label{fig:scaling_residuals}
\end{figure}

\begin{figure}[t]
  \centering
  \includegraphics[width=\columnwidth]{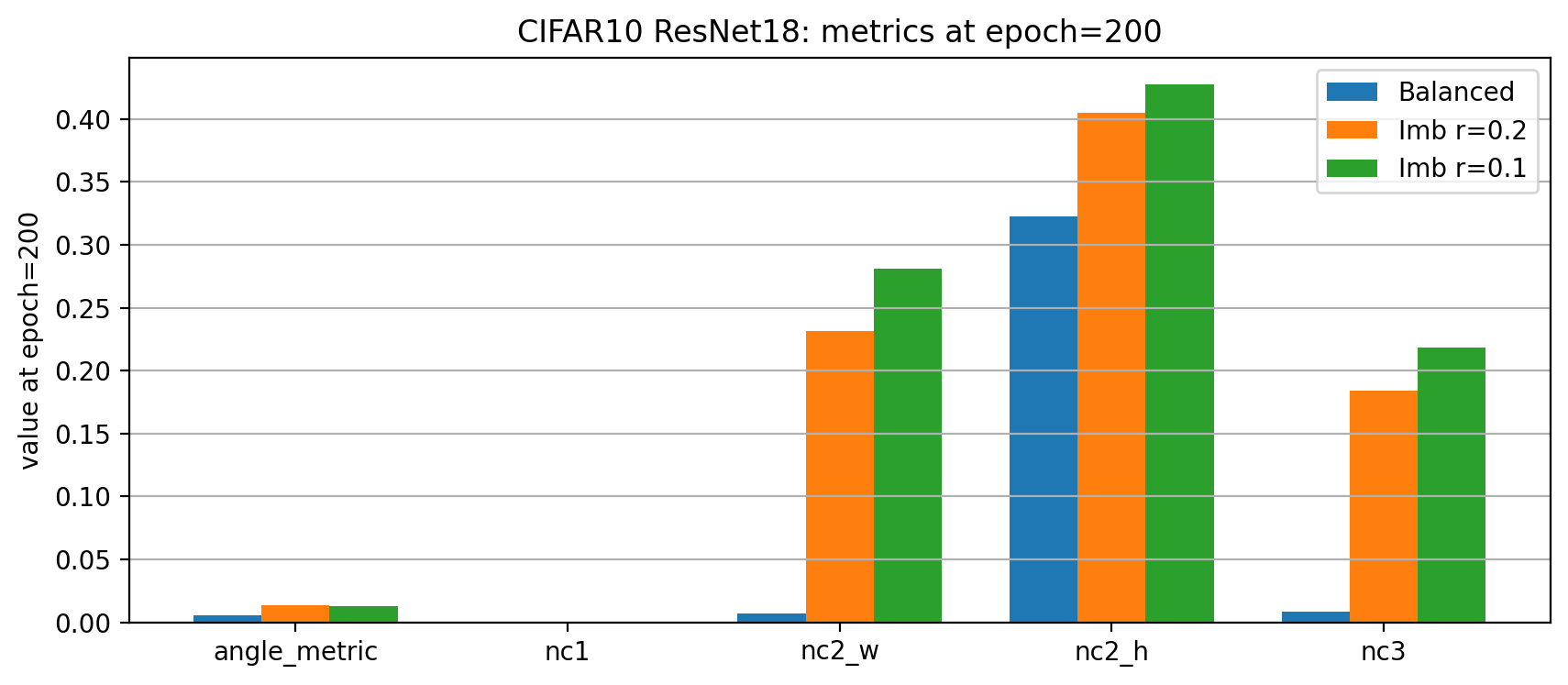}
  \caption{\textbf{Comparison of metrics for CIFAR10 (ResNet18) at epoch 200. }
  The bar chart compares three settings: Balanced, $r=0.2$, and $r=0.1$ (only the multiplicity-one subset is downsampled, while multiplicity-two samples remain unchanged). 
  Metrics shown include angle\_metric, NC1, NC2-W, NC2-H, and NC3.}
  \label{fig:c10_epoch200_bar}
\end{figure}

\begin{table}[t]
  \centering
  \caption{Scaling-sensitive Gram alignment. Smaller residual indicates closer agreement with the centered template.}
  \label{tab:scaling_check}
  \resizebox{\columnwidth}{!}{%
  \begin{tabular}{lcc}
    \toprule
    Scaling on $W$ & Best scale $c^{\star}$ & Residual $\lVert G(A)-c^{\star}\Pi\rVert_{F}$ \\
    \midrule
    $W$ & 41.63 & 36.14 \\
    $D^{-1/2}W\ (= W/\sqrt{r_{1}})$ & 0.0558 & 0.116 \\
    $D^{1/2}W\ (= \sqrt{r_{1}}\,W)$ & $8.52\times 10^{4}$ & $2.33\times 10^{5}$ \\
    \bottomrule
  \end{tabular}}
\end{table}

\subsection{Results}

\paragraph{Late-training geometry under multiplicity-one imbalance.}
Figure~\ref{fig:mnist_cifar10_4panel} compares the balanced baseline with multiplicity-one imbalance ratios $r=0.2$ and $r=0.1$ on MLab-MNIST (top) and MLab-CIFAR10 (bottom).
In the balanced setting, all metrics enter a stable late-training regime and approach small values.
When imbalance is confined to multiplicity one, the terminal geometry deviates systematically: \textsc{NC2} and \textsc{NC3} remain noticeably larger than the balanced baseline, and the deviation increases as $r$ decreases.
The \textsc{Angle} metric also shifts with $r$, indicating that multiplicity-two additivity is indirectly affected through the distorted multiplicity-one geometry; overall, imbalance encoded solely by $\{r_{1,k}\}$ leaves persistent and measurable signatures in the terminal-phase geometry.

\paragraph{Scaling-sensitive residuals.}
Let $W\in\mathbb{R}^{K\times d}$ denote the last-layer classifier weights (rows $w_k^\top$), and let $r_{1,k}$ be the multiplicity-one sample count of class $k$.
Define $D=\mathrm{diag}(r_{1,1},\dots,r_{1,K})$ and $\Pi=I-\frac{1}{K}\mathbf{1}\mathbf{1}^{\top}$.
For any matrix $A\in\mathbb{R}^{K\times d}$, consider the Gram matrix $G(A)=AA^{\top}$ and align it to the centered template $c\Pi$ via
\(
c^{\star}(A)=\arg\min_{c\in\mathbb{R}}\ \|\,G(A)-c\Pi\,\|_{F},
\)
\(
R(A)=G(A)-c^{\star}(A)\Pi.
\)
Figure~\ref{fig:scaling_residuals} compares the absolute residuals $|R(W)|$ and $|R(D^{-1/2}W)|$.
The residual becomes substantially smaller after applying the class-count scaling $D^{-1/2}$, indicating that multiplicity-one imbalance leaves a structured signature in $WW^{\top}$ that is largely explained by the $\sqrt{r_{1,k}}$ scaling.
This provides scaling-sensitive empirical evidence consistent with the count-dependent structure in Theorem~\ref{thm:structure-geometry}: imbalance enters the terminal geometry through the multiplicity-one count profile $\{r_{1,k}\}$, and the corresponding scaled variables yield a Gram structure closer to the centered template.

Figure~\ref{fig:c10_epoch200_bar} summarizes the terminal-phase behavior (epoch 200) on CIFAR10. Compared to the balanced baseline, restricting imbalance to the multiplicity-one subset leads to a clear increase in geometry-related metrics (e.g., NC2 and NC3), and the effect strengthens as $r$ decreases.

To connect this observation to the scaling structure predicted by our analysis, Table~\ref{tab:scaling_check} reports a scaling-sensitive Gram alignment test. For $A\in\{W, D^{-1/2}W, D^{1/2}W\}$, we consider $G(A)=AA^{\top}$ and align it to the centered template $c\Pi$ via
\[
c^{\star}(A)=\arg\min_{c\in\mathbb{R}}\|G(A)-c\Pi\|_{F}.
\]
The resulting residual $\|G(A)-c^{\star}(A)\Pi\|_{F}$ is minimized by the class-count scaling $D^{-1/2}W$, while the opposite scaling $D^{1/2}W$ amplifies the deviation, providing quantitative evidence that multiplicity-one imbalance enters the terminal geometry through the $\sqrt{r_{1,k}}$ scaling.

\section{Conclusion and Future Perspectives}

\subsection{Summary of Key Contributions}
This work advances the geometric theory of deep neural networks for multi-label classification by providing a unified account of Neural Collapse (NC) under correlation and imbalance. Our main contributions are:
(1) Resolving a core gap at the intersection of multi-label NC and imbalance.
We address the open issue emphasized in prior multi-label NC work \cite{li2023neural} by giving, to our knowledge, the first systematic structural characterization of the terminal phase under \emph{general, correlated, and imbalanced} label distributions, thereby extending NC theory toward practical multi-label regimes.
(2) A spectral framework with a scalar organizing invariant.
We introduce a rigorous spectral-control framework and the label covariance spectrum \( \kappa_m \) (from the label second-moment matrix) as a key scalar that controls the distribution-dependent lower-bound geometry.
(3) A generalized spectral-control principle.
We prove that terminal multi-label geometry is not governed by uniform tag-wise averaging in general; instead, the centered label covariance spectrum controls the stability of the geometry by quantifying the weakest centered inter-class contrast directions. The classical ``tag-wise average'' structure \cite{li2023neural} appears only as a special case under suitable label orthogonality and symmetry conditions.
(4) Connecting disparate NC phenomena.
Our framework recovers the balanced multi-label geometry of \cite{li2023neural} as a special case and is consistent with distorted geometries known in imbalanced multi-class settings \cite{Fang2021, Dang2023}, while explaining how multiplicity-one imbalance propagates to higher-multiplicity prototypes through explicit count-dependent weights.
(5) Empirical validation and practical insights.
Synthetic experiments validate our theoretical predictions and provide useful guidance for multi-label system design, including architecture choice, regularization, and augmentation.

\subsection{Theoretical and Practical Implications}
The spectral perspective identifies \textbf{spectral control} as an organizing principle behind NC in multi-label learning, bridging representation learning with classical spectral methods. Practically, \( \kappa_m \) serves as a quantitative diagnostic for dataset complexity and expected stability of terminal geometry, with implications for architecture design, regularization, and robust learning under imbalance.

\subsection{Limitations and Future Work}
Important directions include: the \emph{dimensional bottleneck} regime where feature dimension is below the effective rank, which may exacerbate minority-collapse effects \cite{Fang2021, Dang2023}; dynamic analyses explaining how the spectral-control quantities emerge during training; applications to large-scale real-world multi-label benchmarks such as COCO \cite{Lin2014}; and algorithmic developments (losses/regularizers/architectures) that explicitly leverage label-spectrum structure. A full angle-level characterization of multiplicity-one minority collapse under general imbalance remains an important direction for future work.

\subsection{Concluding Remarks}
By analyzing terminal feature geometry under general label correlation and imbalance, our spectral-control framework strengthens the theoretical foundations of NC and provides practical tools for multi-label learning in real-world settings.

\section*{Impact Statement}

This paper presents work whose goal is to advance the field of Machine
Learning. There are many potential societal consequences of our work, none
which we feel must be specifically highlighted here.

\nocite{langley00}

\bibliographystyle{icml2026}
\bibliography{NCPAL}

@inproceedings{langley00,
 author    = {P. Langley},
 title     = {Crafting Papers on Machine Learning},
 year      = {2000},
 pages     = {1207--1216},
 editor    = {Pat Langley},
 booktitle     = {Proceedings of the 17th International Conference
              on Machine Learning (ICML 2000)},
 address   = {Stanford, CA},
 publisher = {Morgan Kaufmann}
}

@article{Papyan2020,
  title={Prevalence of neural collapse during the terminal phase of deep learning training},
  author={Papyan, Vardan and Han, XY and Donoho, David L},
  journal={Proceedings of the National Academy of Sciences},
  volume={117},
  number={40},
  pages={24652--24663},
  year={2020}
}

@article{Zhu2021,
  title={A geometric analysis of neural collapse with unconstrained features},
  author={Zhu, Zhihui and Ding, Tianyu and Zhou, Jicong and Li, Xin and You, Chong and Sulam, Jeremias and Qu, Qing},
  journal={Advances in Neural Information Processing Systems},
  volume={34},
  pages={29820--29834},
  year={2021}
}

@article{Zhou2022a,
  title={On the optimization landscape of neural collapse under MSE loss: Global optimality with unconstrained features},
  author={Zhou, Jicong and Li, Xin and Ding, Tianyu and You, Chong and Qu, Qing and Zhu, Zhihui},
  journal={International Conference on Machine Learning},
  pages={27179--27202},
  year={2022}
}

@article{Mixon2022,
  title={Neural collapse with unconstrained features},
  author={Mixon, Dustin G and Parshall, Hans and Pi, Jianhao},
  journal={Sampling Theory, Signal Processing, and Data Analysis},
  volume={20},
  number={2},
  year={2022}
}

@article{Galanti2022b,
  title={On the role of neural collapse in transfer learning},
  author={Galanti, Tomer and Gy{\"o}rgy, Andras and Hutter, Marcus},
  journal={International Conference on Learning Representations},
  year={2022}
}

@article{Li2022,
  title={Principled and efficient transfer learning of deep models via neural collapse},
  author={Li, Xin and Liu, Shufei and Zhou, Jicong and Lu, Xiliang and Fernandez-Granda, Carlos and Zhu, Zhihui and Qu, Qing},
  journal={arXiv preprint arXiv:2212.12206},
  year={2022}
}

@article{Ji2022,
  title={An unconstrained layer-peeled perspective on neural collapse},
  author={Ji, Wenlong and Lu, Yaodong and Zhang, Yihao and Deng, Zhiquan and Su, Weijie J},
  journal={International Conference on Learning Representations},
  year={2022}
}

@article{Thrampoulidis2022,
  title={Imbalance trouble: Revisiting neural-collapse geometry},
  author={Thrampoulidis, Christos and Kini, Ganesh Ramachandra and Vakilian, Vahab and Behnia, Tina},
  journal={Advances in Neural Information Processing Systems},
  volume={35},
  pages={27225--27238},
  year={2022}
}

@article{Hong2023,
  title={Neural collapse for unconstrained feature model under cross-entropy loss with imbalanced data},
  author={Hong, Weizhi and Ling, Shi},
  journal={arXiv preprint arXiv:2309.09725},
  year={2023}
}

@article{Behnia2023,
  title={On the implicit geometry of cross-entropy parameterizations for label-imbalanced data},
  author={Behnia, Tina and Kini, Ganesh Ramachandra and Vakilian, Vahab and Thrampoulidis, Christos},
  journal={International Conference on Artificial Intelligence and Statistics},
  pages={10815--10838},
  year={2023}
}

@article{Menon2019,
  title={Multilabel reductions: what is my loss optimising?},
  author={Menon, Aditya Krishna and Rawat, Ankit Singh and Reddi, Sashank and Kumar, Sanjiv},
  journal={Advances in Neural Information Processing Systems},
  volume={32},
  year={2019}
}

@article{Liu2021,
  title={The emerging trends of multi-label learning},
  author={Liu, Wei and Wang, Huan and Shen, Xianfang and Tsang, Ivor W},
  journal={IEEE Transactions on Pattern Analysis and Machine Intelligence},
  volume={44},
  number={11},
  pages={7955--7974},
  year={2021}
}

@article{LeCun2015,
  title={Deep learning},
  author={LeCun, Yann and Bengio, Yoshua and Hinton, Geoffrey},
  journal={Nature},
  volume={521},
  number={7553},
  pages={436--444},
  year={2015}
}

@article{Bengio2013,
  title={Representation learning: A review and new perspectives},
  author={Bengio, Yoshua and Courville, Aaron and Vincent, Pascal},
  journal={IEEE Transactions on Pattern Analysis and Machine Intelligence},
  volume={35},
  number={8},
  pages={1798--1828},
  year={2013}
}

@article{Han2022,
  title={Neural collapse under MSE loss: Proximity to and dynamics on the central path},
  author={Han, XY and Papyan, Vardan and Donoho, David L},
  journal={International Conference on Learning Representations},
  year={2022}
}

@article{dang2024neural,
  title={Neural collapse for cross-entropy class-imbalanced learning with unconstrained relu feature model},
  author={Dang, Hien and Tran, Tho and Nguyen, Tan and Ho, Nhat},
  journal={arXiv preprint arXiv:2401.02058},
  year={2024}
}

@article{Zhou2022b,
  title={Are all losses created equal: A neural collapse perspective},
  author={Zhou, Jicong and You, Chong and Li, Xin and Liu, Kang and Liu, Sheng and Qu, Qing and Zhu, Zhihui},
  journal={Advances in Neural Information Processing Systems},
  volume={35},
  year={2022}
}

@inproceedings{li2023neural,
  title={Neural Collapse in Multi-label Learning with Pick-all-label Loss},
  author={Li, Pengyu and Li, Xiao and Wang, Yutong and Qu, Qing},
  booktitle={International Conference on Machine Learning},
  pages={28060--28094},
  year={2024},
  organization={PMLR}
}

@article{Fang2021,
  title={Exploring deep neural networks via layer-peeled model: Minority collapse in imbalanced training},
  author={Fang, Cong and He, Haoyu and Long, Qingcong and Su, Weijie J},
  journal={Proceedings of the National Academy of Sciences},
  volume={118},
  number={43},
  pages={e2103091118},
  year={2021}
}

@article{Dang2023,
  title={Neural collapse in deep linear network: From balanced to imbalanced data},
  author={Dang, Hien and Nguyen, Thanh and Tran, Tan and Tran, Hien and Ho, Nhat},
  journal={arXiv preprint arXiv:2301.00437},
  year={2023}
}

@article{Lin2014,
  title={Microsoft coco: Common objects in context},
  author={Lin, Tsung-Yi and Maire, Michael and Belongie, Serge and Hays, James and Perona, Pietro and Ramanan, Deva and Dollár, Piotr and Zitnick, C Lawrence},
  journal={European conference on computer vision},
  pages={740--755},
  year={2014},
  publisher={Springer}
}

@inproceedings{tirer2022extended,
  title     = {Extended Unconstrained Features Model for Exploring Deep Neural Collapse},
  author    = {Tirer, Tom and Bruna, Joan},
  booktitle = {International Conference on Machine Learning},
  pages     = {21478--21505},
  year      = {2022},
  month     = {June},
  publisher = {PMLR}
}

@article{sukenik2023deep,
  title     = {Deep Neural Collapse is Provably Optimal for the Deep Unconstrained Features Model},
  author    = {Súkeník, Peter and Mondelli, Marco and Lampert, Christoph H.},
  journal   = {Advances in Neural Information Processing Systems},
  volume    = {36},
  pages     = {52991--53024},
  year      = {2023}
}

@inproceedings{tirer2023perturbation,
  title     = {Perturbation Analysis of Neural Collapse},
  author    = {Tirer, Tom and Huang, Haozheng and Niles-Weed, Jonathan},
  booktitle = {International Conference on Machine Learning},
  pages     = {34301--34329},
  year      = {2023},
  month     = {July},
  publisher = {PMLR}
}

@misc{yang2022inducingneuralcollapseimbalanced,
      title={Inducing Neural Collapse in Imbalanced Learning: Do We Really Need a Learnable Classifier at the End of Deep Neural Network?}, 
      author={Yibo Yang and Shixiang Chen and Xiangtai Li and Liang Xie and Zhouchen Lin and Dacheng Tao},
      year={2022},
      eprint={2203.09081},
      archivePrefix={arXiv},
      primaryClass={cs.LG},
      url={https://arxiv.org/abs/2203.09081}, 
}

@article{dembczynski2012label,
  title={On label dependence and loss minimization in multi-label classification},
  author={Dembczy{\'n}ski, Krzysztof and Waegeman, Willem and Cheng, Weiwei and H{\"u}llermeier, Eyke},
  journal={Machine Learning},
  volume={88},
  number={1},
  pages={5--45},
  year={2012},
  publisher={Springer}
}

@InProceedings{pmlr-v19-gao11a,
  title = 	 {On the Consistency of Multi-Label Learning},
  author = 	 {Gao, Wei and Zhou, Zhi-Hua},
  booktitle = 	 {Proceedings of the 24th Annual Conference on Learning Theory},
  pages = 	 {341--358},
  year = 	 {2011},
  editor = 	 {Kakade, Sham M. and von Luxburg, Ulrike},
  volume = 	 {19},
  series = 	 {Proceedings of Machine Learning Research},
  address = 	 {Budapest, Hungary},
  month = 	 {09--11 Jun},
  publisher =    {PMLR},
  url = 	 {https://proceedings.mlr.press/v19/gao11a.html}
}

@misc{blondel2020learningfenchelyounglosses,
      title={Learning with Fenchel-Young Losses}, 
      author={Mathieu Blondel and André F. T. Martins and Vlad Niculae},
      year={2020},
      eprint={1901.02324},
      archivePrefix={arXiv},
      primaryClass={stat.ML},
      url={https://arxiv.org/abs/1901.02324}, 
}

@InProceedings{pmlr-v35-samei14,
  title = 	 {Sample Compression for Multi-label Concept Classes},
  author = 	 {Samei, Rahim and Semukhin, Pavel and Yang, Boting and Zilles, Sandra},
  booktitle = 	 {Proceedings of The 27th Conference on Learning Theory},
  pages = 	 {371--393},
  year = 	 {2014},
  editor = 	 {Balcan, Maria Florina and Feldman, Vitaly and Szepesvári, Csaba},
  volume = 	 {35},
  series = 	 {Proceedings of Machine Learning Research},
  address = 	 {Barcelona, Spain},
  month = 	 {13--15 Jun},
  publisher =    {PMLR},
  url = 	 {https://proceedings.mlr.press/v35/samei14.html}
}

@article{2014Generalizing,
  title={Generalizing Labeled and Unlabeled Sample Compression to Multi-label Concept Classes},
  author={ Samei, Rahim  and  Yang, Boting  and  Zilles, Sandra },
  journal={Springer, Cham},
  year={2014},
}

@misc{xu2014localrademachercomplexitymultilabel,
      title={Local Rademacher Complexity for Multi-label Learning}, 
      author={Chang Xu and Tongliang Liu and Dacheng Tao and Chao Xu},
      year={2014},
      eprint={1410.6990},
      archivePrefix={arXiv},
      primaryClass={stat.ML},
      url={https://arxiv.org/abs/1410.6990}, 
}

@misc{reeve2020optimisticboundsmultioutputprediction,
      title={Optimistic bounds for multi-output prediction}, 
      author={Henry WJ Reeve and Ata Kaban},
      year={2020},
      eprint={2002.09769},
      archivePrefix={arXiv},
      primaryClass={stat.ML},
      url={https://arxiv.org/abs/2002.09769}, 
}

@inproceedings{2010Bayes,
  title={Bayes optimal multilabel classification via probabilistic classifier chains},
  author={Cheng, Weiwei and H{\"u}llermeier, Eyke and Dembczynski, Krzysztof J},
  booktitle={Proceedings of the 27th international conference on machine learning (ICML-10)},
  pages={279--286},
  year={2010}
}

@misc{chang2020tamingpretrainedtransformersextreme,
      title={Taming Pretrained Transformers for Extreme Multi-label Text Classification}, 
      author={Wei-Cheng Chang and Hsiang-Fu Yu and Kai Zhong and Yiming Yang and Inderjit Dhillon},
      year={2020},
      eprint={1905.02331},
      archivePrefix={arXiv},
      primaryClass={cs.LG},
      url={https://arxiv.org/abs/1905.02331}, 
}

@misc{ridnik2021mldecoderscalableversatileclassification,
      title={ML-Decoder: Scalable and Versatile Classification Head}, 
      author={Tal Ridnik and Gilad Sharir and Avi Ben-Cohen and Emanuel Ben-Baruch and Asaf Noy},
      year={2021},
      eprint={2111.12933},
      archivePrefix={arXiv},
      primaryClass={cs.CV},
      url={https://arxiv.org/abs/2111.12933}, 
}

@misc{yu2020learningdiversediscriminativerepresentations,
      title={Learning Diverse and Discriminative Representations via the Principle of Maximal Coding Rate Reduction}, 
      author={Yaodong Yu and Kwan Ho Ryan Chan and Chong You and Chaobing Song and Yi Ma},
      year={2020},
      eprint={2006.08558},
      archivePrefix={arXiv},
      primaryClass={cs.LG},
      url={https://arxiv.org/abs/2006.08558}, 
}

@misc{chan2021redunetwhiteboxdeepnetwork,
      title={ReduNet: A White-box Deep Network from the Principle of Maximizing Rate Reduction}, 
      author={Kwan Ho Ryan Chan and Yaodong Yu and Chong You and Haozhi Qi and John Wright and Yi Ma},
      year={2021},
      eprint={2105.10446},
      archivePrefix={arXiv},
      primaryClass={cs.LG},
      url={https://arxiv.org/abs/2105.10446}, 
}

@misc{LuSteinerberger2020NeuralCollapseCE,
  title         = {Neural Collapse with Cross-Entropy Loss},
  author        = {Lu, Jianfeng and Steinerberger, Stefan},
  year          = {2020},
  eprint        = {2012.08465},
  archivePrefix = {arXiv},
  primaryClass  = {cs.LG},
  doi           = {10.48550/arXiv.2012.08465},
  url           = {https://arxiv.org/abs/2012.08465}
}

@inproceedings{PilanciErgen2020ConvexRegularizers,
  title     = {Neural Networks are Convex Regularizers: Exact Polynomial-time Convex Optimization Formulations for Two-layer Networks},
  author    = {Pilanci, Mert and Ergen, Tolga},
  booktitle = {Proceedings of the 37th International Conference on Machine Learning},
  series    = {Proceedings of Machine Learning Research},
  volume    = {119},
  pages     = {7695--7705},
  year      = {2020},
  publisher = {PMLR},
  url       = {https://proceedings.mlr.press/v119/pilanci20a.html}
}

@inproceedings{RangamaniBanburskiFahey2022WeightDecay,
  title     = {Neural Collapse in Deep Homogeneous Classifiers and the Role of Weight Decay},
  author    = {Rangamani, Akshay and Banburski-Fahey, Andrzej},
  booktitle = {ICASSP 2022 -- 2022 IEEE International Conference on Acoustics, Speech and Signal Processing (ICASSP)},
  pages     = {4243--4247},
  year      = {2022},
  publisher = {IEEE}
}

@inproceedings{KothapalliTirerBruna2023GNN,
  title     = {A Neural Collapse Perspective on Feature Evolution in Graph Neural Networks},
  author    = {Kothapalli, Vignesh and Tirer, Tom and Bruna, Joan},
  booktitle = {Advances in Neural Information Processing Systems},
  volume    = {36},
  year      = {2023}
}

@inproceedings{Kang2020Decoupling,
  title     = {Decoupling Representation and Classifier for Long-Tailed Recognition},
  author    = {Kang, Bingyi and Xie, Saining and Rohrbach, Marcus and Yan, Zhicheng and Gordo, Albert and Feng, Jiashi and Kalantidis, Yannis},
  booktitle = {International Conference on Learning Representations},
  year      = {2020},
  url       = {https://openreview.net/forum?id=r1gRTCVFvB}
}

@inproceedings{Cao2019LDAM,
  title     = {Learning Imbalanced Datasets with Label-Distribution-Aware Margin Loss},
  author    = {Cao, Kaidi and Wei, Colin and Gaidon, Adrien and Arechiga, Nikos and Ma, Tengyu},
  booktitle = {Advances in Neural Information Processing Systems},
  volume    = {32},
  pages     = {1567--1578},
  year      = {2019},
  url       = {https://arxiv.org/abs/1906.07413}
}

\newpage
\appendix
\onecolumn

\section{Proof sketches}
The two main theorems are proved along two complementary threads.
Theorem~\ref{thm:lower-bound-lemma2-form} provides a distribution-aware attainability bound (a quantitative statement),
whereas Theorem~\ref{thm:structure-geometry} extracts the terminal geometry by tracking the equality conditions (a qualitative statement).
At a high level, the proof first reduces the PAL-CE nonlinearity to an additively separable linear form, then compresses all
distributional difficulty into spectral and counting quantities, and finally recovers the structural terminal relations by enforcing equality
across the entire chain of inequalities.

\smallskip
\noindent\textbf{(I) Proof sketch of Theorem~\ref{thm:lower-bound-lemma2-form} (Appendix: Lemma~\ref{lem:lemma2_lb}).}
Fix arbitrary positive constants $\{c_{1,m}\}_{m=1}^M$.
The starting point is an affine (linear-plus-constant) pointwise lower bound for the PAL loss at each multiplicity level:
for any $S\in\mathcal S_m$ and any logits vector $z\in\mathbb R^K$,
Lemma~\ref{lem:your-lemma} shows
\(
\mathcal L_{\mathrm{PAL}}(z,y_S)
\;\ge\;
\gamma_{1,m}\Bigl\langle \mathbf 1-\frac{K}{m}\mathbf 1_S,\;z\Bigr\rangle
+
c_{2,m},
\)
\(
\gamma_{1,m}=\frac{1}{1+c_{1,m}}\cdot\frac{m}{K-m}.
\)
Averaging over samples and summing over multiplicities yields the shifted objective $g(WH)-\Gamma_2$ with
$\Gamma_2=\frac{1}{N}\sum_{m=1}^M N_m c_{2,m}$.

Next, one rewrites the aggregated linear term at a fixed multiplicity $m$ by exchanging summations and using
$\sum_{j=1}^K \mathbf 1_S(j)=m$.
This collapses the combinatorial $(S,i)$-sum into a $K$-term expression involving group-wise means
(cf.\ the identities leading to Lemma~\ref{lem:S5_theta_H_interface}):
\(
\sum_{k=1}^K \Bigl(\bar h_m-\frac{K}{m}\bar h_m^{\,k}\Bigr)^\top w_k.
\)
Applying Young's inequality with a tunable constant $c_{3,m}>0$ converts each inner product into a sum of quadratic terms,
separating the dependence on $W$ and on the mean-difference vectors.
Collecting these vectors into the matrix $\Theta_m$, where \(
\Theta_m(j,:) := \Bigl(\bar h_m-\frac{K}{m}\bar h_m^{\,j}\Bigr)^{\top},
\)
\(j=1,\dots,K
\) (see Lemma~\ref{lem:S5_theta_H_interface}), yields a term proportional to $\|\Theta_m\|_F^2$.

The spectral route then enters.
We show that $\Theta_m$ is centered in the class direction (equivalently $\Pi\Theta_m=\Theta_m$),
allowing one to work entirely on $\mathrm{range}(\Pi)$.
Then the  matrix-form spectral lower bound implies that the centered label covariance controls the mean-difference energy:
\(
\|\Theta_m\|_F^2
\;\le\;
\frac{1}{\kappa_m}\,
\mathrm{Tr}\!\bigl(\Theta_m^\top (\Pi G_m\Pi)\Theta_m\bigr).
\)
Combining this with the trace/Frobenius reductions used in produces the factor $\frac{m(K-m)}{K}$,
and Lemma~\ref{lem:S5_theta_H_interface} further bounds $\|\Theta_m\|_F^2$ by a data-dependent coefficient times the feature energy
$\sum_{S,i}\|h_{m,S,i}\|_2^2$, where the worst-set term
$\max_{S\in\mathcal S_m}\sum_{j\in S}1/N_m^j$ quantifies within-group rarity amplification.

Finally, one balances the two Young terms by choosing $c_{3,m}$ (producing the square-root structure in $A_m$),
sums over $m$, and closes the bound using the critical-point scaling identity (Lemma~\ref{lem:a1}),
$\|H\|_F^2=(\lambda_W/\lambda_H)\|W\|_F^2$, so that all feature energies are expressed on the single scale
$\rho=\|W\|_F^2$.
This yields the stated lower bound in Theorem~\ref{thm:lower-bound-lemma2-form} .

\smallskip
\noindent\textbf{(II) Proof sketch of Theorem~\ref{thm:structure-geometry} (Appendix: Lemma~\ref{lem3})}
The structural theorem is obtained by enforcing equality throughout the inequality chain underlying Theorem~\ref{thm:lower-bound-lemma2-form}.
Since $(W,H)$ is a global minimizer, it must attain the tightness conditions of the PAL affine bound (Lemma~\ref{lem:your-lemma}) and the auxiliary
inequalities used in Lemma~\ref{lem:lemma2_lb}.

First, the classifier-centering phenomenon follows from eliminating the redundant $\mathbf 1$-direction in logits:
projecting onto $\mathrm{range}(\Pi)$ does not worsen the loss while it does not increase the $\ell_2$ penalty,
so any global minimizer may be taken to satisfy $\Pi W=W$, equivalently $\sum_{k=1}^K w_k=0$
(see the centering part of Lemma~\ref{lem3}).

Second, within-group collapse is obtained by symmetry at fixed $(m,S)$ together with the strict convexity induced by $\ell_2$
regularization: samples sharing the same label set contribute identically, and the minimizer assigns them a common prototype,
yielding $h_{m,S,i}=h_{m,S}$ (Lemma~\ref{lem3}).

Third, at multiplicity one, the two-level tightness condition of the PAL affine bound, together with permutation symmetry, shift invariance, and the first-order optimality condition with respect to \(h_{1,k}\), yields the centered self-duality relation
\(\widehat h_{1,k}=C_1w_k\), equivalently
\(\widehat h_{1,k}=C_1\sqrt{r_{1,k}}u_k\).
This identifies the multiplicity-one directions as the basic building blocks. We do not interpret this as asserting that the rescaled vectors \(\{u_k\}_{k=1}^K\) form an exact centered ETF under general multiplicity-one imbalance.

Finally, for $m\ge 2$, the same equality mechanism must hold \emph{simultaneously} across multiplicities under a shared classifier $W$.
Compatibility of the two-level logit structure with the already fixed multiplicity-one directions forces every higher-multiplicity prototype
to lie in the span of $\{u_k\}$ and yields the synthesis rule
\(
h_{m,S}=C_m\sum_{k\in S}\sqrt{r_{1,k}}\,u_k,
\)
which completes Theorem~\ref{thm:structure-geometry} (proved in Lemma~\ref{lem3}).

\smallskip
\noindent\textbf{Summary of the guiding ideas.}
The proof is driven by two principles: (a) reduce the PAL-CE nonlinearity to a linear, additively separable form (Lemma~\ref{lem:your-lemma}),
so that combinatorial sums collapse to a small set of mean-difference terms; (b) compress general correlation and imbalance into
spectral and counting controls via centering and a Rayleigh/trace route, thereby introducing $\kappa_m$ and the worst-set rarity term.
The structural terminal relations are then recovered by tracking equality conditions across the inequality chain.

\section{Extended Interpretations of the Main Theorems}
\label{app:interpretation}

\subsection{Scope and Organization}
\label{app:interpretation:scope}
This appendix provides extended interpretation and intuition for our two main theorems.
To meet the main-body page limit, the remarks following Theorem~\ref{thm:lower-bound-lemma2-form} and Theorem~\ref{thm:structure-geometry}
are intentionally brief. Here we expand on their motivation, clarify how the key constants and statistics control the bounds and terminal geometry,
and discuss representative edge cases that help connect the theory to practice.

The appendix is organized as follows.
Section~\ref{app:interpretation:thm1} expands the interpretation of Theorem~\ref{thm:lower-bound-lemma2-form} (the distribution-aware attainability bound),
including the role of the shift $g(WH)-\Gamma_2$, the tuning family $\{c_{1,m}\}$, and the distribution-sensitive control quantity $A_m$
with its spectral constant $\kappa_m$ and worst-set rarity amplification term.
Section~\ref{app:interpretation:thm2} expands the interpretation of Theorem~\ref{thm:structure-geometry} (terminal structure and geometry),
highlighting the centering--collapse--self-duality--generation chain, its connection to the multiplicity-one imbalance conjecture in \citet{li2023neural},
and the reduction to the classical tag-wise averaging picture in the balanced special case.

\subsection{Extended Interpretation of Theorem~\ref{thm:lower-bound-lemma2-form}}
\label{app:interpretation:thm1}

Theorem~\ref{thm:lower-bound-lemma2-form} is stated in terms of the shifted quantity $g(WH)-\Gamma_2$ rather than $g(WH)$.
This is not cosmetic: the proof begins with an affine (linear-plus-constant) pointwise lower bound of the PAL loss at each
multiplicity level. After averaging over samples, the intercept terms inevitably accumulate, and their weighted aggregate is
precisely $\Gamma_2$. Subtracting $\Gamma_2$ removes this unavoidable constant shift and isolates the part of the empirical risk
whose behavior is genuinely governed by geometry, distributional complexity, and regularization.

A key technical knob in the affine bound is the collection of positive constants $\{c_{1,m}\}_{m=1}^M$.
For each $m\in\{1,\dots,M\}$, $c_{1,m}>0$ is chosen \emph{arbitrarily}, and the family
$(c_{1,1},c_{1,2},\dots,c_{1,M})$ should be viewed as $M$ independent parameters (one per multiplicity level), rather than a
single global constant. Their role is the same as the familiar ``$c_1$'' in \cite{Zhu2021} or ``$t_k$'' in \cite{dang2024neural} parameters in related NC analyses:
they generate a \emph{family} of valid affine lower bounds, and in principle one may further optimize over them to tighten the final bound.

The slope of the affine PAL lower bound is
\(
\gamma_{1,m}
=
\frac{1}{1+c_{1,m}}\cdot\frac{m}{K-m},
\)
which depends only on $(K,m,c_{1,m})$ and is independent of the data distribution.
In the derivation of the PAL affine bound, $\gamma_{1,m}$ is calibrated so that the first-order (Taylor) bound admits a uniform
linear form in the contrast direction $\mathbf{1}-\tfrac{K}{m}\mathbf{1}_S$.
The prefactor $\tfrac{1}{1+c_{1,m}}$ plays the same tightness-control role as in the single-label CE lower bound (e.g., \cite{Zhu2021}),
while $\tfrac{m}{K-m}$ is the multi-label generalization of the familiar $\tfrac{1}{K-1}$ factor, reflecting the in-/out-group sizes.

The corresponding intercept $c_{2,m}$ is likewise distribution-free, depending only on $(K,m,c_{1,m})$.
Aggregating these intercepts yields
\(
\Gamma_2=\frac{1}{N}\sum_{m=1}^M N_m\,c_{2,m}.
\)
Thus, once $(K,M)$ and the chosen $\{c_{1,m}\}$ are fixed, $\Gamma_2$ acts as a comparatively stable shift:
its data dependence enters only through the multiplicity counts $\{N_m\}$ as weights, and it does not carry the distribution-sensitive
geometry that the theorem is designed to expose.

The distribution-sensitive content is concentrated in $A_m$. 
Three features are worth emphasizing.
First, $A_m$ grows as $\kappa_m$ decreases: the factor $1/\kappa_m$ captures spectral near-degeneracy on centered contrast directions.
Small $\kappa_m$ is not merely ``few samples''; it indicates that the centered co-occurrence structure fails to excite some contrast direction,
so any spectral lower bound necessarily becomes ill-conditioned, and the overall bound deteriorates accordingly.
Second, $A_m$ decreases with $N_m$, reflecting the benefit of having more samples in multiplicity $m$.
Third, within-group rarity enters through the worst-set term
\[
\max_{S\in\mathcal S_m}\sum_{j\in S}\frac{1}{N_m^j},
\]
which selects a label set containing rare labels and accumulates their rarity linearly. Hence, even when $N_m$ is large, a very small
$N_m^j$ can still significantly enlarge $A_m$ via the worst-case set; this provides a direct statistical mechanism by which long-tailed
labels are amplified at the combinatorial level. This ``worst-set amplification'' parallels the frequency-driven scaling effects observed
in imbalanced NC analyses for multi-class learning (e.g., \cite{dang2024neural}), where imbalance enters the optimal value/geometry through explicit
count statistics rather than a mere constant shift.

The remaining factor on the right-hand side,
\(
\sqrt{\frac{\lambda_W}{\lambda_H}}\,\rho, 
\)
\(
\rho=\|W\|_F^2,
\)
is an energy scale induced by $\ell_2$ regularization and closed via the critical-point scaling identity
$\|H\|_F^2=(\lambda_W/\lambda_H)\|W\|_F^2$ (see Lemma~\ref{lem:a1} for proof).
This closure is what makes the cross-multiplicity aggregation possible and also explains why $A_m$ is defined to avoid any additional
dependence on $\rho$.

Putting these pieces together, the inequality
\[
g(WH)-\Gamma_2
\ge
-\frac{1}{N}\sum_{m=1}^M N_m\,\gamma_{1,m}\,A_m\,
\sqrt{\frac{\lambda_W}{\lambda_H}}\,
\rho
\]
should be read as a distribution-aware attainability bound: after removing the distribution-free affine shift $\Gamma_2$,
the optimal risk cannot drop arbitrarily; its worst-case negative deviation is governed by the tunable affine slopes
$\{\gamma_{1,m}\}$ and, more importantly, by the distributional complexity encoded in $(\kappa_m,\{N_m^j\})$ through $A_m$.
Mathematically, this bound compresses correlation and imbalance into computable scalar controls.
From a data perspective, it implies that even with large overall sample sizes, near-degeneracy of centered co-occurrence
(small $\kappa_m$) or extreme within-multiplicity rarity (small $N_m^j$) can substantially weaken provable guarantees and
destabilize the terminal geometry.

\subsection{Extended Interpretation of Theorem~\ref{thm:structure-geometry}}
\label{app:interpretation:thm2}

Theorem~\ref{thm:structure-geometry} summarizes the terminal geometry exhibited by any global minimizer $(W,H)$ in the
(neural-collapse) regime. The four statements are best read as four facets of a single terminal structure: a centering
geometry for the classifier, a collapse to label-set prototypes, a centered self-duality relation at multiplicity one, and a
generation law that produces all higher-multiplicity prototypes from the multiplicity-one directions.

The classifier-centering phenomenon,
\[
\sum_{k=1}^K w_k=0,
\qquad
\Pi W=W,
\]
indicates that the terminal discriminative geometry lives in the centered subspace. The $\mathbf{1}$-direction corresponds to a
global shift of logits and does not affect relative comparisons; removing this redundant degree of freedom makes the relevant
geometric relations intrinsically about centered contrast directions.

The within-group collapse phenomenon,
\[
h_{m,S,i}=h_{m,S},
\qquad
i=1,\dots,r_{m,S},
\]
is the multi-label analogue of variability collapse, but at the natural granularity of label sets: within a fixed multiplicity
$m$, the label set $S$ is the relevant grouping, and the terminal representation assigns a single prototype $h_{m,S}$ to each
group.

Turning to multiplicity one, the theorem identifies a centered self-duality relation between the multiplicity-one prototypes
and the classifier directions. Define
\[
\widehat h_{1,k}
=
h_{1,k}
-
\frac{1}{K}\sum_{\ell=1}^K h_{1,\ell},
\qquad
u_k=\frac{w_k}{\sqrt{r_{1,k}}}.
\]
Then statement (3) gives
\[
\widehat h_{1,k}=C_1 w_k,
\qquad\text{equivalently}\qquad
\widehat h_{1,k}=C_1\sqrt{r_{1,k}}\,u_k.
\]
Thus, after centering, each multiplicity-one prototype is aligned with its corresponding classifier direction, while the
multiplicity-one count profile enters explicitly through the factor $\sqrt{r_{1,k}}$ in the imbalance-aware coordinates.
More precisely, statement (3) identifies the multiplicity-one directions as the basic building blocks, and statement (4)
shows that the higher-multiplicity geometry is generated from them through explicit count-dependent weights. We do not
interpret this as asserting that the rescaled vectors $\{u_k\}_{k=1}^K$ necessarily form an exact centered ETF under general
multiplicity-one imbalance.

The most distinctive outcome is the label-set generation law for $m\ge 2$,
\[
h_{m,S}=C_{m}\sum_{k\in S}\sqrt{r_{1,k}}\,u_k
\qquad
\left(
\text{equivalently, } h_{m,S}=C_{m}\sum_{k\in S} w_k
\right).
\]
This identity is more precise than a naive averaging statement: it expresses every higher-multiplicity prototype as a linear
combination of the multiplicity-one directions, with explicit count-dependent weights determined by
$\{\sqrt{r_{1,k}}\}$. Thus, the geometry of terminal higher-multiplicity is anchored by the multiplicity-one configuration and
is propagated to all $m\ge 2$ through a single synthesis rule, with multiplicity-one imbalance acting as the structural driver
of the combination weights.

This view of synthesis directly addresses the conjectured imbalance regime discussed by previous work~\cite{li2023neural}. In
their conclusion, they highlighted the challenging case where multiplicity-one training data are imbalanced and conjectured
that a scaled-average relationship should still persist between higher-multiplicity features and their multiplicity-one
features. Theorem~\ref{thm:structure-geometry} confirms this picture in a provable form and makes the imbalance dependence
explicit: uniform averaging is replaced by the $\sqrt{r_{1,k}}$-weighted generation law above.

Finally, the theorem recovers the symmetric within-multiplicity setting of~\cite{li2023neural} as a special case. When, for
each $m$, the counts are uniform within $\mathcal S_m$ (i.e., $r_{m,S}\equiv n_m$) and multiplicity one is balanced (so
$r_{1,k}$ does not vary with $k$), the weights $\sqrt{r_{1,k}}$ become constant and the generation law reduces to an
unweighted tag-wise summation (and, after normalization, to the scaled-average structure in~\cite{li2023neural}). Hence
Theorem~\ref{thm:structure-geometry} reproduces the balanced-within-multiplicity picture while extending it to the regime of
multiplicity-one imbalance and more general within-multiplicity non-uniformity.

\section{Proofs of the Main Results}
\label{appendix1}

This appendix provides the complete proofs of the main results stated in the paper.
In particular, the two lemmas below are the \emph{formal, technical} versions of the two main theorems presented in the main text:
they make all assumptions explicit, specify constants precisely, and are stated in a form that is directly amenable to the step-by-step proof arguments.
Theorems~\ref{thm:lower-bound-lemma2-form} and~\ref{thm:structure-geometry} in the main text follow immediately from Lemmas~\ref{lem:lemma2_lb} and~\ref{lem3}, respectively,
by straightforward specialization and notational simplification.

\begin{lemma}[Lower bound via $\kappa_m$ and the $\Theta$--$H$ interface]\label{lem:lemma2_lb}
Let $(W,H)$ be a critical point of the objective in Objective~\ref{eq:objective}.
Fix any $c_{1,m}>0$ and define
\(
\gamma_{1,m}:=\frac{1}{1+c_{1,m}}\cdot\frac{m}{K-m}.
\)
Let
\(
\Pi:=I_K-\frac{1}{K}\mathbf{1}\mathbf{1}^{\top},
G_m:=\mathbb{E}_{S\sim p_m}\!\left[y_Sy_S^{\top}\right],
\)
and define the weighted spectral constant
\(
\kappa_m := \lambda_{\min}\Bigl( (\Pi G_m \Pi)\big|_{\mathrm{range}(\Pi)} \Bigr),\;
\rho:=\|W\|_F^2,\;
and \; \Gamma_2:=\frac{1}{N}\sum_{m=1}^{M}N_m\,c_{2,m}
\),
where $c_{2,m}$ depends only on $(K,m,c_{1,m})$ as in Lemma~\ref{lem:your-lemma}.
Meanwhile, \(\kappa_m\) satisfies Assumption~\ref{assump:nondeg-centered}. 
Then
\[
g(WH)-\Gamma_2
\ \ge\
-\frac{1}{N}\sum_{m=1}^{M} N_m\,\gamma_{1,m}\,A_m\,\sqrt{\frac{\lambda_W}{\lambda_H}}\;\rho,
\]
where
\[
A_m:=
\sqrt{
\frac{1}{\kappa_m}\cdot \frac{m(K-m)}{K}\cdot
\Biggl[
\frac{2K}{N_m}
+
\frac{2K^{2}}{m^{2}}
\Biggl(
\max_{|S|=m}\sum_{j=1}^{K}\frac{I_S(j)}{N_m^{j}}
\Biggr)
\Biggr]
}.
\]
Moreover, $\Gamma_2$ depends on $\{c_{1,m}\}_{m=1}^{M}$ because $\{c_{2,m}\}_{m=1}^{M}$  depends on $\{c_{1,m}\}_{m=1}^{M}$ for each m.
\end{lemma}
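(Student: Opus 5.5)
We follow the route outlined in the proof sketch (I) above. \textbf{Step 1 (pointwise affine bound).} For each $m$ and each $S\in\mathcal S_m$ we apply the PAL affine bound of Lemma~\ref{lem:your-lemma}:
\[
\mathcal L_{\mathrm{PAL}}(z,y_S)\ge \gamma_{1,m}\Bigl\langle \mathbf 1-\tfrac{K}{m}\mathbf 1_S,\,z\Bigr\rangle + c_{2,m}.
\]
We plug in $z=Wh_{m,S,i}$, average with weight $1/N$, and sum over $(m,S,i)$. The intercepts aggregate to exactly $\Gamma_2$. The linear part at level $m$ equals $\frac{\gamma_{1,m}}{N}\sum_{S,i}\sum_k(1-\tfrac{K}{m}\mathbf 1_S(k))\,w_k^\top h_{m,S,i}$.

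\textbf{Step 2 (collapse to group means).} We exchange the order of summation. Write $\bar h_m:=\frac1{N_m}\sum_{S,i}h_{m,S,i}$ and $\bar h_m^{\,k}:=\frac1{N_m}\sum_{S\ni k,i}h_{m,S,i}$; the normalization of $\bar h_m^{\,k}$ is chosen to match Lemma~\ref{lem:S5_theta_H_interface}. The linear term then becomes
\[
\frac{N_m\gamma_{1,m}}{N}\sum_k \Bigl(\bar h_m-\tfrac{K}{m}\bar h_m^{\,k}\Bigr)^\top w_k=\frac{N_m\gamma_{1,m}}{N}\,\mathrm{Tr}(\Theta_m W^\top).
\]
It therefore suffices to show $\mathrm{Tr}(\Theta_mW^\top)\ge -\|\Theta_m\|_F\|W\|_F$. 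This follows from Cauchy--Schwarz, or equivalently from Young's inequality with a free $c_{3,m}$ that is optimized at the end; this optimization is what produces the square root in $A_m$.

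\textbf{Step 3 (spectral control of $\|\Theta_m\|_F$, the main obstacle).} First, we verify $\mathbf 1^\top\Theta_m=0$. Indeed, $\sum_k\bar h_m^{\,k}=\frac{1}{N_m}\sum_{S,i}m\,h_{m,S,i}=m\bar h_m$, so $\sum_k(\bar h_m-\frac Km\bar h_m^{\,k})=K\bar h_m-K\bar h_m=0$, and hence $\Pi\Theta_m=\Theta_m$. Each column of $\Theta_m$ then lies in $\mathrm{range}(\Pi)$, and the Rayleigh characterization of $\kappa_m$ gives
\[
\|\Theta_m\|_F^2\le \kappa_m^{-1}\mathrm{Tr}\bigl(\Theta_m^\top\Pi G_m\Pi\Theta_m\bigr)=\kappa_m^{-1}\,\mathbb E_{S\sim p_m}\bigl\|\Theta_m^\top\Pi y_S\bigr\|_2^2 .
\]
We expand $\Pi y_S=\mathbf 1_S-\frac mK\mathbf 1$ and use $\|\mathbf 1_S-\frac mK\mathbf 1\|^2=\frac{m(K-m)}{K}$ to pull out this factor. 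The remaining factor is bounded by a Cauchy--Schwarz over $j\in S$ and over the samples composing each $\bar h_m^{\,j}$. This bound is the content of the $\Theta$--$H$ interface (Lemma~\ref{lem:S5_theta_H_interface}), which controls the relevant energy by
\[
\Bigl[\tfrac{2K}{N_m}+\tfrac{2K^2}{m^2}\max_{|S|=m}\sum_j \tfrac{I_S(j)}{N_m^j}\Bigr]\sum_{S,i}\|h_{m,S,i}\|^2,
\]
where the factor $2$ comes from $\|a-b\|^2\le 2\|a\|^2+2\|b\|^2$ applied to the two pieces $\bar h_m$ and $\frac Km\bar h_m^{\,j}$. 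The delicate point is to route the inequalities so that the $\kappa_m$ step and the interface step compose with exactly these constants. If a direct composition loses constants, I would instead bound $\|\Theta_m\|_F^2$ through the interface lemma alone and insert the ratio $\mathrm{Tr}(\Theta_m^\top\Pi G_m\Pi\Theta_m)/\kappa_m$ only as an upper estimate.

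\textbf{Step 4 (closing the bound).} Combining the previous steps gives
\[
g(WH)-\Gamma_2\ge-\frac1N\sum_m N_m\gamma_{1,m}A_m\|H_m\|_F\|W\|_F,
\]
where $H_m$ collects the multiplicity-$m$ features. We then use $\|H_m\|_F\le\|H\|_F$ together with the critical-point identity $\|H\|_F^2=\frac{\lambda_W}{\lambda_H}\|W\|_F^2$ from Lemma~\ref{lem:a1}. This yields $\|H_m\|_F\|W\|_F\le\sqrt{\lambda_W/\lambda_H}\,\rho$, which is the claimed inequality.
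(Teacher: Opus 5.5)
Your proposal is correct and follows essentially the same route as the paper's proof. The shared steps are the affine PAL bound, the regrouping into $\Theta_m$, Young's inequality (your Cauchy--Schwarz is its optimized form), the centering $\Pi\Theta_m=\Theta_m$, the $\kappa_m$ Rayleigh bound, the factor $\mathrm{Tr}(\Pi G_m\Pi)=\tfrac{m(K-m)}{K}$, the $\Theta$--$H$ interface lemma, and the critical-point identity. Your hedge in Step~3 is unnecessary: the direct chain $\|\Theta_m\|_F^2\le\kappa_m^{-1}\mathrm{Tr}(\Theta_m^\top\Pi G_m\Pi\Theta_m)\le\kappa_m^{-1}\tfrac{m(K-m)}{K}\|\Theta_m\|_F^2$, followed by the interface lemma, gives exactly $A_m^2\sum_{S,i}\|h_{m,S,i}\|^2$. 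Your fallback would also work, because $\kappa_m\le\tfrac{m(K-m)}{K(K-1)}$.
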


\begin{proof}
Throughout this proof, let
\(
z_{m,S,i}:=Wh_{m,S,i},
\)
and choose the same $\gamma_{1,m}$ and $c_{2,m}$ for all $i$ and $k$.
Grouping by multiplicity,
\begin{equation}\label{eq:23B}
g_m(WH_m)
=
\frac{1}{N_m}
\sum_{\substack{S\subset [K]\\ |S|=m}}
\sum_{i=1}^{r_{m,S}}
\mathcal{L}_{\mathrm{PAL}}(z_{m,S,i},y_S).
\end{equation}

By applying Lemma~\ref{lem:your-lemma} to each term in \eqref{eq:23B}, we obtain
\begin{equation}\label{eq:q7}
\begin{aligned}
N_m g_m(WH_m)
&\ \ge\
\gamma_{1,m}
\sum_{\substack{S\subset [K]\\ |S|=m}}
\sum_{i=1}^{r_{m,S}}
\Bigl\langle \mathbf{1}-\frac{K}{m}\mathbf{1}_S,\ Wh_{m,S,i}\Bigr\rangle +N_m c_{2,m},
\end{aligned}
\end{equation}
which implies that
\begin{equation}
\begin{aligned}
\label{eq:26}
\gamma_{1,m}^{-1}(g_m(WH_m)-c_{2,m})
&\ \ge\
\frac{1}{N_m}
\sum_{\substack{S\subset [K]\\ |S|=m}}
\sum_{i=1}^{r_{m,S}}
\Bigl\langle \mathbf{1}-\frac{K}{m}\mathbf{1}_S,\ Wh_{m,S,i}\Bigr\rangle .
\end{aligned}
\end{equation}

For any $a\in\mathbb{R}^K$ and $z\in\mathbb{R}^K$,
\begin{equation}\label{eq:inner_prod_expand}
\langle a,z\rangle=\sum_{j=1}^{K}a_j z_j = a^{\top}z = z^{\top}a.
\end{equation}
Let $a=\mathbf{1}-\frac{K}{m}\mathbf{1}_S$ and $z=Wh_{m,S,i}\in\mathbb{R}^K$. Then
\begin{equation}\label{eq:lem2_inner_prod_sumj}
\Bigl\langle \mathbf{1}-\frac{K}{m}\mathbf{1}_S,\ Wh_{m,S,i}\Bigr\rangle
=
\sum_{j=1}^{K}\Bigl(1-\frac{K}{m}I_S(j)\Bigr)\,(Wh_{m,S,i})_j .
\end{equation},

where $(\cdot)_j$ denotes the $j$-th row of a vector, \( I_S(j):=\begin{cases}
1, & \text{if } j\in S,\\
0, & \text{if } j\notin S.
\end{cases} \)
and 

Moreover, from \eqref{eq:inner_prod_expand}
\begin{equation}\label{eq:lem2_row_col_product}
(Wh_{m,S,i})_j = h_{m,S,i}^{\top}w^j,
\end{equation}

Substituting \eqref{eq:lem2_inner_prod_sumj}--\eqref{eq:lem2_row_col_product} into the right-hand side of
\eqref{eq:inner_prod_expand} yields
\begin{align}
\frac{1}{N_m}
\sum_{\substack{S\subset [K]\\ |S|=m}}
\sum_{i=1}^{r_{m,S}}
\Bigl\langle \mathbf{1}-\frac{K}{m}\mathbf{1}_S,\ Wh_{m,S,i}\Bigr\rangle
&=
\frac{1}{N_m}
\sum_{\substack{S\subset [K]\\ |S|=m}}
\sum_{i=1}^{r_{m,S}}
\sum_{j=1}^{K}
\Bigl(1-\frac{K}{m}I_S(j)\Bigr)\,h_{m,S,i}^{\top}w^j
\label{eq:27B}\\
&=
\sum_{j=1}^{K}
\frac{1}{N_m}
\sum_{\substack{S\subset [K]\\ |S|=m}}
\sum_{i=1}^{r_{m,S}}
\Bigl(1-\frac{K}{m}I_S(j)\Bigr)\,h_{m,S,i}^{\top}w^j
\label{eq:28B}\\
&=
\sum_{j=1}^{K}
\Biggl(
\frac{1}{N_m}
\sum_{\substack{S\subset [K]\\ |S|=m}}
\sum_{i=1}^{r_{m,S}}
\Bigl(1-\frac{K}{m}I_S(j)\Bigr)\,h_{m,S,i}
\Biggr)^{\top}w^j .
\label{eq:29B}
\end{align}

Define
\[
\bar h_m
:=
\frac{1}{N_m}
\sum_{\substack{S\subset [K]\\ |S|=m}}
\sum_{i=1}^{r_{m,S}} h_{m,S,i},
\]
and
\[
\bar h_m^{\,j}
:=
\frac{1}{N_m}\sum_{\substack{S:\ j\in S\\ S\subset [K],\ |S|=m}}
\sum_{i=1}^{r_{m,S}} h_{m,S,i}.
\]



For any fixed $j\in[K]$, we have
\[
\frac{1}{N_m}\sum_{\substack{S\subset [K]\\ |S|=m}}\sum_{i=1}^{r_{m,S}}
\Bigl(1-\frac{K}{m}I_S(j)\Bigr)\,h_{m,S,i}
=
\bar h_m-\frac{K}{m}\bar h_m^{\,j}.
\]
Substituting the above identity into \eqref{eq:29B} yields
\begin{equation}
\label{eq:hmw-expand}
\frac{1}{N_m}\sum_{\substack{S\subset [K]\\ |S|=m}}\sum_{i=1}^{r_{m,S}}\Bigl\langle \mathbf{1}-\frac{K}{m}\mathbf{1}_S,\ Wh_{m,S,i}\Bigr\rangle =
\sum_{j=1}^{K}(\bar h_m-\frac{K}{m}\bar h_m^{\,j}
)^{\top}\,w^{j}\ .
\end{equation}

To avoid notational confusion with the set index $j$ in $I_S(j)$, we relabel the class index by $k\in[K]$,
and define
\[
\bar h_m^{\,k}
:=
\frac{1}{N_m}
\sum_{\substack{S\subset [K]:\,k\in S\\ |S|=m}}
\sum_{i=1}^{r_{m,S}} h_{m,S,i}.
\]

By \eqref{eq:26} and \eqref{eq:hmw-expand}, we obtain
\begin{equation}
\label{eq:gm-lower-hmw}
\gamma_{1,m}^{-1}\bigl(g_m(WH_m)-c_{2,m}\bigr)
\ge
\sum_{k=1}^{K}\Bigl(\bar h_m-\frac{K}{m}\bar h_m^{\,k}\Bigr)^{\top}w^{k}.
\end{equation}

Next we apply Young's inequality. Take any constant $c_{3,m}>0$. For each $k\in[K]$, let
\[
a_k:=\bar h_m-\frac{K}{m}\bar h_m^{\,k},
\qquad
b_k:=w^{k}.
\]
Young's inequality gives
\[
a_k^{\top}b_k
\ge
-\frac{c_{3,m}}{2}\|b_k\|^{2}
-\frac{1}{2c_{3,m}}\|a_k\|^{2},
\]
that is,
\[
\Bigl(\bar h_m-\frac{K}{m}\bar h_m^{\,k}\Bigr)^{\top}w^{k}
\ge
-\frac{c_{3,m}}{2}\|w^{k}\|^{2}
-\frac{1}{2c_{3,m}}
\Bigl\|\bar h_m-\frac{K}{m}\bar h_m^{\,k}\Bigr\|^{2}.
\]
Summing over $k=1,\dots,K$ and using $\sum_{k=1}^{K}\|w^{k}\|^{2}=\|W\|_{F}^{2}$, we obtain
\begin{equation}
\label{eq:young-sum}
\sum_{k=1}^{K}\Bigl(\bar h_m-\frac{K}{m}\bar h_m^{\,k}\Bigr)^{\top}w^{k}
\ge
-\frac{c_{3,m}}{2}\|W\|_{F}^{2}
-\frac{1}{2c_{3,m}}\sum_{k=1}^{K}
\Bigl\|\bar h_m-\frac{K}{m}\bar h_m^{\,k}\Bigr\|^{2}.
\end{equation}
Substituting \eqref{eq:young-sum} into \eqref{eq:gm-lower-hmw} yields
\begin{equation}
\label{eq:gm-lower-young}
\gamma_{1,m}^{-1}\bigl(g_m(WH_m)-c_{2,m}\bigr)
\ge
-\frac{c_{3,m}}{2}\|W\|_{F}^{2}
-\frac{1}{2c_{3,m}}\sum_{k=1}^{K}
\Bigl\|\bar h_m-\frac{K}{m}\bar h_m^{\,k}\Bigr\|^{2}.
\end{equation}
Define \(\Theta_m\in\mathbb R^{K\times d}\) row-wise by
\[
\Theta_m(k,:)
:=
\left(
\bar h_m-\frac{K}{m}\bar h_m^{\,k}
\right)^\top,
\qquad k=1,\dots,K.
\]
Then
\[
\sum_{k=1}^K
\left\|
\bar h_m-\frac{K}{m}\bar h_m^{\,k}
\right\|_2^2
=
\|\Theta_m\|_F^2.
\]

Substituting into \eqref{eq:gm-lower-young} gives:

\[
\gamma_{1,m}^{-1}\bigl(g_m(WH_m)-c_{2,m}\bigr)
\ge
-\frac{c_{3,m}}{2}\|W\|_{F}^{2}
-\frac{1}{2c_{3,m}}\left\| \Theta_m \right\|_F^2.
\]

Let the indicator (all-ones) vector
\[
\mathbf{1}:=(1,\cdots,1)^{\top}\in\mathbb{R}^{K},
\qquad
\Pi:=I-\frac{1}{K}\mathbf{1}\mathbf{1}^{\top}.
\]

We now prove
\begin{equation}
\label{eq:hengdengshi}
\mathbf 1^\top\Theta_m=0
\qquad\Longleftrightarrow\qquad
\Pi\Theta_m=\Theta_m .
\end{equation}

Indeed,
\begin{equation}
\label{eq:sum1}
\mathbf 1^\top\Theta_m
=
\sum_{k=1}^{K}
\left(
\bar h_m-\frac{K}{m}\bar h_m^{\,k}
\right)^\top
=
\left(
K\bar h_m-\frac{K}{m}\sum_{k=1}^{K}\bar h_m^{\,k}
\right)^\top .
\end{equation}

and
\[
\sum_{k=1}^{K}\bar h_m^{\,k}
=
\frac{1}{N_m}\sum_{k=1}^{K}\ \sum_{\substack{S\in\mathcal S_m\\ k\in S}}\ \sum_{i=1}^{r_{m,S}} h_{m,S,i}
=
\frac{1}{N_m}\sum_{S\in\mathcal S_m}\ \sum_{i=1}^{r_{m,S}}
\Bigl(\sum_{k=1}^{K} I\{k\in S\}\Bigr)h_{m,S,i}.
\]

Since \(|S|=m\), we have
\[
\sum_{k=1}^{K} I\{k\in S\}=m.
\]
Thus
\begin{equation}
\label{eq:sum2}
\sum_{k=1}^{K}\bar h_m^{\,k}
=
\frac{1}{N_m}\sum_{S,i} m\,h_{m,S,i}
=
m\bar h_m.
\end{equation}

Substituting \eqref{eq:sum2} into \eqref{eq:sum1} gives
\[
\mathbf 1^\top\Theta_m=0.
\]
Hence \(\Pi\Theta_m=\Theta_m\), since
\[
\Pi\Theta_m
=
\Theta_m-\frac{1}{K}\mathbf 1\mathbf 1^\top\Theta_m
=
\Theta_m.
\]

Next, by Lemma~\ref{lem:s3}, for any \(Z\in\mathbb{R}^{K\times d}\),
\[
\operatorname{Tr}\!\bigl(Z^{\top}\Pi G_m\Pi Z\bigr)\ge \kappa_m\|\Pi Z\|_{F}^{2}.
\]

Now take \(Z=\Theta_m\in\mathbb R^{K\times d}\) in Lemma~\ref{lem:s3}. Since \(\Pi\Theta_m=\Theta_m\), we obtain
\begin{equation}
\label{eq:sepc-ub}
\|\Theta_m\|_{F}^{2}
\le
\frac{1}{\kappa_m}
\operatorname{Tr}\!\bigl(\Theta_m^{\top}\Pi G_m\Pi \Theta_m\bigr).
\end{equation}

Substituting \eqref{eq:sepc-ub} into \eqref{eq:gm-lower-young} yields
\begin{equation}
\label{eq:after-s3}
\gamma_{1,m}^{-1}\bigl(g_m(WH_m)-c_{2,m}\bigr)
\ge
-\frac{c_{3,m}}{2}\|W\|_{F}^{2}
-\frac{1}{2c_{3,m}\cdot \kappa_m}\operatorname{Tr}\!\bigl(\Theta_m^{\top}\Pi G_m\Pi \Theta_m\bigr),
\end{equation}

Next, from \eqref{eq:after-s3} we directly obtain
\begin{equation}
\label{eq:k1}
g_m(WH_m)-c_{2,m}
\ge
-\gamma_{1,m}\Biggl[
\frac{c_{3,m}}{2}\rho
+
\frac{1}{2c_{3,m}\cdot \kappa_m}
\operatorname{Tr}\!\bigl(\Theta_m^{\top}\Pi G_m\Pi \Theta_m\bigr)
\Biggr].
\end{equation}

Let \( T_m := \frac{1}{\kappa_m} \operatorname{Tr}\bigl(\Theta_m^{\top} \Pi G_m \Pi \Theta_m\bigr) \),

then \eqref{eq:k1} satisfies
\begin{equation}
\label{eq:k2}
g_m(WH_m)-c_{2,m} \geq -\gamma_{1,m}(\frac{c_{3,m}}{2} \rho + \frac{1}{2c_{3,m} } T_m)
.
\end{equation}
Next, define
\[
\Phi(c) := \frac{c}{2} \rho + \frac{1}{2c} T, \quad c > 0,
\]
and compute its derivative
\[
\Phi'(c) = \frac{\rho}{2} - \frac{T}{2c^2}.
\]

Setting \( \Phi'(c) = 0 \), we get
\[
c^2 = \frac{T}{\rho},
\]

Moreover, since 
\[
\Phi''(c)=\frac{T}{c^3}>0 
\]
Therefore, it is a global minimizer.

Then, substituting the stationary point, we obtain
\[
\Phi(c^*) = \frac{1}{2} (\sqrt{\frac{T}{\rho}}\rho + \frac{T}{\sqrt{T/{\rho}}})= \sqrt{\rho T}.
\]

Thus, from \eqref{eq:k2}, we obtain
\begin{equation}
\label{eq:k3}
g_m(WH_m)-c_{2,m} \geq -\gamma_{1,m}\sqrt{\rho}\sqrt{T_m}
.
\end{equation}

By definition,
\[
G_m := \mathbb{E}_{S\sim p_m}\bigl[y_S y_S^{\top}\bigr],\qquad y_S=\mathbf{1}_S,
\]
and by the linearity and cyclic invariance of the trace,
\[
\mathrm{Tr}(\Pi G_m \Pi)
=
\mathrm{Tr}\!\Bigl(\Pi\,\mathbb{E}[y_S y_S^{\top}]\,\Pi\Bigr)
=
\mathbb{E}\bigl[\mathrm{Tr}(\Pi y_S y_S^{\top}\Pi)\bigr].
\]

By cyclic invariance and \(\mathrm{Tr}(uv^{\top})=v^{\top}u\), and using \(\Pi^{\top}=\Pi,\ \Pi^{2}=\Pi\),
\[
\mathrm{Tr}(\Pi y_S y_S^{\top}\Pi)
=
\mathrm{Tr}\bigl((\Pi y_S)(\Pi y_S)^{\top}\bigr)
=
\|\Pi y_S\|^{2}
=
y_S^{\top}\Pi^{\top}\Pi y_S
=
y_S^{\top}\Pi y_S.
\]

Therefore,
\begin{equation}
\label{eq:k5}
\mathrm{Tr}(\Pi G_m \Pi)=\mathbb{E}\bigl[y_S^{\top}\Pi y_S\bigr].
\end{equation}

Next we explicitly compute \(y_S^{\top}\Pi y_S\). Substituting
\[
\Pi=I-\frac{1}{K}\mathbf{1}\mathbf{1}^{\top},
\]
\begin{equation}
\label{eq:k4}
y_S^{\top}\Pi y_S
=
y_S^{\top}y_S-\frac{1}{K}y_S^{\top}\mathbf{1}\mathbf{1}^{\top}y_S
=
\|y_S\|^{2}-\frac{1}{K}(\mathbf{1}^{\top}y_S)^{2}.
\end{equation}

Since \(y_S\) is the indicator vector of the set \(S\) and \(|S|=m\), we have
\[
\|y_S\|^{2}=m,\qquad \mathbf{1}^{\top}y_S=m.
\]

Substituting into \eqref{eq:k4} yields
\[
y_S^{\top}\Pi y_S
=
m-\frac{m^{2}}{K}.
\]
Since this expression is the same for all \(|S|=m\), the expectation does not change.

Thus, by \eqref{eq:k5},
\begin{equation}
\label{eq:k6}
\operatorname{Tr}(\Pi G_m \Pi)=m-\frac{m^{2}}{K}=\frac{m(K-m)}{K}.
\end{equation}

Next, let \(B=\Pi G_m \Pi\). Then by Lemma~\ref{lem:trace-ineq},
\begin{equation}
\label{eq:k7}
\operatorname{Tr}\!\bigl(\Theta_m^{\top}\Pi G_m \Pi \Theta_m\bigr)
\le
\operatorname{Tr}(\Pi G_m \Pi)\,\|\Theta_m\|_{F}^{2}.
\end{equation}

By \eqref{eq:k6}, we further have
\[
\operatorname{Tr}\!\bigl(\Theta_m^{\top}\Pi G_m \Pi \Theta_m\bigr)
\le
\frac{m(K-m)}{K}\,\|\Theta_m\|_{F}^{2}.
\]

Dividing both sides by \(\kappa_m\) yields
\begin{equation}
\label{eq:k8}
T_m=\frac{1}{\kappa_m}\operatorname{Tr}\!\bigl(\Theta_m^{\top}\Pi G_m \Pi \Theta_m\bigr)
\le
\frac{1}{\kappa_m}\cdot \frac{m(K-m)}{K}\cdot \|\Theta_m\|_{F}^{2}.
\end{equation}
Substituting \eqref{eq:k8} into \eqref{eq:k3} yields
\begin{equation}
\label{eq:k9}
g_m(WH_m)-c_{2,m}
\ge
-\gamma_{1,m}\cdot \sqrt{\rho}\cdot
\sqrt{\frac{m(K-m)}{K}\cdot \frac{1}{\kappa_m}}\cdot
\|\Theta_m\|_{F}.
\end{equation}

By Lemma~\ref{lem:S5_theta_H_interface},
\begin{equation}
\label{eq:k10}
\|\Theta_m\|_{F}^{2}
\le
\Biggl[
\frac{2K}{N_m}
+
\frac{2K^{2}}{m^{2}}
\Biggl(
\max_{|S|=m}\sum_{j=1}^{K}\frac{I_S(j)}{N_m^{j}}
\Biggr)
\Biggr]
\sum_{|S|=m}\sum_{i=1}^{r_{m,S}}\|h_{m,S,i}\|^{2}.
\end{equation}

Define
\[
C_m:=
\frac{2K}{N_m}
+
\frac{2K^{2}}{m^{2}}
\Biggl(
\max_{|S|=m}\sum_{j=1}^{K}\frac{I_S(j)}{N_m^{j}}
\Biggr).
\]

Note that \(C_m\) is determined only by the counts \(N_m,\ N_m^{j},\ r_{m,S},\ K,\ m\), and does not involve \(W\), nor the directional information of \(h\).

Then we define
\[
E_m:=
\sum_{|S|=m}\sum_{i=1}^{r_{m,S}}\|h_{m,S,i}\|^{2}.
\]

Then \eqref{eq:k10} can be simplified as
\begin{equation}
\label{eq:k11}
\|\Theta_m\|_{F}^{2}\le C_m E_m.
\end{equation}

Substituting \eqref{eq:k11} into \eqref{eq:k9} yields
\begin{equation}
\label{eq:k12}
g_m(WH_m)-c_{2,m}
\ge
-\gamma_{1,m}\cdot \sqrt{\rho}\cdot
\sqrt{\frac{1}{\kappa_m}\cdot \frac{m(K-m)}{K}\cdot C_m\cdot E_m}.
\end{equation}

Define
\[
A_m:=
\sqrt{
\frac{1}{\kappa_m}\cdot \frac{m(K-m)}{K}\cdot
\Biggl[
\frac{2K}{N_m}
+
\frac{2K^{2}}{m^{2}}
\Biggl(
\max_{|S|=m}\sum_{j=1}^{K}\frac{I_S(j)}{N_m^{j}}
\Biggr)
\Biggr]
}.
\]

Expanding \(C_m\), from \eqref{eq:k12}, for each \(m\),
\[
g_m(WH_m)-c_{2,m}
\ge
-\gamma_{1,m}A_m\cdot \sqrt{\rho}\cdot
\sqrt{\sum_{|S|=m}\sum_{i=1}^{r_{m,S}}\|h_{m,S,i}\|^{2}}.
\]

Note that \(A_m\) does not contain \(\rho\), and depends only on \((K,m)\), the counting structure \((N_m,\ N_m^{j},\ r_{m,S})\), and the spectral constant \(\kappa_m\).

By definition,
\[
g(WH):=\frac{1}{N}\sum_{m=1}^{M}N_m g_m(WH_m),
\qquad
\Gamma_2:=\frac{1}{N}\sum_{m=1}^{M}N_m c_{2,m}.
\]

Therefore,
\begin{equation}
\label{eq:k13}
g(WH)-\Gamma_2
=
\frac{1}{N}\sum_{m=1}^{M}N_m\bigl(g_m(WH_m)-c_{2,m}\bigr).
\end{equation}

Multiplying \eqref{eq:k12} by \(N_m\), summing over \(m\), and dividing by \(N\), we get
\begin{equation}
\label{eq:k14}
g(WH)-\Gamma_2
\ge
-\frac{1}{N}\sum_{m=1}^{M}N_m\gamma_{1,m}A_m\sqrt{\rho}\cdot
\sqrt{\sum_{|S|=m}\sum_{i=1}^{r_{m,S}}\|h_{m,S,i}\|^{2}}.
\end{equation}

Let
\begin{equation}
\label{eq:k15}
\|H\|_{F}^{2}
:=
\sum_{m=1}^{M}\sum_{|S|=m}\sum_{i=1}^{r_{m,S}}\|h_{m,S,i}\|^{2}
\ge
\sum_{|S|=m}\sum_{i=1}^{r_{m,S}}\|h_{m,S,i}\|^{2}
=:E_m.
\end{equation}

Hence,
\[
\sqrt{E_m}\le \|H\|_{F}.
\]

Substituting \eqref{eq:k15} into \eqref{eq:k14} yields
\begin{equation}
\label{eq:k16}
g(WH)-\Gamma_2
\ge
-\frac{1}{N}\sum_{m=1}^{M}N_m\gamma_{1,m}A_m\sqrt{\rho}\,\|H\|_{F}.
\end{equation}

By Lemma~\ref{lem:a1}, we have the identity
\[
\|H\|_{F}^{2}
=
\frac{\lambda_W}{\lambda_H}\|W\|_{F}^{2}
=
\frac{\lambda_W}{\lambda_H}\rho .
\]

Hence
\begin{equation}
\label{eq:k17}
\|H\|_{F}
=
\sqrt{\frac{\lambda_W}{\lambda_H}}\sqrt{\rho}.
\end{equation}

Substituting \eqref{eq:k17} into \eqref{eq:k16} gives
\[
g(WH)-\Gamma_2
\ge
-\frac{1}{N}\sum_{m=1}^{M}N_m\gamma_{1,m}A_m\sqrt{\rho}\cdot
\sqrt{\frac{\lambda_W}{\lambda_H}}\cdot \sqrt{\rho}
=
-\frac{1}{N}\sum_{m=1}^{M}N_m\gamma_{1,m}A_m\sqrt{\frac{\lambda_W}{\lambda_H}}\cdot \rho .
\]
\end{proof}

\begin{lemma}\label{lem3}
Suppose $(W,H)$ is a global minimizer of Problem~\eqref{eq:objective}. Define
\[
\bar h_m:=
\frac{1}{N_m}\sum_{|S|=m}\sum_{i=1}^{r_{m,S}}h_{m,S,i},
\qquad
N_m:=\sum_{|S|=m}r_{m,S}.
\]

Then the following hold:
\begin{enumerate}
\item \textbf{(Centered classifier)} \label{state1}
\[
\sum_{k=1}^{K} w_k = 0.
\]
Equivalently, letting
\[
\Pi = I - \frac{1}{K}\mathbf{1}\mathbf{1}^\top,
\]
we have
\[
\Pi W = W.
\]

\item \textbf{(Within-group collapse)} \label{state2}

For each multiplicity $m$, each label set $S\subset [K]$ with $|S|=m$, and all samples
$i=1,\dots,r_{m,S}$, we have
\[
h_{m,S,i}=h_{m,S},
\]
i.e., all within-group features for the same $(m,S)$ collapse to a single point.

\item \textbf{(Multiplicity-one centered self-duality)} \label{state3}

Define
\[
\widehat h_{1,k}:=
h_{1,k}-\frac{1}{K}\sum_{\ell=1}^{K} h_{1,\ell},
\qquad k\in [K].
\]
Then there exists a constant $C_1>0$ such that
\[
\widehat h_{1,k}=C_1 w_k,
\qquad \forall k\in [K].
\]

\item \textbf{(Multiplicity-one imbalance-aware form)} \label{state4}

Let $r_{1,k}$ be the sample size of class $k$ in the multiplicity-one group, and define
\[
u_k:=\frac{w_k}{\sqrt{r_{1,k}}}.
\]
Then statement~\ref{state3} can be equivalently written as
\[
\widehat h_{1,k}=C_1\sqrt{r_{1,k}}\,u_k,
\qquad \forall k\in [K].
\]

\item \textbf{(Higher-multiplicity generation law)} \label{state5}

For every $m\in\{2,\dots,M\}$ and every label set $S\subset [K]$ with $|S|=m$, there exists a multiplicity-level constant \(C_m>0\), shared across all label sets \(S\in\mathcal S_m\), such that
\[
h_{m,S,i}=h_{m,S}
=
C_m\sum_{k\in S}\sqrt{r_{1,k}}\,u_k.
\]

Equivalently,
\[
h_{m,S,i}=h_{m,S}
=
C_m\sum_{k\in S} w_k.
\]
\end{enumerate}
\end{lemma}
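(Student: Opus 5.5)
The plan is to derive all five statements from the first-order optimality conditions of \eqref{eq:objective}, combined with the tightness structure of the PAL affine bound (Lemma~\ref{lem:your-lemma}). Write $p(z)=\mathrm{softmax}(z)$. For $|S|=m$, the PAL loss is $\sum_{k\in S}\ell(z,e_k)$, so $\nabla_z\mathcal L_{\mathrm{PAL}}(z,y_S)=m\,p(z)-\mathbf 1_S$. Every sample has its own free feature vector and enters $g$ with weight $1/N$. Stationarity in $h_{m,S,i}$ and in $W$ therefore reads
\[
h_{m,S,i}=-\frac{1}{N\lambda_H}W^\top\bigl(m\,p(Wh_{m,S,i})-\mathbf 1_S\bigr),\qquad
\lambda_W W=-\frac1N\sum_{m,S,i}\bigl(m\,p(z_{m,S,i})-\mathbf 1_S\bigr)h_{m,S,i}^\top .
\]

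\emph{Statement~\ref{state1}.} Multiply the $W$-equation on the left by $\mathbf 1^\top$. Since $\mathbf 1^\top(m\,p-\mathbf 1_S)=m-m=0$ for every sample, we get $\lambda_W\mathbf 1^\top W=0$. Hence $\sum_k w_k=0$, which is the same as $\Pi W=W$ and also $W^\top\mathbf 1=0$.

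\emph{Statement~\ref{state2}.} Fix $W$. The objective splits as a sum over samples of $\frac1N\mathcal L_{\mathrm{PAL}}(Wh,y_S)+\frac{\lambda_H}{2}\|h\|^2$. Each summand is strictly convex in $h$, because CE composed with a linear map is convex and the ridge term is strictly convex. All samples with the same $(m,S)$ solve the identical subproblem, whose minimizer is unique. So $h_{m,S,i}=h_{m,S}$, and Lemma~\ref{lem:a1} is not needed here.

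\emph{Statements~\ref{state3}--\ref{state5}.} The key structural input is that at a global minimizer the affine bound of Lemma~\ref{lem:your-lemma} is attained for every $(m,S)$. Its equality case forces two-level logits, which I will write as $p(Wh_{m,S})=\alpha_m\mathbf 1_S+\beta_m\mathbf 1_{S^c}$. The levels $\alpha_m,\beta_m$ are fixed by $(K,m,c_{1,m})$ and do not depend on $S$. Substituting into the $h$-stationarity condition gives
\[
m\,p-\mathbf 1_S=(m\alpha_m-1-m\beta_m)\mathbf 1_S+m\beta_m\mathbf 1 .
\]
Now use $W^\top\mathbf 1=0$ from statement~\ref{state1}. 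This yields
\[
h_{m,S}=C_m\sum_{k\in S}w_k,\qquad C_m:=\frac{1-m\alpha_m+m\beta_m}{N\lambda_H}.
\]
The constant $C_m$ is positive because $\alpha_m\le 1/m$ (the $K$ probabilities sum to one) and $\beta_m>0$.

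For $m=1$ this gives $h_{1,k}=C_1w_k$. Since $\sum_\ell w_\ell=0$, the class mean $\frac1K\sum_\ell h_{1,\ell}$ vanishes, so $\widehat h_{1,k}=h_{1,k}=C_1w_k$, which is statement~\ref{state3}. Statement~\ref{state4} is just the substitution $w_k=\sqrt{r_{1,k}}\,u_k$. Statement~\ref{state5} is the same identity for $m\ge2$, rewritten in the $u_k$ coordinates.

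\emph{Main obstacle.} The hard step is justifying that a global minimizer attains equality in Lemma~\ref{lem:your-lemma} with levels $(\alpha_m,\beta_m)$ common to all $S\in\mathcal S_m$. The lower-bound chain of Lemma~\ref{lem:lemma2_lb} also passes through Young's inequality and the spectral step with $\kappa_m$, and under imbalance these need not be tight simultaneously. My first attempt would be to choose $c_{1,m}$ adapted to the minimizer, so that the Taylor-type bound is tangent at its logits. I would then show that the in-set gap $z_j-z_{j'}$ ($j\in S$, $j'\notin S$) is the same for all $S$, by checking the bound's equality conditions $z_j$ constant on $S$ and constant on $S^c$ together with the constraint coming from stationarity in $W$.

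If a uniform gap cannot be established, the fallback conclusion is weaker. It still gives $h_{m,S}\in\mathrm{span}\{w_k\}$ with $S$-dependent coefficients, namely $W^\top$ applied to a vector that is two-level on $S$. The $S$-independence of $C_m$ would then have to be imposed as an additional symmetry hypothesis rather than derived.
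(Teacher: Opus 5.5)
Your arguments for statements~1 and~2 are sound. Statement~2 is the paper's strong-convexity argument. For statement~1 you take a different route: you left-multiply the $W$-stationarity equation by $\mathbf 1^\top$, where the paper uses shift invariance together with $\|\Pi W\|_F\le\|W\|_F$. Your route has the small advantage of holding at every critical point, not only at global minimizers.

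For statements~3--5 your computation is the paper's: two-level logits with gap $\log(\frac{K-m}{m}c_{1,m})$, then $h$-stationarity, then $W^\top\mathbf 1=0$. Your $C_m$ agrees with the paper's $\widehat C_m$.

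The step you flag is a genuine gap, and the paper does not close it either. It simply stipulates at the outset that the bound of Lemma~\ref{lem:your-lemma} ``is attained at each collapsed group,'' appealing to the equality conditions of Lemma~\ref{lem:lemma2_lb}. That chain, however, is never tight at a nontrivial point when $K\ge3$. Its two spectral steps give $\|\Theta_m\|_F^2\le T_m\le\frac{\mathrm{Tr}(\Pi G_m\Pi)}{\kappa_m}\|\Theta_m\|_F^2$ with $\mathrm{Tr}(\Pi G_m\Pi)\ge(K-1)\kappa_m$. So the bound is never attained. Global optimality therefore gives no reason for equality in the first link, unlike the balanced argument of \citet{Zhu2021}, where the bound is attained.

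Worse, the premise is false in exactly the regime the lemma targets, so your proposed repair of choosing $c_{1,m}$ adaptively cannot work. Tangency requires logits that are two-level on $S$ and on $S^c$, with one gap per $m$, and that is what fails.

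Suppose $W\neq0$ and, for some choice of the $c_{1,m}$, Lemma~\ref{lem:your-lemma} is tight at every group.
\begin{enumerate}
\item As you derive, $h_{m,S}=C_mW^\top\mathbf 1_S$ and $Wh_{1,k}=\Delta_1\Pi e_k$.
\item $\Delta_1\neq0$, since otherwise $C_1Ww_k=0$ for all $k$, i.e.\ $WW^\top=0$.
\item Hence $\mathrm{range}(W)=\mathrm{range}(\Pi)$, and $HH^\top=W^\top QW$ with $Q:=\sum_mC_m^2N_mG_m$.
\item Lemma~\ref{lem:a1}, which follows from your two stationarity equations, gives $W^\top(\lambda_WI-\lambda_HQ)W=0$. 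Hence $\lambda_H\,\Pi Q\Pi=\lambda_W\,\Pi$.
\item For $M=1$ we have $Q=C_1^2\,\mathrm{diag}(r_{1,1},\dots,r_{1,K})$, and for $k\neq\ell$, $(\Pi Q\Pi)_{k\ell}=C_1^2\bigl(\frac{N_1}{K^2}-\frac{r_{1,k}+r_{1,\ell}}{K}\bigr)$. This must not depend on the pair, which for $K\ge3$ forces $r_{1,1}=\dots=r_{1,K}$.
\end{enumerate}
The same conclusion holds for the paper's own experimental count profile, since uniform pairs give $\Pi G_2\Pi\propto\Pi$. So in these imbalanced instances no critical point with $W\neq0$ satisfies the premise. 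What the first-order conditions actually yield is your fallback,
\[
h_{m,S}=-\frac{1}{N\lambda_H}W^\top\bigl(m\,\mathrm{softmax}(Wh_{m,S})-\mathbf 1_S\bigr),
\]
whose coefficients are set by the full softmax vector, which is not two-level in general. To recover statements~3 and~5 with $k$- and $S$-independent constants along this route, you need a hypothesis that forces tightness. By the computation above, any such hypothesis implies the isotropy condition $\lambda_H\Pi Q\Pi=\lambda_W\Pi$, which is essentially label balance.
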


\begin{proof}
Throughout the structural proof, we track the equality conditions of the lower-bound chain in Lemma~\ref{lem:lemma2_lb}. In particular, at the terminal minimizer under consideration, the PAL affine bound in Lemma~\ref{lem:your-lemma} is attained at each collapsed group, so the corresponding logits satisfy the two-level tightness conditions.

(i) Proof of statement \ref{state1}.

Since \(\Pi\) is a projection, we have the orthogonal decomposition:
\[
W=\Pi W+(I-\Pi)W.
\]

Moreover, the two parts are orthogonal under the Frobenius inner product:
\[
\langle \Pi W,(I-\Pi)W\rangle_F=0.
\]

Therefore, we have the Pythagorean identity:
\[
\|W\|_F^{2}=\|\Pi W\|_F^{2}+\|(I-\Pi)W\|_F^{2}.
\]

In particular:
\begin{equation}
\label{eq:q1}
\|\Pi W\|_F\le \|W\|_F
\end{equation}
with equality if and only if \((I-\Pi)W=0\), i.e., \(\Pi W=W\).

By Lemma~\ref{lem:S6_shift_invariance}, the loss is invariant:
\begin{equation}
\label{eq:q2}
g(WH)=g((\Pi W)H).
\end{equation}

Also, by \eqref{eq:q1} we know that the regularization term does not increase.
Combining \eqref{eq:q1} and \eqref{eq:q2}, we obtain
\begin{equation}
\label{eq:projection-objective}
\mathcal{L}(\Pi W,H)\le \mathcal{L}(W,H),
\end{equation}
and if \((I-\Pi)W\ne 0\) then the inequality is strict.

Let \((W^*,H^*)\) be an arbitrary global minimizer. By \eqref{eq:projection-objective},
\[
\mathcal{L}(\Pi W^*,H^*)\le \mathcal{L}(W^*,H^*).
\]

Since \((W^*,H^*)\) is also a global minimizer, equality must hold, hence we must have
\[
\|(I-\Pi)W^*\|_F^{2}=0.
\]
That is, \(\Pi W^*=W^*\). Namely,
\[
\Bigl(I-\frac{1}{K}\mathbf{1}\mathbf{1}^{\top}\Bigr)W^*=W^*
\Longleftrightarrow
-\frac{1}{K}\mathbf{1}\mathbf{1}^{\top}W^*=0.
\]

Equivalently,
\[
\mathbf{1}\mathbf{1}^{\top}W^*=0.
\]

Moreover, \(\mathbf{1}\mathbf{1}^{\top}W^*\) can be written as
\[
\mathbf{1}\mathbf{1}^{\top}W^*=\mathbf{1}(\mathbf{1}^{\top}W^*).
\]

The left-hand side is a \(K\times d\) matrix, whose every row equals the same \(1\times d\) vector \(\mathbf{1}^{\top}W^*\).

Therefore,
\[
\mathbf{1}(\mathbf{1}^{\top}W^*)=0
\Longleftrightarrow
\mathbf{1}^{\top}W^*=0.
\]

Conversely, if \(\mathbf{1}^{\top}W^*=0\) then \(\mathbf{1}\mathbf{1}^{\top}W^*=0\), and substituting back immediately yields \(\Pi W^*=W^*\). Hence,
\[
\Pi W^*=W^*
\Longleftrightarrow
\mathbf{1}^{\top}W^*=0.
\]
The for 
\(W\in\mathbb{R}^{K\times d}
\),
write \(W\in\mathbb{R}^{K\times d}\) in block-row form as
\(W=\begin{bmatrix} w_1^{\top}\\ \vdots\\ w_K^{\top}\end{bmatrix}.\)

Left-multiplying by \(\mathbf{1}^{\top}\) yields
\[
\mathbf{1}^{\top}W=\sum_{k=1}^{K}w_k^{\top}.
\]

Therefore,
\[
\mathbf{1}^{\top}W=0
\Longleftrightarrow
\sum_{k=1}^{K}w_k^{\top}=0.
\]
That is,
\[
\sum_{k=1}^{K}w_k=0.
\]

In summary,
\[
\sum_{k=1}^{K}w_k=0
\]
and equivalently,
\[
\Pi W=W.
\]

 (ii) Proof of statement \ref{state2}.
 
Our global objective is
\[
f(W,H)=g(WH)+\frac{\lambda_W}{2}\|W\|_F^2+\frac{\lambda_H}{2}\|H\|_F^2,
\]
where
\[
g(WH)=\frac{1}{N}\sum_{m=1}^{M}\sum_{|S|=m}\sum_{i=1}^{r_{m,S}}\ell(Wh_{m,S,i},y_S).
\]

If \((W,H)\) is a global minimizer of \(f\), then for any fixed \(W\) at its current value, it must also minimize \(f\) over the variable \(H\); similarly, fixing \(H\) and optimizing over \(W\). Otherwise, if there exists \(H'\) such that \(f(W,H')<f(W,H)\), then \((W,H)\) cannot be a global minimizer, a contradiction.

Fix some \(m,S\). In \(H\), keep all variables fixed except \(h_{m,S,i}\). Therefore, the sectional objective for this variable can be written as
\[
\Phi_{m,S,i}(h)=\frac{1}{N}\ell(Wh,y_S)+\frac{\lambda_H}{2}\|h\|_2^2.
\]

Moreover, since the cross-entropy loss \(\ell(z,y_S)\) is convex in \(z\), the mapping \(h\mapsto \ell(Wh,y_S)\) is also convex (a convex function composed with a linear map is convex). Together with the term \(\frac{\lambda_H}{2}\|h\|_2^2\), the function \(\Phi_{m,S,i}(h)\) is \(\lambda_H\)-strongly convex, and hence has a unique minimizer.

Thus there exists a unique
\[
\hat{h}_{m,S}(W)=\arg\min_{h\in\mathbb{R}^d}\Bigl[\frac{1}{N}\ell(Wh,y_S)+\frac{\lambda_H}{2}\|h\|_2^2\Bigr].
\]
Moreover, this minimizer depends only on \(W,S\) and the loss form, and does not depend on \(i\).

Since \(\Phi_{m,S,i}\) is the same function for different \(i\) within the same \((m,S)\), for all \(i=1,\cdots,r_{m,S}\), we have
\[
h_{m,S,i}=\hat{h}_{m,S}(W).
\]

Hence,
\[
h_{m,S,1}=h_{m,S,2}=\cdots=h_{m,S,r_{m,S}}.
\]

That is,
\begin{equation}
    \label{eq:within-group-collapse}
    h_{m,S,i}=h_{m,S}\quad(\forall i).
\end{equation}

(iii) Proof of statement~\ref{state3}.

By statement~\ref{state2}, within-group collapse has already been established. Hence, for each \(k\in[K]\), there exists a multiplicity-one prototype \(h_{1,k}:=h_{1,\{k\}}\) such that \(h_{1,\{k\},i}=h_{1,k}\) for all \(i=1,\dots,r_{1,k}\).

We use the tightness condition of the PAL affine lower bound in Lemma~\ref{lem:your-lemma} at multiplicity one. Fix \(k\in[K]\), and define \(z^{(k)}:=Wh_{1,k}\). For \(m=1\) and \(S=\{k\}\), the tightness condition gives two scalars \(z_{\mathrm{in}}^{(k)},z_{\mathrm{out}}^{(k)}\in\mathbb R\) such that
\begin{equation}
\label{eq:state3-two-level-logits}
z_k^{(k)}=z_{\mathrm{in}}^{(k)},\qquad
z_\ell^{(k)}=z_{\mathrm{out}}^{(k)}\ (\ell\ne k),
\qquad
z_{\mathrm{in}}^{(k)}-z_{\mathrm{out}}^{(k)}
=
\Delta_1,
\end{equation}
where \(\Delta_1:=\log((K-1)c_{1,1})\).

By statement~\ref{state1}, \(W^\top\mathbf 1=0\). Hence \(\sum_{\ell=1}^K z_\ell^{(k)}=\mathbf 1^\top Wh_{1,k}=0\). Combining this with \eqref{eq:state3-two-level-logits}, we obtain
\begin{equation}
\label{eq:state3-centered-two-level}
z_{\mathrm{in}}^{(k)}=\frac{K-1}{K}\Delta_1,
\qquad
z_{\mathrm{out}}^{(k)}=-\frac{1}{K}\Delta_1.
\end{equation}
Thus the two tight multiplicity-one logit values depend only on \(c_{1,1}\), and not on the class index \(k\).

Next define \(\phi_k(z):=\mathcal L(z,y_{\{k\}})\). The loss \(\phi_k\) is invariant under any permutation of the coordinates in \([K]\setminus\{k\}\), and \(z^{(k)}\) in \eqref{eq:state3-two-level-logits} is also invariant under such permutations. Hence all out-of-class gradient coordinates are equal: there exists \(q_{\mathrm{out}}^{(k)}\) such that \((\nabla\phi_k(z^{(k)}))_\ell=q_{\mathrm{out}}^{(k)}\) for all \(\ell\ne k\). Let \(q_{\mathrm{in}}^{(k)}:=(\nabla\phi_k(z^{(k)}))_k\).

By the shift-invariance of the PAL loss in Lemma~\ref{lem:S6_shift_invariance}, \(\phi_k(z+c\mathbf 1)=\phi_k(z)\) for every \(c\in\mathbb R\). Differentiating with respect to \(c\) at \(c=0\) yields \(\mathbf 1^\top\nabla\phi_k(z^{(k)})=0\). Therefore, \(q_{\mathrm{in}}^{(k)}+(K-1)q_{\mathrm{out}}^{(k)}=0\), and the gradient has the form
\begin{equation}
\label{eq:state3-gradient-direction}
\nabla_z\mathcal L(z^{(k)},y_{\{k\}})
=
q_{\mathrm{out}}^{(k)}(\mathbf 1-Ke_k).
\end{equation}

Since the two values in \eqref{eq:state3-centered-two-level} are independent of \(k\), and the PAL loss is permutation symmetric, the scalar \(q_{\mathrm{out}}^{(k)}\) is independent of \(k\). Denote this common scalar by \(\alpha_1\). For multiplicity one, PAL reduces to the usual single-label cross-entropy, so one may compute
\[
\alpha_1
=
\frac{1}{e^{\Delta_1}+K-1}
=
\frac{1}{(K-1)(1+c_{1,1})}
>0.
\]
Thus \eqref{eq:state3-gradient-direction} becomes
\begin{equation}
\label{eq:state3-gradient-alpha}
\nabla_z\mathcal L(z^{(k)},y_{\{k\}})
=
\alpha_1(\mathbf 1-Ke_k),
\qquad
\forall k\in[K].
\end{equation}

Now consider the first-order optimality condition with respect to the collapsed prototype \(h_{1,k}\). The contribution of group \((1,\{k\})\) to the objective is
\[
\frac{r_{1,k}}{N}\mathcal L(Wh_{1,k},y_{\{k\}})
+
\frac{\lambda_H}{2}r_{1,k}\|h_{1,k}\|_2^2.
\]
Taking the derivative with respect to \(h_{1,k}\), using \(r_{1,k}>0\), and setting it to zero gives
\begin{equation}
\label{eq:state3-foc-h}
h_{1,k}
=
-\frac{1}{\lambda_HN}
W^\top\nabla_z\mathcal L(z^{(k)},y_{\{k\}}).
\end{equation}
Substituting \eqref{eq:state3-gradient-alpha} into \eqref{eq:state3-foc-h}, and using \(W^\top\mathbf 1=0\) from statement~\ref{state1}, we obtain
\begin{equation}
\label{eq:state3-raw-duality}
h_{1,k}
=
\frac{\alpha_1K}{\lambda_HN}w_k.
\end{equation}
Define \(C_1:=\alpha_1K/(\lambda_HN)>0\). Then \eqref{eq:state3-raw-duality} gives \(h_{1,k}=C_1w_k\) for all \(k\in[K]\).

Summing \eqref{eq:state3-raw-duality} over \(k=1,\dots,K\), and using \(\sum_{k=1}^K w_k=0\), we get \(\sum_{k=1}^K h_{1,k}=0\). Therefore,
\begin{equation}
\label{eq:state3-centered-duality}
\widehat h_{1,k}
=
h_{1,k}
-
\frac{1}{K}\sum_{\ell=1}^K h_{1,\ell}
=
h_{1,k}
=
C_1w_k.
\end{equation}
This proves statement~\ref{state3}.

(iv) Proof of statement \ref{state4}.

By statement~\ref{state3}, there exists a constant $C_1>0$ such that
\[
\widehat h_{1,k}=C_1 w_k,
\qquad
\forall k\in[K].
\]
Now define
\[
u_k:=\frac{w_k}{\sqrt{r_{1,k}}},
\qquad k\in[K].
\]
Equivalently,
\[
w_k=\sqrt{r_{1,k}}\,u_k.
\]
Substituting this identity into the conclusion of statement~\ref{state3}, we obtain
\[
\widehat h_{1,k}
=
C_1\sqrt{r_{1,k}}\,u_k,
\qquad
\forall k\in[K].
\]
This proves statement~\ref{state4}.

(v) Proof of statement \ref{state5}.

Fix any \(m\in\{2,\dots,M\}\) and any label set \(S\subset [K]\) with \(|S|=m\).
For each fixed \(S\), let \(z^{(S)}:=Wh_{m,S}\) denote the collapsed logits of this group.
By Lemma~\ref{lem:your-lemma}, there exist \(z_{\mathrm{in}}^{(S)},z_{\mathrm{out}}^{(S)}\in\mathbb{R}\) such that
\[
z_i^{(S)}=z_{\mathrm{in}}^{(S)} \ (\forall i\in S),\qquad
z_j^{(S)}=z_{\mathrm{out}}^{(S)} \ (\forall j\notin S),
\]
and the gap satisfies
\[
z_{\mathrm{in}}^{(S)}-z_{\mathrm{out}}^{(S)}
=
\log\!\left(\frac{K-m}{m}c_{1,m}\right).
\]

For brevity, define
\[
\Delta_m:=\log\!\left(\frac{K-m}{m}c_{1,m}\right),
\]
which depends only on \(m\) and \(c_{1,m}\). Then the three conditions above can be written as
\begin{equation}
\label{eq:q21}
\begin{cases}
z_i^{(S)} = z_{\mathrm{in}}^{(S)} \ (\forall i \in S),\\[3pt]
z_j^{(S)} = z_{\mathrm{out}}^{(S)} \ (\forall j \notin S),\\[3pt]
z_{\mathrm{in}}^{(S)} - z_{\mathrm{out}}^{(S)} = \Delta_m.
\end{cases}
\end{equation}

From Result \ref{state1} of Lemma~\ref{lem3}, we already have
\[
\sum_{k=1}^K w_k=0.
\]
Thus, for any \(h\in\mathbb{R}^d\),
\[
\sum_{k=1}^K (Wh)_k
=
\sum_{k=1}^K w_k^\top h
=
\left(\sum_{k=1}^K w_k\right)^\top h
=
0.
\]
In particular, for \(h=h_{m,S}\), this gives
\begin{equation}
\label{eq:q22}
\sum_{k=1}^K z_k^{(S)}=0.
\end{equation}

Combining \eqref{eq:q21} and \eqref{eq:q22}, we obtain
\begin{equation}
\label{eq:q23}
m z_{\mathrm{in}}^{(S)} + (K-m) z_{\mathrm{out}}^{(S)} = 0.
\end{equation}

Combining the third line of \eqref{eq:q21} with \eqref{eq:q23}, we solve
\[
z_{\mathrm{in}}^{(S)}=\frac{K-m}{K}\Delta_m,
\qquad
z_{\mathrm{out}}^{(S)}=-\frac{m}{K}\Delta_m.
\]
Hence, for all choices of \(S\), we have
\[
w_i^\top h_{m,S}=\frac{K-m}{K}\Delta_m \ (\forall i\in S),
\qquad
w_j^\top h_{m,S}=-\frac{m}{K}\Delta_m \ (\forall j\notin S).
\]

For each \(S\), due to within-group collapse, the contribution of group \((m,S)\) to the objective is
\[
\frac{r_{m,S}}{N}\mathcal{L}(Wh_{m,S},y_S)
+
\frac{\lambda_H}{2}r_{m,S}\|h_{m,S}\|_2^2.
\]
Taking the derivative with respect to \(h_{m,S}\) and setting it to zero gives
\[
0
=
\frac{r_{m,S}}{N}W^\top \nabla_z \mathcal{L}(z^{(S)},y_S)
+
\lambda_H r_{m,S} h_{m,S}.
\]
Rearranging, we obtain
\begin{equation}
\label{eq:q32}
h_{m,S}
=
-\frac{1}{\lambda_H N}W^\top \nabla_z \mathcal{L}(z^{(S)},y_S).
\end{equation}

Next we identify the structure of \(\nabla_z \mathcal{L}(z^{(S)},y_S)\).
Let \(\phi(z):=\mathcal{L}(z,y_S)\) for \(z\in\mathbb{R}^K\).
Denote by \(P\) any permutation matrix that permutes coordinates in \(S\) and fixes \(S^c\), and by \(\Omega\) any permutation matrix that permutes coordinates in \(S^c\) and fixes \(S\).
Then \(\phi(Pz)=\phi(z)\) and \(\phi(\Omega z)=\phi(z)\).

Since \(\phi\) is differentiable, taking gradients in \(\phi(Pz)=\phi(z)\) yields
\begin{equation}
\label{eq:q24}
\nabla \phi(Pz)=P\nabla \phi(z).
\end{equation}

By the structure in \eqref{eq:q21}, there exist scalars \(z_{\mathrm{in}},z_{\mathrm{out}}\) such that
\(z_i=z_{\mathrm{in}}\) for all \(i\in S\) and \(z_j=z_{\mathrm{out}}\) for all \(j\notin S\).
Hence, for any permutation \(P\) acting only on \(S\), we have
\begin{equation}
\label{eq:q25}
Pz=z.
\end{equation}
Substituting \eqref{eq:q25} into \eqref{eq:q24}, we obtain
\begin{equation}
\label{eq:q26}
\nabla \phi(z)=\nabla \phi(Pz)=P\nabla \phi(z).
\end{equation}

Therefore, for any \(i,j\in S\), taking \(P_{ij}\) to be the permutation matrix that swaps the \(i\)-th and \(j\)-th coordinates,
we obtain from \eqref{eq:q26} that
\[
(\nabla \phi(z))_i=(\nabla \phi(z))_j.
\]
Thus, there exists a scalar \(q_{\mathrm{in}}^{(S)}\) such that
\begin{equation}
\label{eq:q27}
(\nabla \phi(z))_i=q_{\mathrm{in}}^{(S)} \qquad (\forall i\in S).
\end{equation}

Similarly, there exists a scalar \(q_{\mathrm{out}}^{(S)}\) such that
\begin{equation}
\label{eq:q28}
(\nabla \phi(z))_j=q_{\mathrm{out}}^{(S)} \qquad (\forall j\notin S).
\end{equation}

Additionally, by Lemma~\ref{lem:S6_shift_invariance}, for every \(c\in\mathbb{R}\),
\[
\mathcal{L}(z+c\mathbf{1},y_S)=\mathcal{L}(z,y_S).
\]
In our notation, this means \(\phi(z+c\mathbf{1})=\phi(z)\).
Treat this as a univariate function \(\psi(c):=\phi(z+c\mathbf{1})\). Since \(\psi\) is constant, \(\psi'(0)=0\).
By the chain rule,
\[
\psi'(0)=\langle \nabla \phi(z),\mathbf{1}\rangle=\mathbf{1}^\top \nabla \phi(z).
\]
Hence
\begin{equation}
\label{eq:q29}
\mathbf{1}^\top \nabla \phi(z)=0.
\end{equation}

Combining \eqref{eq:q27}, \eqref{eq:q28}, and \eqref{eq:q29}, we obtain
\begin{equation}
\label{eq:q30}
m q_{\mathrm{in}}^{(S)}+(K-m) q_{\mathrm{out}}^{(S)}=0.
\end{equation}

Therefore, by the permutation symmetry in \eqref{eq:q27}, \eqref{eq:q28}, and the orthogonality condition \eqref{eq:q29},
the gradient must lie in the one-dimensional subspace spanned by the label direction.
Hence there exists a scalar \(\alpha_m\) (depending only on \(m\) and \(c_{1,m}\), and not on \(S\)) such that
\begin{equation}
\label{eq:q31}
\nabla_z \mathcal{L}(z^{(S)},y_S)
=
\alpha_m\left(\mathbf{1}-\frac{K}{m}I_S\right).
\end{equation}

Substituting \eqref{eq:q31} into \eqref{eq:q32}, we obtain
\begin{equation}
\label{eq:q33}
h_{m,S}
=
-\frac{\alpha_m}{\lambda_H N}
W^\top\left(\mathbf{1}-\frac{K}{m}I_S\right).
\end{equation}

From Result \ref{state1} of Lemma~\ref{lem3}, we have \(W^\top \mathbf{1}=0\). Substituting this into \eqref{eq:q33} yields
\[
W^\top\left(\mathbf{1}-\frac{K}{m}I_S\right)
=
-\frac{K}{m}W^\top I_S
=
-\frac{K}{m}\sum_{k\in S} w_k.
\]
Therefore,
\[
h_{m,S}
=
\frac{\alpha_m K}{m\lambda_H N}\sum_{k\in S} w_k.
\]
This gives
\[
h_{m,S}
=
\widehat C_m \sum_{k\in S} w_k,
\qquad
\widehat C_m:=\frac{\alpha_m K}{m\lambda_H N}.
\]

Finally, by the definition \(w_k=\sqrt{r_{1,k}}\,u_k\), and taking \(C_m:=\widehat C_m\), we conclude that
\[
h_{m,S}
=
C_m\sum_{k\in S}\sqrt{r_{1,k}}\,u_k.
\]
This proves statement~\ref{state5}.

\end{proof}

\section{Technical Spectral Lemmas for the Spectral Route}\label{app:2}

This appendix collects the technical lemmas that power the spectral route used throughout the main proofs.
The core idea is to funnel all label-imbalance information into the centered label second-moment operator
and to quantify its effective curvature on the centered subspace via the spectral constant.


\begin{lemma}[Matrix-form spectral lower bound]
\label{lem:s3}
Let $B_m:=\Pi G_m\Pi$. Under Assumption~\ref{assump:nondeg-centered}, for any $Z\in\mathbb{R}^{K\times d}$,
\[
\mathrm{Tr}\!\left(Z^\top B_m Z\right)\;\ge\;\kappa_m\|\Pi Z\|_F^2.
\]
Equivalently, writing $Z=[z_1,\dots,z_d]$ with $z_t\in\mathbb{R}^K$,
\[
\sum_{t=1}^d z_t^\top B_m z_t \;\ge\;\kappa_m\sum_{t=1}^d \|\Pi z_t\|_2^2.
\]
\end{lemma}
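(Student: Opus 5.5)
The plan is to reduce the matrix statement to the vector Rayleigh-quotient bound, one column at a time. First observe that for $Z=[z_1,\dots,z_d]$ we have $\mathrm{Tr}(Z^\top B_m Z)=\sum_{t=1}^d z_t^\top B_m z_t$ and $\|\Pi Z\|_F^2=\sum_{t=1}^d\|\Pi z_t\|_2^2$. This is just the expansion of the trace along the diagonal of $Z^\top B_m Z$ and of the Frobenius norm column by column. The two displayed forms are therefore equivalent, and it suffices to prove the vector inequality $z^\top B_m z\ge \kappa_m\|\Pi z\|_2^2$ for every single $z\in\mathbb{R}^K$, then sum over $t$.

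For a fixed $z$, set $x:=\Pi z\in\mathrm{range}(\Pi)$. Since $\Pi$ is symmetric and idempotent,
\[
z^\top B_m z = z^\top \Pi G_m \Pi z = (\Pi z)^\top G_m (\Pi z) = x^\top \Pi G_m \Pi x ,
\]
where in the last step I reinsert $\Pi$ using $\Pi x=x$. Because $B_m=\Pi G_m\Pi$ is symmetric and maps $\mathrm{range}(\Pi)$ into itself, its restriction to $\mathrm{range}(\Pi)$ is a self-adjoint operator on that subspace. Hence the definition of $\kappa_m$ as $\lambda_{\min}$ of this restriction coincides with the Rayleigh quotient characterization in the Remark after Assumption~\ref{assump:nondeg-centered}: $x^\top B_m x\ge\kappa_m\|x\|_2^2$ for all $x\in\mathrm{range}(\Pi)$. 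This follows by the spectral theorem, expanding $x$ in an orthonormal eigenbasis of the restricted operator. Applying it to $x=\Pi z$ gives $z^\top B_m z\ge\kappa_m\|\Pi z\|_2^2$, including the trivial case $x=0$.

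There is no real obstacle. The only point requiring care is justifying that the restriction is well defined and self-adjoint, so that $\lambda_{\min}$ equals the minimal Rayleigh quotient over $\mathrm{range}(\Pi)$. Assumption~\ref{assump:nondeg-centered} is not even needed for the inequality itself, which holds for $\kappa_m\ge0$. It matters only later, where one divides by $\kappa_m$ in \eqref{eq:sepc-ub}.
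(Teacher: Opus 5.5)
Your proposal is correct and follows essentially the same route as the paper: reduce to columns, use $\Pi^2=\Pi$ to rewrite $z_t^\top B_m z_t=(\Pi z_t)^\top B_m(\Pi z_t)$, apply the Rayleigh-quotient bound on $\mathrm{range}(\Pi)$, and sum over $t$. Your two extra remarks go beyond what the paper states: that the restriction is self-adjoint, so $\lambda_{\min}$ equals the minimal Rayleigh quotient, and that the inequality itself does not need Assumption~\ref{assump:nondeg-centered}, since $B_m\succeq 0$ gives $\kappa_m\ge 0$; both are accurate and left implicit in the paper.
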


\begin{proof}
Fix any column $z_t$. Since $B_m=\Pi B_m\Pi$ and $\Pi^2=\Pi$, we have
\[
z_t^\top B_m z_t
= z_t^\top \Pi B_m \Pi z_t
= (\Pi z_t)^\top B_m (\Pi z_t).
\]
Moreover, $\Pi z_t\in \mathrm{range}(\Pi)$. Therefore, by Assumption~\ref{assump:nondeg-centered},
\[
(\Pi z_t)^\top B_m (\Pi z_t)\;\ge\;\kappa_m \|\Pi z_t\|_2^2.
\]
Summing over $t=1,\dots,d$ yields
\[
\mathrm{Tr}(Z^\top B_m Z)=\sum_{t=1}^d z_t^\top B_m z_t \ge \kappa_m\sum_{t=1}^d \|\Pi z_t\|_2^2
= \kappa_m\|\Pi Z\|_F^2,
\]
which completes the proof.
\end{proof}


\begin{lemma}[A trace inequality]\label{lem:trace-ineq}
Assume $B\in\mathbb{R}^{K\times K}$ is positive semidefinite, i.e.,
\begin{equation}\ 
B\succeq 0.
\end{equation}
Let $\Theta\in\mathbb{R}^{K\times d}$ be arbitrary. Then
\begin{equation} 
\operatorname{Tr}\!\bigl(\Theta^{\top}B\Theta\bigr)\ \le\ \operatorname{Tr}(B)\,\|\Theta\|_{F}^{2}.
\end{equation}
\end{lemma}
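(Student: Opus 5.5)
The plan is to reduce the inequality to a statement about the eigenvalues of $B$, exploiting the fact that the trace is additive over the columns of $\Theta$. Write $\Theta=[\theta_1,\dots,\theta_d]$ with $\theta_t\in\mathbb{R}^K$. Then
\[
\operatorname{Tr}(\Theta^\top B\Theta)=\sum_{t=1}^d \theta_t^\top B\theta_t
\qquad\text{and}\qquad
\|\Theta\|_F^2=\sum_{t=1}^d\|\theta_t\|_2^2 .
\]
It therefore suffices to prove the vector inequality $\theta^\top B\theta\le \operatorname{Tr}(B)\|\theta\|_2^2$ for every $\theta\in\mathbb{R}^K$, and then sum it over $t=1,\dots,d$.

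For the vector inequality, I will use the spectral theorem. The matrix $B$ is real symmetric (this is implicit in $B\succeq 0$, and it certainly holds for the intended choice $B=\Pi G_m\Pi$). So $B=\sum_{i=1}^K\mu_i v_iv_i^\top$ with an orthonormal basis $\{v_i\}$ and eigenvalues $\mu_i\ge 0$. The Rayleigh quotient bound gives
\[
\theta^\top B\theta\le \mu_{\max}\|\theta\|_2^2 .
\]
Because all the $\mu_i$ are nonnegative, we also have
\[
\mu_{\max}\le \sum_{i=1}^K\mu_i=\operatorname{Tr}(B).
\]
Chaining these two inequalities gives the vector bound.

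As an alternative route, one could write $\operatorname{Tr}(\Theta^\top B\Theta)=\operatorname{Tr}(B\,\Theta\Theta^\top)$ and apply the von Neumann trace inequality, or the bound $\operatorname{Tr}(PQ)\le\operatorname{Tr}(P)\,\lambda_{\max}(Q)\le \operatorname{Tr}(P)\operatorname{Tr}(Q)$ for PSD matrices $P,Q$. This needs the additional step $\lambda_{\max}(\Theta\Theta^\top)\le\|\Theta\|_F^2$, so I prefer the column-wise argument as more elementary.

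None of the steps is a real obstacle. The only point needing care is that nonnegativity of the eigenvalues is exactly where $B\succeq 0$ is used: without it, $\mu_{\max}\le\operatorname{Tr}(B)$ can fail. I will also make symmetry explicit as part of the PSD hypothesis.
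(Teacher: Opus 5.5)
Your proof is correct and is essentially the paper's argument. Both diagonalize $B$ and use only the nonnegativity of its eigenvalues: the paper rotates $\Theta$ into the eigenbasis and bounds $\sum_i\lambda_i\|\widehat\theta_i\|^2\le(\sum_i\lambda_i)\|\Theta\|_F^2$, while you split by columns and pass through the sharper intermediate bound $\operatorname{Tr}(\Theta^\top B\Theta)\le\lambda_{\max}(B)\|\Theta\|_F^2$, which is implicit in the paper's step as well.
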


\begin{proof}
Since $B\succeq 0$ and $B$ is symmetric, there exist an orthogonal matrix
$U\in\mathbb{R}^{K\times K}$ and a nonnegative diagonal matrix
$\Lambda=\operatorname{diag}(\lambda_1,\ldots,\lambda_K)$ with $\lambda_i\ge 0$ such that
\[
B=U\Lambda U^{\top}.
\]
Define
\[ 
\widehat{\Theta}:=U^{\top}\Theta\in\mathbb{R}^{K\times d}.
\]
Then
\[
\operatorname{Tr}(\Theta^{\top}B\Theta)
=
\operatorname{Tr}\!\bigl(\Theta^{\top}U\Lambda U^{\top}\Theta\bigr)
=
\operatorname{Tr}\!\bigl(\widehat{\Theta}^{\top}\Lambda\widehat{\Theta}\bigr).
\]
Write $\widehat{\Theta}$ by rows as
\[
\widehat{\Theta}=
\begin{bmatrix}
\widehat{\theta}_1^{\top}\\
\vdots\\
\widehat{\theta}_K^{\top}
\end{bmatrix},
\qquad
\widehat{\theta}_i\in\mathbb{R}^{d}.
\]
Then
\begin{equation}\label{eq:trace-ineq:S8}
\operatorname{Tr}\!\bigl(\widehat{\Theta}^{\top}\Lambda\widehat{\Theta}\bigr)
=
\sum_{i=1}^{K}\lambda_i\|\widehat{\theta}_i\|^{2}.
\end{equation}
On the other hand, since $U$ is orthogonal,
\begin{equation}\label{eq:trace-ineq:S9}
\|\Theta\|_{F}^{2}
=
\|U^{\top}\Theta\|_{F}^{2}
=
\|\widehat{\Theta}\|_{F}^{2}
=
\sum_{i=1}^{K}\|\widehat{\theta}_i\|^{2}.
\end{equation}
Moreover,
\begin{equation}\label{eq:trace-ineq:S10}
\operatorname{Tr}(B)
=
\operatorname{Tr}(U\Lambda U^{\top})
=
\operatorname{Tr}(\Lambda)
=
\sum_{i=1}^{K}\lambda_i.
\end{equation}
Since $\lambda_i\ge 0$, from \eqref{eq:trace-ineq:S8} we obtain
\[
\operatorname{Tr}(\Theta^{\top}B\Theta)
=
\sum_{i=1}^{K}\lambda_i\|\widehat{\theta}_i\|^{2}
\le
\left(\sum_{i=1}^{K}\lambda_i\right)\left(\sum_{i=1}^{K}\|\widehat{\theta}_i\|^{2}\right)
=
\operatorname{Tr}(B)\cdot\|\Theta\|_{F}^{2},
\]
where the last equality uses \eqref{eq:trace-ineq:S9}--\eqref{eq:trace-ineq:S10}.

\end{proof}


\begin{lemma}[$\Theta$--$H$ interface inequality]\label{lem:S5_theta_H_interface}
Fix
\(
m \in \{1,\dots,M\}.
\)
For any label subset
\(
S \subset [K]\quad \text{with}\quad |S|=m,
\)
let the corresponding sample count be
\(
r_{m,S}\ge 0.
\)
Define
\[
N_m := \sum_{|S|=m} r_{m,S},
\qquad
N_m^j := \sum_{|S|=m} r_{m,S}\, I_S(j)\quad (j\in[K]).
\]
Without loss of generality, we assume that for each multiplicity level \(m\), for each label \(j\), the number of times label \(j\) appears in the multiplicity-\(m\) group is strictly positive, i.e.,
    \[
    N_m^j > 0, \qquad \forall\, m \in \{1,\dots,M\}, \quad \forall\, j \in \{1,\dots,K\}.
    \]

For each sample $(m,S,i)$, assign a feature vector
\(
h_{m,S,i}\in\mathbb{R}^d
\qquad (i=1,\dots,r_{m,S}).
\)
Define the global mean and the label-weighted means by
\[
\bar h_m := \frac{1}{N_m}\sum_{|S|=m}\sum_{i=1}^{r_{m,S}} h_{m,S,i},
\qquad
\bar h_m^{\,j} := \frac{1}{N_m}\sum_{|S|=m}\sum_{i=1}^{r_{m,S}} I_S(j)\,h_{m,S,i}.
\]
Define the matrix $\Theta_m\in\mathbb{R}^{K\times d}$ row-wise by
\[
\Theta_m(j,:) := \Bigl(\bar h_m-\frac{K}{m}\bar h_m^{\,j}\Bigr)^{\top},
\qquad j=1,\dots,K.
\]
Then we have the Frobenius-norm bound
\[
\|\Theta_m\|_F^{2}
\ \le\
\Biggl[
\frac{2K}{N_m}
+\frac{2K^2}{m^2}\Biggl(
\max_{|S|=m}\sum_{j=1}^{K}\frac{I_S(j)}{N_m^{j}}
\Biggr)
\Biggr]
\sum_{|S|=m}\sum_{i=1}^{r_{m,S}}\|h_{m,S,i}\|^{2}.
\]
\end{lemma}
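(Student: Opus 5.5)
The bound is a pure counting/Cauchy--Schwarz estimate that does not use any optimality, so I will work directly from the definitions. Fix $m$ and write $E_m:=\sum_{|S|=m}\sum_{i}\|h_{m,S,i}\|^2$. For each row I split
\[
\Theta_m(j,:)^\top=\bar h_m-\tfrac{K}{m}\bar h_m^{\,j},
\qquad
\|\Theta_m(j,:)\|^2\le 2\|\bar h_m\|^2+\tfrac{2K^2}{m^2}\|\bar h_m^{\,j}\|^2,
\]
using $\|a-b\|^2\le 2\|a\|^2+2\|b\|^2$. Summing over $j\in[K]$ gives
\[
\|\Theta_m\|_F^2\le 2K\|\bar h_m\|^2+\tfrac{2K^2}{m^2}\sum_{j=1}^K\|\bar h_m^{\,j}\|^2,
\]
so the two terms of the bracket come from these two pieces separately.

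For the first piece I apply Jensen (Cauchy--Schwarz) to the average of $N_m$ vectors: $\|\bar h_m\|^2\le \frac{1}{N_m}E_m$. This yields $\frac{2K}{N_m}E_m$ exactly.

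For the second piece, note that $\bar h_m^{\,j}=\frac{1}{N_m}\sum_{S\ni j}\sum_i h_{m,S,i}$ is a sum over the $N_m^j$ samples containing label $j$. Cauchy--Schwarz gives
\[
\|\bar h_m^{\,j}\|^2\le \frac{N_m^j}{N_m^2}\sum_{S}\sum_i I_S(j)\|h_{m,S,i}\|^2 .
\]
This is the delicate step. Naively summing over $j$ produces $\frac{1}{N_m^2}\sum_{S,i}\bigl(\sum_j I_S(j)N_m^j\bigr)\|h_{m,S,i}\|^2$, which carries $N_m^j$ in the numerator rather than $1/N_m^j$. To land on the stated worst-set factor $\max_S\sum_j I_S(j)/N_m^j$, I must instead use $N_m^j\le N_m$. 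That step gives $\|\bar h_m^{\,j}\|^2\le \frac{1}{N_m}\cdot\frac{N_m^j}{N_m}\sum_{S\ni j,i}\|h\|^2$. The remaining task is to compare $\frac{N_m^j}{N_m^2}$ with $\frac{1}{N_m^j}$. Since $N_m^j\le N_m$, we have $\frac{N_m^j}{N_m^2}\le\frac{1}{N_m}\le \frac{1}{N_m^j}$.

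With this comparison, the second piece is bounded by
\[
\sum_{S,i}\Bigl(\sum_j\frac{I_S(j)}{N_m^j}\Bigr)\|h_{m,S,i}\|^2\le \Bigl(\max_{|S|=m}\sum_j\frac{I_S(j)}{N_m^j}\Bigr)E_m .
\]
The main obstacle is precisely this per-label comparison: choosing the weighting in Cauchy--Schwarz so the rarity term appears as $1/N_m^j$. If a sharper form is wanted, I would try weighted Cauchy--Schwarz with weights $1/N_m^j$. However, the crude chain $N_m^j/N_m^2\le 1/N_m^j$ already suffices for the stated inequality.

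Multiplying by $\frac{2K^2}{m^2}$ and adding the first piece gives the claimed bound. The assumption $N_m^j>0$ is used only to make the $1/N_m^j$ terms finite.
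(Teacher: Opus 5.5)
Your proposal is correct and follows the paper's proof essentially step for step. The steps are the row-wise split via $\|a-b\|^2\le 2\|a\|^2+2\|b\|^2$, Jensen on $\bar h_m$, Jensen on $\bar h_m^{\,j}$ giving the factor $N_m^j/N_m^2$, the comparison $N_m^j/N_m^2\le 1/N_m^j$ from $0<N_m^j\le N_m$, and finally exchanging the sums and bounding by the worst-set maximum. (As an aside, the route you called naive is not actually an obstacle: using $N_m^j\le N_m$ there gives the factor $m/N_m$, which is already at most $\max_{|S|=m}\sum_{j}I_S(j)/N_m^{j}$.)
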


\begin{proof}

Using
\begin{equation}
    \label{eq:S5_ab_ineq}
\|a-b\|^2\le 2\|a\|^2+2\|b\|^2,
\end{equation}
by the definition of $\Theta_m$ ,
\(\
\|\Theta_m\|_F^2
=
\sum_{j=1}^{K}\left\|\bar h_m-\frac{K}{m}\bar h_m^{\,j}\right\|^2.
\)
Applying \eqref{eq:S5_ab_ineq} to each term yields
\begin{equation}\label{eq:S5_step1_bound}
\|\Theta_m\|_F^2
\le
2K\|\bar h_m\|^2+\frac{2K^2}{m^2}\sum_{j=1}^{K}\|\bar h_m^{\,j}\|^2.
\end{equation}

Since $\bar h_m$ is the average of $N_m$ vectors,
Jensen's inequality gives
\begin{equation}\label{eq:S5_jensen_global}
\|\bar h_m\|^2
=
\left\|
\frac{1}{N_m}\sum_{|S|=m}\sum_{i=1}^{r_{m,S}} h_{m,S,i}
\right\|^2
\le
\frac{1}{N_m}\sum_{|S|=m}\sum_{i=1}^{r_{m,S}}\|h_{m,S,i}\|^2.
\end{equation}

For each \(j\in[K]\), by the definition of \(\bar h_m^{\,j}\),
\[
\bar h_m^{\,j}
=
\frac{1}{N_m}
\sum_{|S|=m}\sum_{i=1}^{r_{m,S}} I_S(j)\,h_{m,S,i}.
\]
Hence,
\[
\|\bar h_m^{\,j}\|^2
=
\left\|
\frac{1}{N_m}
\sum_{|S|=m}\sum_{i=1}^{r_{m,S}} I_S(j)\,h_{m,S,i}
\right\|^2.
\]
The sum above contains \(N_m^j\) nonzero terms counted with multiplicity. By Jensen's inequality,
\[
\left\|
\frac{1}{N_m^j}
\sum_{|S|=m}\sum_{i=1}^{r_{m,S}} I_S(j)\,h_{m,S,i}
\right\|^2
\le
\frac{1}{N_m^j}
\sum_{|S|=m}\sum_{i=1}^{r_{m,S}} I_S(j)\,\|h_{m,S,i}\|^2.
\]
Since
\[
\bar h_m^{\,j}
=
\frac{N_m^j}{N_m}
\left(
\frac{1}{N_m^j}
\sum_{|S|=m}\sum_{i=1}^{r_{m,S}} I_S(j)\,h_{m,S,i}
\right),
\]
we obtain
\[
\|\bar h_m^{\,j}\|^2
\le
\frac{(N_m^j)^2}{N_m^2}
\cdot
\frac{1}{N_m^j}
\sum_{|S|=m}\sum_{i=1}^{r_{m,S}} I_S(j)\,\|h_{m,S,i}\|^2
=
\frac{N_m^j}{N_m^2}
\sum_{|S|=m}\sum_{i=1}^{r_{m,S}} I_S(j)\,\|h_{m,S,i}\|^2.
\]
Because \(0<N_m^j\le N_m\), we have
\[
\frac{N_m^j}{N_m^2}\le \frac{1}{N_m^j}.
\]
Therefore,
\[
\|\bar h_m^{\,j}\|^2
\le
\frac{1}{N_m^j}
\sum_{|S|=m}\sum_{i=1}^{r_{m,S}} I_S(j)\,\|h_{m,S,i}\|^2.
\]
Summing over \(j\) and exchanging the order of summation, we get
\[
\sum_{j=1}^{K}\|\bar h_m^{\,j}\|^2
\le
\sum_{|S|=m}\sum_{i=1}^{r_{m,S}}\|h_{m,S,i}\|^2
\left(
\sum_{j=1}^{K}\frac{I_S(j)}{N_m^{j}}
\right).
\]
Using
\[
\sum_{j=1}^{K}\frac{I_S(j)}{N_m^{j}}
\le
\max_{|S|=m}\sum_{j=1}^{K}\frac{I_S(j)}{N_m^{j}},
\]
we obtain
\begin{equation}\label{eq:S5_sum_conditional_bound_max}
\sum_{j=1}^{K}\|\bar h_m^{\,j}\|^2
\le
\left(
\max_{|S|=m}\sum_{j=1}^{K}\frac{I_S(j)}{N_m^{j}}
\right)
\sum_{|S|=m}\sum_{i=1}^{r_{m,S}}\|h_{m,S,i}\|^2.
\end{equation}

Finally, substituting \eqref{eq:S5_jensen_global} and \eqref{eq:S5_sum_conditional_bound_max} into
\eqref{eq:S5_step1_bound} yields 
\[
\|\Theta_m\|_F^{2}
\ \le\
\Biggl[
\frac{2K}{N_m}
+\frac{2K^2}{m^2}\Biggl(
\max_{|S|=m}\sum_{j=1}^{K}\frac{I_S(j)}{N_m^{j}}
\Biggr)
\Biggr]
\sum_{|S|=m}\sum_{i=1}^{r_{m,S}}\|h_{m,S,i}\|^{2}.
\]
\end{proof}


\begin{remark}\label{rem:S5_beta_pi}
Consider the quantity
\begin{equation}\label{eq:S5_quantity_interest}
\max_{|S|=m}\sum_{j=1}^{K}\frac{I_S(j)}{N_m^{j}}.
\end{equation}
It is purely determined by the data statistics $\{r_{m,S}\}$ (equivalently, by $N_m$, $N_m^j$) and by $m,K$,
and does not depend on the model parameters (such as $W,H$) or the training procedure.

For convenience, define
\[
\beta_m:=\max_{|S|=m}\sum_{j=1}^{K}\frac{I_S(j)}{N_m^{j}}, 
\qquad
\bar\pi_m^{\,j}:=\frac{N_m^{j}}{N_m},
\qquad
\pi_{m,\min}:=\min_{j}\bar\pi_m^{\,j}.
\]

Then
\[
\sum_{j\in S}\frac{1}{N_m^{j}}
=
\frac{1}{N_m}\sum_{j\in S}\frac{1}{\bar\pi_m^{\,j}}
\le
\frac{1}{N_m}\cdot \frac{m}{\pi_{m,\min}}
=
\frac{m}{N_m\pi_{m,\min}},
\]
and hence
\(
\beta_m\le \frac{m}{N_m\pi_{m,\min}}.
\)
Accordingly, Lemma~\ref{lem:S5_theta_H_interface} implies the looser form
\[
\|\Theta_m\|_F^{2}
\le
\frac{2K}{N_m}\left(1+\frac{K}{m\pi_{m,\min}}\right)
\sum_{|S|=m}\sum_{i=1}^{r_{m,S}}\|h_{m,S,i}\|^{2}.
\]

That is,
\(\pi_{m,\min}
\)
is the minimum label probability. We can observe that the rarest label determines
the worst-case behavior; also, if only a very small number of labels are sparse,
then this bound can be quite loose.

\end{remark}


\begin{lemma}[Shift invariance and projection invariance]\label{lem:S6_shift_invariance}

Let
\(
\Pi \;=\; I_K \;-\; \frac{1}{K}\mathbf{1}\mathbf{1}^{\top}.
\)
For any $z\in\mathbb{R}^K$, define
\(
\mathrm{softmax}(z)_j \;:=\; \frac{e^{z_j}}{\sum_{c=1}^K e^{z_c}},
\ j\in[K]. 
\)
Define the cross-entropy loss
\(
\mathcal{L}_{\mathrm{CE}}(z,y_k)\;:=\;-\log\!\big(\mathrm{softmax}(z)_k\big), 
  k\in[K],
\)
and the pick-all-labels loss
\(
\mathcal{L}_{\mathrm{PAL}}(z,y_S)\;=\;\sum_{k\in S}\mathcal{L}_{\mathrm{CE}}(z,y_k), 
  S\subseteq[K].
\)
Given feature vectors $\{h_{m,S,i}\}\subset\mathbb{R}^d$, define the empirical risk
\[
g(WH)\;=\;\frac{1}{N}\sum_{m=1}^{M}\;\sum_{\substack{S\subseteq [K]\\ |S|=m}}\;\sum_{i=1}^{r_{m,S}}
\mathcal{L}_{\mathrm{PAL}}\big(Wh_{m,S,i},\,y_S\big).
\]
Then for any $W\in\mathbb{R}^{K\times d}$ and any feature collection $H$,
\[
g(WH)\;=\;g\big((\Pi W)H\big).
\]
\end{lemma}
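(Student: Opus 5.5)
The plan is to prove the statement pointwise in the logits and then lift it to the empirical risk by linearity. The key observation is that $(\Pi W)h = Wh - \frac{1}{K}\mathbf{1}\mathbf{1}^\top W h = Wh - c(h)\mathbf{1}$ with the scalar $c(h):=\frac{1}{K}\mathbf{1}^\top W h$. So replacing $W$ by $\Pi W$ only shifts every logit vector $z_{m,S,i}=Wh_{m,S,i}$ by a multiple of the all-ones vector. The sample-dependent amount is $c(h_{m,S,i})$. Hence it suffices to show that $\mathcal{L}_{\mathrm{PAL}}(z+c\mathbf{1},y_S)=\mathcal{L}_{\mathrm{PAL}}(z,y_S)$ for every $z\in\mathbb{R}^K$, every $c\in\mathbb{R}$ and every $S\subseteq[K]$.

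First I will establish shift invariance of the softmax. For any $j$,
\[
\mathrm{softmax}(z+c\mathbf{1})_j=\frac{e^{c}e^{z_j}}{e^{c}\sum_{\ell=1}^K e^{z_\ell}}=\mathrm{softmax}(z)_j .
\]
Consequently $\mathcal{L}_{\mathrm{CE}}(z+c\mathbf{1},y_k)=\mathcal{L}_{\mathrm{CE}}(z,y_k)$ for each $k\in[K]$. Summing over $k\in S$ gives the corresponding invariance of $\mathcal{L}_{\mathrm{PAL}}(\cdot,y_S)$, since PAL is by definition a finite sum of single-label CE terms sharing the same logit vector.

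Next I will apply this with $z=Wh_{m,S,i}$ and $c=-c(h_{m,S,i})$ for each sample $(m,S,i)$. This yields $\mathcal{L}_{\mathrm{PAL}}((\Pi W)h_{m,S,i},y_S)=\mathcal{L}_{\mathrm{PAL}}(Wh_{m,S,i},y_S)$. Averaging over all samples with the weight $1/N$ gives $g((\Pi W)H)=g(WH)$.

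There is no genuine obstacle. The only point needing care is that the shift constant $c$ depends on the sample through $h_{m,S,i}$ and is not a single global constant. The pointwise invariance must therefore hold for arbitrary $c$, which it does.
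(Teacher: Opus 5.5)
Your proposal is correct and matches the paper's proof essentially step for step. Both show that the softmax is invariant under $z\mapsto z+c\mathbf{1}$, deduce the same invariance for CE and hence for PAL, write $(\Pi W)h=\Pi(Wh)=z-\bar z\,\mathbf{1}$ with a sample-dependent scalar $\bar z=\frac{1}{K}\mathbf{1}^\top z$, and then sum the resulting termwise equalities over all samples.
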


\begin{proof}

For any $z\in\mathbb{R}^K$, any $c\in\mathbb{R}$, and any $j\in[K]$,
\[
\mathrm{softmax}(z+c\mathbf{1})_j
=
\frac{\exp(z_j+c)}{\sum_{c'=1}^K \exp(z_{c'}+c)}
=
\frac{\exp(c)\exp(z_j)}{\exp(c)\sum_{c'=1}^K\exp(z_{c'})}
=
\mathrm{softmax}(z)_j.
\]
Hence
\begin{equation}\label{eq:S6_softmax_shift_vec}
\mathrm{softmax}(z+c\mathbf{1})=\mathrm{softmax}(z).
\end{equation}

By \eqref{eq:S6_softmax_shift_vec}, for any $k\in[K]$,
\begin{equation}
    \label{eq:S6_shift}
\mathcal{L}_{\mathrm{CE}}(z+c\mathbf{1},y_k)=\mathcal{L}_{\mathrm{CE}}(z,y_k).
\end{equation}
Consequently, for any $S\subseteq[K]$,
\[
\mathcal{L}_{\mathrm{PAL}}(z+c\mathbf{1},y_S)=\mathcal{L}_{\mathrm{PAL}}(z,y_S).
\]

By the definition of $\Pi$, $(\Pi W)h$ differs from $Wh$ by a global shift. For any $h\in\mathbb{R}^d$, let
\(
z:=Wh, \bar z:=\frac{1}{K}\mathbf{1}^{\top}z.
\)
Then
\[
(\Pi W)h=\Pi(Wh)=\Pi z
=z-\frac{1}{K}\mathbf{1}\mathbf{1}^{\top}z
=z-\bar z\,\mathbf{1}.
\]

For each sample $(m,S,i)$, set
\(
z_{m,S,i}:=Wh_{m,S,i}.
\),
 we have
\(
(\Pi W)h_{m,S,i}=z_{m,S,i}-\bar z_{m,S,i}\mathbf{1}.
\)
Using the shift invariance \eqref{eq:S6_shift} gives
\begin{equation}\label{eq:S6_termwise_equal}
\mathcal{L}_{\mathrm{PAL}}(Wh_{m,S,i},y_S)
=\mathcal{L}_{\mathrm{PAL}}\big((\Pi W)h_{m,S,i},y_S\big).
\end{equation}
Summing \eqref{eq:S6_termwise_equal} over all $(m,S,i)$ and dividing by $N$ yields
\[
g(WH)\;=\;g\big((\Pi W)H\big).
\].
\end{proof}


\section{Standard Tool Lemmas from Prior Work}\label{app:technical-spectral}

The following two lemmas are standard tools in unconstrained-feature analyses of regularized objectives and have appeared
in closely related forms in prior work. They rely only on the chain rule and the softmax cross-entropy (PAL) structure, and
thus remain valid under our general imbalanced setting. In particular, neither lemma uses any balance assumption on the
counts $\{r_{m,S}\}$ or $\{N_m^k\}$, so they can be applied verbatim in the proofs of Theorems~\ref{thm:lower-bound-lemma2-form}
and~\ref{thm:structure-geometry}.

\begin{lemma}[Critical-point scaling identity from Lemma B.2 of \citet{Zhu2021}]\label{lem:a1}
For the regularized objective
\[
f(W,H)=g(WH)+\frac{\lambda_W}{2}\|W\|_F^2+\frac{\lambda_H}{2}\|H\|_F^2,
\]
any critical point $(W,H)$ satisfies
\[
W^\top W=\frac{\lambda_H}{\lambda_W}HH^\top,
\qquad
\|W\|_F^2=\frac{\lambda_H}{\lambda_W}\|H\|_F^2.
\]
\end{lemma}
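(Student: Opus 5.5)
We prove Lemma~\ref{lem:a1} by writing down the first-order stationarity conditions for $f$ with respect to $W$ and $H$ separately and then combining them. Set $Z:=WH\in\mathbb{R}^{K\times N}$, with the columns of $H\in\mathbb{R}^{d\times N}$ being the feature vectors $h_{m,S,i}$ stacked in a fixed order. Let $\nabla g(Z)\in\mathbb{R}^{K\times N}$ be the gradient of $g$ with respect to its matrix argument; it exists because the PAL loss is a finite sum of smooth softmax cross-entropy terms. The chain rule gives
\[
\nabla_W f = \nabla g(Z)\,H^\top + \lambda_W W,
\qquad
\nabla_H f = W^\top \nabla g(Z) + \lambda_H H .
\]
At a critical point both vanish:
\[
\lambda_W W = -\nabla g(Z)H^\top,
\qquad
\lambda_H H = -W^\top \nabla g(Z).
\]

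Next we eliminate $\nabla g(Z)$ by forming two Gram-type products. Left-multiply the first identity by $W^\top$:
\[
\lambda_W W^\top W = -W^\top\nabla g(Z) H^\top .
\]
Right-multiply the second by $H^\top$:
\[
\lambda_H HH^\top = -W^\top\nabla g(Z)H^\top .
\]
The right-hand sides are the same $d\times d$ matrix. Hence $\lambda_W W^\top W=\lambda_H HH^\top$, i.e.\ $W^\top W=\frac{\lambda_H}{\lambda_W}HH^\top$, using $\lambda_W>0$.

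Finally, take traces. Since $\operatorname{Tr}(W^\top W)=\|W\|_F^2$ and $\operatorname{Tr}(HH^\top)=\|H\|_F^2$, we obtain $\|W\|_F^2=\frac{\lambda_H}{\lambda_W}\|H\|_F^2$, equivalently $\|H\|_F^2=\frac{\lambda_W}{\lambda_H}\|W\|_F^2$. This is the form used in the proof of Lemma~\ref{lem:lemma2_lb}.

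The argument is almost entirely bookkeeping. The only point needing care is making the matrix conventions consistent. The feature matrix must be arranged as $d\times N$ columns, so that $WH$ is $K\times N$ and $HH^\top$ is $d\times d$, matching $W^\top W$. The normalization $1/N$ inside $g$ is simply absorbed into $\nabla g(Z)$. No balance assumption on the counts $r_{m,S}$ enters, so the identity holds verbatim in the imbalanced setting.
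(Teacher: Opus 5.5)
Your proof is correct and follows the same route as the standard argument behind Lemma B.2 of \citet{Zhu2021}, which the paper cites without reproving. That argument writes the two stationarity conditions $\nabla g(WH)H^\top+\lambda_W W=0$ and $W^\top\nabla g(WH)+\lambda_H H=0$, left-multiplies the first by $W^\top$ and right-multiplies the second by $H^\top$ to get $\lambda_W W^\top W=\lambda_H HH^\top$, and then takes traces; your further observations that no balance assumption is needed and that the missing bias term only simplifies the computation match how the paper applies the lemma.
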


\begin{lemma} [From Lemma C.8 of \citet{li2023neural}]\label{lem:your-lemma}
    Let \( S \subseteq \{1, \ldots, K\} \) be a subset of size \( m \) where \( 1 \leq m < K \). Then for all \( \bm{z} = (z_1, \ldots, z_K)^T \in \mathbb{R}^K \) and all \( c_{1,m} > 0 \), there exists a constant \( c_{2,m} \) such that
    \begin{equation}
        \mathcal{L}_{\mathrm{PAL}}(\bm{z}, \bm{y}_S) \geq \frac{1}{1 + c_{1,m}} \cdot \frac{m}{K - m} \cdot  \langle \mathbf{1} - \frac{K}{m}  \mathbb{I}_S, \bm{z} \rangle  + c_{2,m}.
        \label{eq:1}
    \end{equation}
    In fact, we have
    \begin{equation}
        c_{2,m} := \frac{c_{1,m} m}{c_{1,m} + 1} \log(m) + \frac{m c_{1,m}}{1 + c_{1,m}} \log \left( \frac{c_{1,m} + 1}{c_{1,m}} \right) + \frac{m}{c_{1,m} + 1} \log \left( (K - m)(c_{1,m} + 1) \right).
        \label{eq:2}
    \end{equation}
    The inequality \eqref{eq:1} is tight, i.e., achieves equality, if and only if \( \bm{z} \) satisfies all of the following:
    \begin{enumerate}
        \item For all \( i, j \in S \), we have \( z_i = z_j \) (in-group equality). Let \( z_{\mathrm{in}} \in \mathbb{R} \) denote this constant.
        \item For all \( i, j \in S^c \), we have \( z_i = z_j \) (out-group equality). Let \( z_{\mathrm{out}} \in \mathbb{R} \) denote this constant.
        \item \( z_{\mathrm{in}} - z_{\mathrm{out}} = \log \left( \frac{(K-m)}{m} c_{1,m} \right) \).
    \end{enumerate}
\end{lemma}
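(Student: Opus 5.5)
The plan is to reduce the $K$-dimensional inequality to a one-dimensional convexity statement. First rewrite the loss via its definition in Lemma~\ref{lem:S6_shift_invariance}: with $\mathrm{LSE}(\bm z):=\log\sum_{j=1}^K e^{z_j}$,
\[
\mathcal L_{\mathrm{PAL}}(\bm z,\bm y_S)=m\,\mathrm{LSE}(\bm z)-\sum_{k\in S}z_k .
\]
Next introduce the in-group mean $a:=\frac1m\sum_{i\in S}z_i$ and the out-group mean $b:=\frac1{K-m}\sum_{j\notin S}z_j$. The right-hand side depends on $\bm z$ only through these. Indeed,
\[
\Bigl\langle \mathbf 1-\tfrac Km\mathbb I_S,\bm z\Bigr\rangle=ma+(K-m)b-Ka=(K-m)(b-a).
\]
So, writing $t:=b-a$, the linear term of \eqref{eq:1} equals $\frac{m}{1+c_{1,m}}\,t$.

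For the loss itself, apply Jensen's inequality (convexity of $\exp$) separately on $S$ and on $S^c$:
\[
\sum_{i\in S}e^{z_i}\ge m e^{a},\qquad \sum_{j\notin S}e^{z_j}\ge (K-m)e^{b}.
\]
Each is an equality iff the coordinates in that group are all equal, by strict convexity. Since $\log$ is increasing, this gives
\[
\mathcal L_{\mathrm{PAL}}\ge m\log\bigl(me^{a}+(K-m)e^{b}\bigr)-ma=\varphi(t),\qquad \varphi(t):=m\log\bigl(m+(K-m)e^{t}\bigr).
\]
Equality holds exactly under conditions 1 and 2 of the lemma.

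It then remains to show $\varphi(t)\ge \frac{m}{1+c_{1,m}}t+c_{2,m}$ for all $t$, with equality at a single point. The function $\varphi$ is strictly convex, since it is a softplus-type function. Hence it lies above its tangent line at any $t_0$, with equality only at $t=t_0$. Choose $t_0$ so that the slope matches:
\[
\varphi'(t_0)=\frac{m(K-m)e^{t_0}}{m+(K-m)e^{t_0}}=\frac{m}{1+c_{1,m}}.
\]
This solves to $e^{t_0}=\frac{m}{(K-m)c_{1,m}}$, i.e.\ $z_{\mathrm{in}}-z_{\mathrm{out}}=-t_0=\log\bigl(\frac{K-m}{m}c_{1,m}\bigr)$, which is condition 3. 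The intercept is then forced to be
\[
c_{2,m}=\varphi(t_0)-\frac{m}{1+c_{1,m}}t_0 .
\]
The tightness characterization follows by combining the equality cases of the two steps: Jensen on both groups, and the tangent-line bound at $t_0$.

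The only step needing real care is checking that this intercept equals the closed form \eqref{eq:2}. Substituting $e^{t_0}$ gives
\[
\varphi(t_0)=m\log\Bigl(m\,\tfrac{1+c_{1,m}}{c_{1,m}}\Bigr),\qquad t_0=\log m-\log\bigl((K-m)c_{1,m}\bigr).
\]
The remaining work is to expand these logarithms and regroup the $\log m$, $\log\frac{c_{1,m}+1}{c_{1,m}}$ and $\log\bigl((K-m)(c_{1,m}+1)\bigr)$ terms. I expect this bookkeeping to be where the difficulty lies, together with the sign convention $t=b-a$ versus $a-b$. Conceptually, everything else reduces to two applications of strict convexity.
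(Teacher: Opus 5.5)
Your proof is correct and complete, and the deferred algebra closes. The paper gives no proof of this lemma; it imports it verbatim from Li et al. So the useful comparison is with the one-step argument that the grouping of the closed form for $c_{2,m}$ reflects.

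That argument, with $c=c_{1,m}$, applies weighted concavity of $\log$ once, over all $K$ coordinates:
\[
\sum_{j=1}^K e^{z_j}=\sum_{i\in S}\frac{c}{(1+c)m}\cdot\frac{(1+c)m}{c}\,e^{z_i}+\sum_{j\notin S}\frac{1}{(1+c)(K-m)}\cdot(1+c)(K-m)\,e^{z_j}.
\]
The weights sum to $1$. Apply Jensen to $\log$, multiply by $m$, and subtract $\sum_{k\in S}z_k$. The linear term and the three pieces $\frac{mc}{1+c}\log m$, $\frac{mc}{1+c}\log\frac{1+c}{c}$, $\frac{m}{1+c}\log\bigl((K-m)(1+c)\bigr)$ then appear directly. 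Equality holds iff all $K$ weighted arguments coincide, which gives conditions 1--3 at once.

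You instead split this into an unweighted AM--GM inside $S$ and inside $S^c$, followed by the supporting-line inequality for the scalar convex function $\varphi$. This is the same inequality reorganized. Indeed, the tangent to $\varphi(t)=m\log\bigl(m+(K-m)e^t\bigr)$ with slope $m/(1+c)$ is exactly the two-point weighted concavity of $\log$ applied to $\frac{c}{1+c}\cdot\frac{(1+c)m}{c}+\frac{1}{1+c}\cdot(1+c)(K-m)e^t$.

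What your route buys is transparency: $c_{1,m}$ only parametrizes the slope of an affine minorant, and $c_{2,m}$ is the best intercept for that slope. What it costs is the regrouping you deferred. That regrouping works as follows:
\begin{itemize}
\item $m\log m-\frac{m}{1+c}\log m=\frac{mc}{1+c}\log m$.
\item Split $m\log\frac{1+c}{c}=\frac{mc}{1+c}\log\frac{1+c}{c}+\frac{m}{1+c}\log\frac{1+c}{c}$.
\item Merge the second piece with the term $+\frac{m}{1+c}\log\bigl((K-m)c\bigr)$ coming from $-\frac{m}{1+c}t_0$. This gives $\frac{m}{1+c}\log\bigl((K-m)(1+c)\bigr)$.
\end{itemize}
Hence $\varphi(t_0)-\frac{m}{1+c}t_0$ equals the stated $c_{2,m}$ exactly.

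Your sign convention $t=b-a$ is also consistent. The slope is positive, $e^{t_0}=\frac{m}{(K-m)c}$, and $z_{\mathrm{in}}-z_{\mathrm{out}}=-t_0$.
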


\end{document}